\newcommand{\mymacro}[1]{{#1}}
\newcommand{\defn}[1]{\textbf{#1}}
\newcommand{\paroutline}[3][false]{%
    \ifnum\pdfstrcmp{#1}{true}=0
        #3%
    \else
        [\textit{\textcolor{DiverseMagenta}{#2}}] \textcolor{AccentBlue}{#3}%
    \fi
}
\newcommand{\ind}[1]{\mathbbm{1} \left\{ #1 \right\}}
\newcommand{\B}{{\mymacro{ \mathbb{B}}}}
\newcommand{\equivClass}[1]{{\mymacro{ \left[#1\right]}}}
\newcommand{\alphabet}{{\mymacro{ \Sigma}}}
\newcommand{\eosalphabet}{{\mymacro{ \overline{\alphabet}}}}
\newcommand{\kleene}[1]{{\mymacro{#1^*}}}
\newcommand{\str}{{\mymacro{\textbf{\texttt{w}}}}}
\newcommand{\eosstr}{{\mymacro{\overline{\str}}}}
\newcommand{\strs}{{\mymacro{\textbf{\texttt{s}}}}}
\newcommand{\strt}{{\mymacro{\textbf{\texttt{t}}}}}
\newcommand{\stru}{{\mymacro{\textbf{\texttt{u}}}}}
\newcommand{\strv}{{\mymacro{\textbf{\texttt{v}}}}}
\newcommand{\defeq}{\mathrel{\stackrel{\textnormal{\tiny def}}{=}}}
\newcommand{\setcomplement}[1]{{\mymacro{#1^{\textsf{c}}}}}
\newcommand{\eos}{{\mymacro{\textsc{eos}}}}
\newcommand{\unk}{{\mymacro{\textsc{unk}}}}
\newcommand{\zero}{{\mymacro{\mathbf{0}}}}
\newcommand{\automaton}{{\mymacro{ \mathcal{A}}}}
\newcommand{\stateq}{{\mymacro{ q}}}
\newcommand{\states}{{\mymacro{ Q}}}
\newcommand{\trans}{{\mymacro{ \delta}}}
\newcommand{\initial}{{\mymacro{ I}}}
\newcommand{\final}{{\mymacro{ F}}}
\newcommand{\qinit}{{\mymacro{ q_{\initial}}}}
\newcommand{\dfatuple}{{\mymacro{ \left( \alphabet, \states, \qinit, \final, \trans \right)}}}
\newcommand{\negterm}[1]{{\mymacro{ {\raise.17ex\hbox{$\scriptstyle\sim$}} #1}}}
\newcommand{\ignore}[1]{}
\newcommand{\expandLater}[1]{}
\def\bw{{{\mymacro{ \boldsymbol{\theta}}}}}
\def\1{\mathbf{1}}
\def\rmH{{{\mymacro{ \mathbf{H}}}}}
\def\rmK{{{\mymacro{ \mathbf{K}}}}}
\def\rmQ{{{\mymacro{ \mathbf{Q}}}}}
\def\rmV{{{\mymacro{ \mathbf{V}}}}}
\def\ve{{{\mymacro{ \mathbf{e}}}}}
\def\vp{{{\mymacro{ \mathbf{p}}}}}
\def\vx{{{\mymacro{ \mathbf{x}}}}}
\def\vy{{{\mymacro{ \mathbf{y}}}}}
\def\gL{{{\mymacro{ \mathcal{L}}}}}
\def\gR{{{\mymacro{ \mathcal{R}}}}}
\newcommand{\softmax}{{\mymacro{ \mathrm{softmax}}}}
\newcommand{\ReLU}{{\mymacro{ \mathrm{ReLU}}}}
\newcommand{\bb}[1][]{\ifthenelse{\isempty{#1}}{\mymacro{\mathbf{b}}}{\mymacro{\mathbf{b}^{\text{#1}}}}}
\newcommand{\ff}[1][]{\ifthenelse{\isempty{#1}}{\mymacro{f}}{\mymacro{f_{\text{#1}}}}}
\newcommand{\W}[1][]{\ifthenelse{\isempty{#1}}{\mymacro{\boldsymbol{\Theta}}}{\mymacro{\boldsymbol{\Theta}^{\text{#1}}}}}
\newcommand{\aalpha}{\mymacro{\boldsymbol{\alpha}}}
\newcommand{\stacktop}[1][]{\ifthenelse{\isempty{#1}}{\mymacro{\gamma^{\text{top}}}}{\mymacro{\gamma^{\text{top}}_{#1}}}}
\newcommand{\sym}{\mymacro{\texttt{w}}}
\newcommand{\syma}{\mymacro{\texttt{a}}}
\newcommand{\symb}{\mymacro{\texttt{b}}}
\newcommand{\symc}{\mymacro{\texttt{c}}}
\newcommand{\lang}{\mymacro{\mathbb{L}}}
\newcommand{\onehot}[1][]{\ifthenelse{\isempty{#1}}{\mymacro{ \llbracket \rrbracket}}{\mymacro{ \llbracket#1\rrbracket}}}
\newcommand{\LN}{\mymacro{\text{LN}}}
\newcommand{\embedding}{\mymacro{\mathbf{E}}}
\newcommand{\classification}{\mymacro{\mathrm{C}}}
\newcommand{\lmhead}{\mymacro{\mathbf{L}}}
\newcommand{\qd}{\mymacro{\text{qd}}}
\newcommand{\od}{\mymacro{\text{od}}}
\newcommand*{\bigcdot}{}%
\DeclareRobustCommand*{\bigcdot}{%
  \mathbin{\mathpalette\bigcdot@{}}%
}
\newcommand*{\bigcdot@scalefactor}{.5}
\newcommand*{\bigcdot@widthfactor}{1.15}
\newcommand*{\bigcdot@}[2]{%
  \sbox0{$#1\vcenter{}$}%
  \sbox2{$#1\cdot\m@th$}%
  \hbox to \bigcdot@widthfactor\wd2{%
    \hfil
    \raise\ht0\hbox{%
      \scalebox{\bigcdot@scalefactor}{%
        \lower\ht0\hbox{$#1\bullet\m@th$}%
      }%
    }%
    \hfil
  }%
}
\newcommand{\true}{\mymacro{\top}}
\newcommand{\TRUE}{\mymacro{\textsc{true}}}
\newcommand{\false}{\mymacro{\bot}}
\newcommand{\FALSE}{\mymacro{\textsc{false}}}
\newcommand{\past}{\mymacro{\mathrel{\textup{\textbf{P}}}}}
\newcommand{\future}{\mymacro{\mathrel{\textup{\textbf{F}}}}}
\newcommand{\since}{\mymacro{\mathrel{\textup{\textbf{S}}}}}
\newcommand{\until}{\mymacro{\mathrel{\textup{\textbf{U}}}}}
\newcommand{\ltl}{\mymacro{\textup{\textbf{LTL}}[\past,\future,\since,\until]}}
\newcommand{\tl}{\mymacro{\textup{\textbf{LTL}}[\past,\future]}}
\newcommand{\ptl}[1][]{\ifthenelse{\isempty{#1}}{\mymacro{\textup{\textbf{LTL}}[\past]}}{\mymacro{\textup{\textbf{LTL}}[\past,#1]}}}
\NewDocumentCommand{\fo}{o}{\IfNoValueTF{#1}{\mymacro{\textup{\textbf{FO}}[\mathord<]}}{\mymacro{\textup{\textbf{FO}}^{#1}[\mathord<]}}}
\NewDocumentCommand{\pfo}{o}{\IfNoValueTF{#1}{\mymacro{\textup{\textbf{PFO}}^2[\mathord<]}}{\mymacro{\textup{\textbf{PFO}}^2[\mathord<,#1]}}}
\newcommand{\dimension}{\mymacro{D}}
\newcommand{\length}{\mymacro{N}}
\newcommand{\layernumber}{\mymacro{L}}
\NewDocumentCommand{\transformer}{O{} O{} O{} o }{
  \IfBlankTF{#1}{
    \IfBlankTF{#2}{
      \IfNoValueTF{#4}{\rmH}{\rmH(#4)}
    }{
      \IfNoValueTF{#4}{\mymacro{\rmH_{#2,#3}}}{\mymacro{\rmH(#4)_{#2,#3}}}
    }
  }{
    \IfBlankTF{#2}{
      \IfNoValueTF{#4}{\mymacro{\rmH^{(#1)}}}{\mymacro{\rmH^{(#1)}(#4)}}
    }{
      \IfNoValueTF{#4}{\mymacro{\rmH^{(#1)}_{#2,#3}}}{\mymacro{\rmH^{(#1)}(#4)_{#2,#3}}}
    }
  }
}
\NewDocumentCommand{\attention}{o}{\IfNoValueTF{#1}{\mymacro{\mathbf{A}}}{\mymacro{\mathbf{A}^{(#1)}}}}
\NewDocumentCommand{\ffn}{o}{\IfNoValueTF{#1}{\mymacro{\mathbf{F}}}{\mymacro{\mathbf{F}^{(#1)}}}}
\newcommand{\cls}{\mymacro{o}}
\newcommand{\fof}{\mymacro{\phi}}
\newcommand{\tlf}{\mymacro{\psi}}
\newcommand{\atom}{\mymacro{\pi}}
\newcommand{\etlf}[1]{\mymacro{\tlf_{<#1}}}
\newcommand{\stlf}[1]{\mymacro{\tlf_{=#1}}}
\newcommand{\ltlf}[1]{\mymacro{\tlf_{#1>}}}
\newcommand{\fpnset}{\mymacro{\mathbb{F}}}
\newcommand{\nnfpnset}{\mymacro{\mathbb{F}_{\geq 0}}}
\newcommand{\fpn}{\mymacro{f}}
\newcommand{\fpnlarge}{\mymacro{f}_{\text{large}}}
\newcommand{\fpnsmall}{\mymacro{f}_{\text{small}}}
\NewDocumentCommand{\query}{o o }{\IfNoValueTF{#1}{\rmQ}{\rmQ_{#1,#2}}}
\NewDocumentCommand{\key}{o o }{\IfNoValueTF{#1}{\rmK}{\rmK_{#1,#2}}}
\NewDocumentCommand{\val}{o o }{\IfNoValueTF{#1}{\rmV}{\rmV_{#1,#2}}}
\newcommand{\score}{\mymacro{\mathbf{S}}}
\newcommand{\equivalent}{\mymacro{\equiv_\lang}}
\newcommand{\ltod}{\mymacro{g}}
\newcommand{\nmax}{\mymacro{N_{\max}}}
\newcommand{\nvalid}{\mymacro{N_{\true}}}
\newcommand{\monoid}{\mymacro{\mathbb{M}}}
\NewDocumentCommand{\ball}{m}{\mymacro{\B_{#1}}}
\newcommand{\pe}{\mymacro{\vp}}
\newcommand{\plm}{\mymacro{p}}
\newcommand{\localplm}{\mymacro{\overrightarrow{p}}}
\newcommand{\preprob}{\mymacro{p}_{\text{prefix}}}
\newcommand{\blm}{\mymacro{p}_{\automaton}}
\newcommand{\localblm}{\mymacro{\overrightarrow{p}_{\automaton}}}
\newcommand{\reject}{\mymacro{\stateq_{R}}}
\newcommand{\eosalphabetsize}{\mymacro{\left|\eosalphabet\right|}}
\newcommand{\mir}[1]{\mymacro{\widehat{#1}}}
\newcommand{\round}[1]{\mathsf{round}\!\left(#1\right)}
\newcommand{\coord}[1]{\mymacro{d_{#1}}}
\title{Characterizing the Expressivity of Fixed-Precision Transformer Language Models}
\author{
Jiaoda Li%
~\;~\;~Ryan Cotterell\\
\texttt{\{\href{mailto:jiaoda.li@inf.ethz.ch}{jiaoda.li}, \href{mailto:ryan.cotterell@inf.ethz.ch}{ryan.cotterell}\}@inf.ethz.ch}\\
    {%
\setlength{\fboxsep}{2.5pt}%
\setlength{\fboxrule}{2.5pt}%
    \includegraphics[width=.15\linewidth]{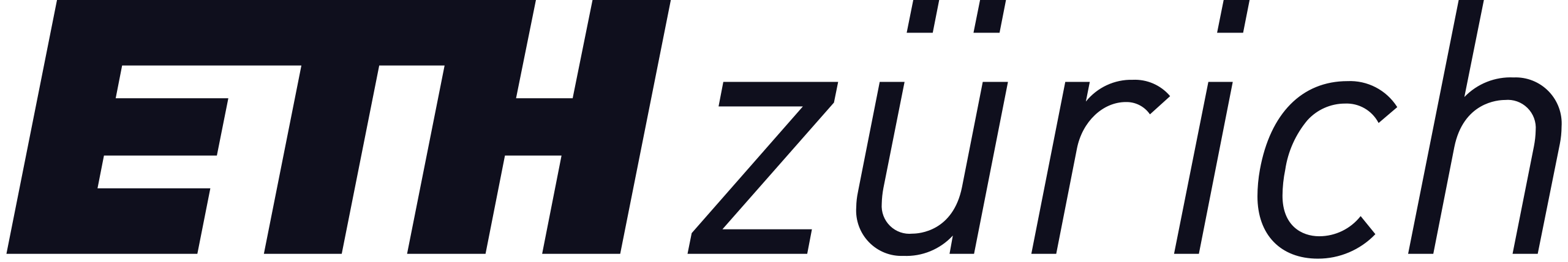}
}}
\begin{document}

\maketitle

\begin{abstract}
\label{sec:abstract}
    Transformer-based language models (LMs) have achieved widespread empirical success, but their theoretical expressive power remains only partially understood. 
    In this work, we analyze a restricted idealization of fixed-precision transformers with strict future masking, soft attention, and no positional encodings. 
    We establish that this class of models is exactly as expressive as a specific fragment of linear temporal logic that contains only a single temporal operator: the \texttt{past} operator. 
    We further connect this fragment to established classes in formal language theory, automata theory, and algebra, yielding a unified framework for understanding transformer expressivity under this idealization. 
    Finally, we present empirical results that align closely with our theory: transformers trained on languages within their characterized expressive capacity generalize reliably across sequence lengths, while they consistently fail to generalize on languages beyond it.\footnote{Code available at 
\href{https://github.com/rycolab/expressivity-of-fixed-precision-transformers}
{GitHub repository}.}
\end{abstract}

\section{Introduction}
\label{sec:intro}
Transformer-based language models (LMs) have demonstrated remarkable empirical success \citep{NIPS2017_3f5ee243, DBLP:journals/corr/abs-2303-08774, dubey2024llama3herdmodels} on a wide variety of natural language tasks \citep[][\textit{inter alia}]{NEURIPS2019_4496bf24,hendrycks2021measuring,srivastava2023beyond}.
This success has sparked growing interest in understanding the theoretical expressive power of transformers, i.e., what languages they can and cannot recognize, and, by extension, what tasks they can and cannot perform. 
A significant body of work approaches this question by relating transformers to well-established frameworks such as formal languages, logic, and circuit complexity \citep{hao-etal-2022-formal,NEURIPS2023_a48e5877,yang2024masked,10.1162/tacl_a_00663}.
To facilitate their theoretical analysis, theoreticians often propose idealizations of transformers. 
For instance, while practical implementations of transformers operate under fixed precision, e.g., single (32-bit) or half (16-bit) precision, many authors assume arbitrary \citep{pérez2018on,hao-etal-2022-formal,merrill-etal-2022-saturated} or length-dependent precision \citep{merrill-sabharwal-2023-parallelism,chiang2025transformers}.
Although such idealizations capture key aspects of transformers, they tend to overestimate their expressive power \citep{pérez2018on}.\looseness=-1

A recent step toward a more faithful theoretical understanding of the expressive power of transformers comes from \citet{yang2024masked}, who show that fixed-precision transformers with strict future masking and unique hard attention (UHA) are exactly as expressive as linear temporal logic $\ltl$, which includes four temporal operators: $\past$ (\texttt{past}), $\future$ (\texttt{future}), $\since$ (\texttt{since}), and $\until$ (\texttt{until}). 
However, UHA still deviates from the soft attention used in practice. To address this gap, \citet{yang2024countingliketransformerscompiling} analyze fixed-precision transformers with strict future masking and soft attention, an idealization that most closely reflects the models deployed in real-world applications. \citet{yang2024countingliketransformerscompiling} show that such models are upper bounded by C-RASP, a counting-based programming language, though a precise characterization of these models' expressivity remains open.

In this paper, we close this gap by providing an exact characterization of the expressive power of fixed-precision transformers with soft attention, strict masking, and no positional encodings (NoPE).
We show they are precisely characterized by $\ptl$, a restricted fragment of $\ltl$ that uses only the \texttt{past} operator ($\past$). We further demonstrate that $\ptl$ is equivalent in expressivity to partially ordered deterministic finite automata (PODFAs), which are characterized by $\gR$-trivial monoids and recognize left-deterministic polynomials.
These results offer a detailed and principled characterization of the expressive power of this idealization, delineating its strengths and limitations.
Crucially, our findings imply that many simple languages, e.g., bounded Dyck languages, which have been shown to be recognizable under more permissive idealizations, are beyond the reach of the models we study.
We also extend our theoretical results to transformer LMs, showing that their expressivity matches that of transformer recognizers. A visual overview of the theoretical landscape is provided in \cref{fig:roadmap}.

{
\hypersetup{hidelinks}
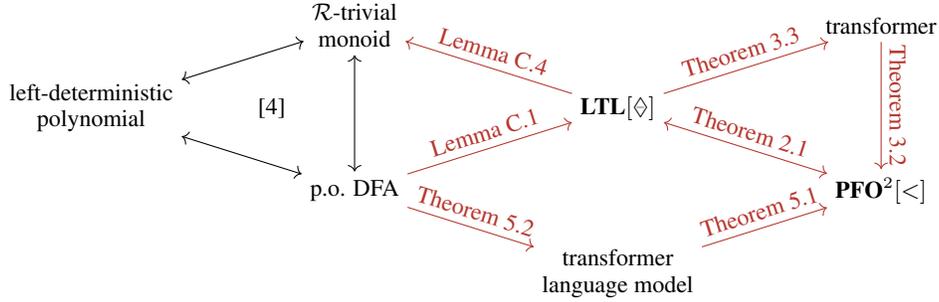
\begin{figure*}
    \centering
    \footnotesize
    \begin{tikzpicture}
        \node (ptl) at (0,0) {$\ptl$};
        \node (pfo) at (3.5,-1.1) {$\pfo$};
        \node[align=center] (poa) at (-3.5,-1.1) {PODFA};
        \node[align=center] (rtm) at (-3.5,1.1) {$\gR$-trivial\\monoid};
        \node[align=center] (ldp) at (-7,0) {left-deterministic\\polynomial};
        \node[align=center] (tlm) at (0,-2.2) {transformer\\language model};
        \node[align=center] (tr) at (3.5,1.1) {transformer};
        \draw[<->] (poa) -- (ldp);
        \draw[<->] (poa) -- (rtm);
        \draw[<->] (rtm) -- (ldp);
        \draw[<->,ETHRed] (pfo) -- node[sloped,above]{\cref{thm:pfo2ptl}} (ptl);
        \draw[->,ETHRed] (poa) -- node[sloped,above]{\cref{thm:po2tlm}} (tlm);
        \draw[->,ETHRed] (tlm) -- node[sloped,above]{\cref{thm:tlm2pfo}} (pfo);
        \draw[->,ETHRed] (ptl) -- node[sloped,above]{\cref{thm:ptl2transformer}} (tr);
        \draw[->,ETHRed] (tr) -- node[sloped,above]{\cref{thm:transformer2pfo}} (pfo);
        \draw[->,ETHRed] (poa) -- node[sloped,above]{\cref{lemma:po2ptl}} (ptl);
        \draw[->,ETHRed] (ptl) -- node[sloped,above]{\cref{lemma:ptl2rt}} (rtm);
        \node at (-4.6,0) {\citep{BRZOZOWSKI198032}};
    \end{tikzpicture}
    \caption{Roadmap of the paper. \textcolor{ETHRed}{Red arrows} indicate novel results. }
    \label{fig:roadmap}
\end{figure*}
}

To arrive at a compelling theory, it is essential to show that it faithfully reflects the behavior of models trained under standard machine learning paradigms. To this end, we provide empirical evidence using the length generalization framework, a widely used method for gauging neural network expressivity \citep{deletang2023neural, butoi2024trainingneuralnetworksrecognizers, huang2025a}. We construct a suite of languages spanning a fine-grained hierarchy of formal language classes. Our results (\cref{tab:results}) exhibit strong alignment between theory and practice: for all languages that transformers are predicted to recognize, the models generalize perfectly over lengths ($100\%$ accuracy); for languages beyond their theoretical capacity, they consistently make generalization errors, regardless of learning rates or random seeds.

\section{Background}
In this section, we present the necessary background knowledge that underpins our analysis.
\subsection{Strings and Languages}
An \defn{alphabet}, denoted as $\alphabet$, is a finite, non-empty set of symbols. 
A \defn{string} over $\alphabet$ is a \emph{finite} sequence of symbols drawn from $\alphabet$. 
The set of all strings over $\alphabet$ is denoted by its Kleene star $\kleene{\alphabet}$. 
A subset of $\kleene{\alphabet}$ is called a \defn{language}.
A \defn{regular expression} is a declarative way to describe a language, defined recursively as follows:\looseness=-1
\begin{itemize}%
    \item $\emptyset$ and each $\syma\in\alphabet$ are regular expressions;
    \item If $\alpha$ and $\beta$ are regular expressions, so are the union $\alpha+\beta$, concatenation $\alpha\beta$, the Kleene star $\alpha^*$, and complement $\setcomplement{\alpha}$.
\end{itemize}
A language is \defn{regular} if and only if it can be described by a regular expression \citep{Kleene1956}. A regular language is said to be \defn{star-free} if it can be described by a regular expression without the Kleene star \citep{mcnaughton1971counter}. As an example, $\kleene{\alphabet}$, the set of all strings over $\alphabet$, is star-free, as it can be described by $\setcomplement{\emptyset}$.
\looseness=-1

\subsection{$\ptl$}
Linear temporal logic $\ltl$ \citep{Kamp1968-KAMTLA-2} is a modal logic, with modalities referring to time. 
The full definition is given in \cref{app:tl}. In this paper, we define a fragment of $\ltl$---denoted as $\ptl$---that includes only one temporal operator $\past$ (\texttt{past}).
\defn{Formulas} in $\ptl$ are composed of \defn{atomic formulas} $\atom_\syma$ for every $\syma\in\alphabet$, Boolean connectives $\land,\lnot$, and a temporal operator $\past$. The disjunction $\lor$ is definable in terms of $\land$ and $\lnot$ as:
\begin{equation}
    \tlf_1 \lor \tlf_2 \defeq \lnot(\lnot \tlf_1 \land \lnot \tlf_2).
\end{equation}
When defined over strings, the formulas are interpreted with respect to a string $\str=\sym_1\cdots\sym_\length$ at a position $n\in\{1,\ldots,\length\}$. 
The semantics are defined inductively. Given a string $\str$, a position $n$, and a formula $\tlf$, we define $\str,n\models \tlf$ ($\str$ satisfies $\tlf$ at position $n$) as follows:
\begin{center}
    \begin{tabular}{lcl}
        $\str,n\models \atom_\syma$ & if and only if & the $n^{\text{th}}$ symbol of $\str$ is an $\syma$. \\
        $\str,n\models \tlf_1 \land \tlf_2$ & if and only if & $\str,n\models \tlf_1$ and $\str,n\models \tlf_2$. \\
        $\str,n\models \lnot \tlf$ & if and only if & $\str,n \not\models \tlf$. \\
        $\str,n\models \past \tlf$ & if and only if & there exists $m$ such that $m<n$ and $\str, m\models \tlf$. \\
    \end{tabular}
\end{center}
\begin{testexample}
    \begin{itemize}[leftmargin=*]
        \item $\syma\symb\symb,1 \models \atom_\syma$ as the first symbol is $\syma$.
        \item $\syma\symb\symb,3 \models \past\atom_\syma$ as an $\syma$ occurs before position 3.
    \end{itemize}
\end{testexample}
To say a string $\str$ of length $\length$ satisfies a formula $\tlf$, written as $\str\models\tlf$, we evaluate from position $\length + 1$ (the position after the final symbol):
\begin{equation}
    \str\models\tlf \quad \text{if and only if} \quad \str,\length+1\models \tlf.
\end{equation} 
The language defined by a formula $\tlf$ is the set $\lang(\tlf)$ of all strings that satisfy $\tlf$.
\begin{testexample}
    $\past\syma$ defines all the strings that contain at least one occurrence of $\syma$, i.e.,
    \begin{equation}
        \lang(\past\syma) = \kleene{\alphabet}\syma\kleene{\alphabet}.
    \end{equation}
\end{testexample}

\subsection{$\pfo$}
First-order logic $\fo$ can also be used to describe formal languages. 
In contrast to (linear) temporal logic, where formulas are interpreted at a single position, $\fo$ formulas may contain multiple position variables, making it better-suited to simulating mechanisms in transformers that involve more than one position, such as attention; see \cref{sec:transformer2pfo,app:transformer2pfo}. 
In this paper, we introduce a fragment of $\fo$ called $\pfo$---the past fragment of first-order logic with two variables. 
The atomic formulas of $\pfo$ consist of unary predicates $\atom_\syma$ for each $\syma\in\alphabet$ and a binary numerical predicate $<$ that can be used to determine the sequential order between variables. 
Formulas in $\pfo$ can have at most two distinct variables, $x$ and $y$. In addition to the usual Boolean connectives $\land$ and $\lnot$, there is a bounded existential quantifier $\exists y<x$, where $y$ is the variable bounded by the quantifier, and $x$ is the free variable. In case there is no free variable, the bounded existential quantifier behaves like an ordinary existential quantifier.
The formulas in $\pfo$ are constructed inductively as follows:
\begin{itemize}
    \item Every atomic formula is a formula;
    \item A Boolean combination of formulas is a formula if it does not contain more than two variables;
    \item If $\fof(x,y)$ is a formula with two variables $x,y$,  $\exists y<x: \fof(x,y)$ and $\exists x<y:\fof(x,y)$ are formulas;
    \item If $\fof(x)$ is a formula with one variable $x$, $\exists x<y: \fof(x)$ is a formula;
    \item If $\fof(x)$ is a formula with one variable $x$, $\exists x: \fof(x)$ is a formula.
\end{itemize}
The bounded universal quantifier $\forall y<x$ can be defined using $\exists y<x$ and $\lnot$. For instance:
\begin{equation}
    \forall y<x: \fof(x,y) \defeq \lnot \exists y<x: \lnot \fof(x,y).
\end{equation}
A formula with no free variables is called a \defn{sentence}. 
We write $\str\models\fof$ if $\fof$ is satisfied for $\str$. 
A sentence $\fof$ defines a language, the set of strings over $\alphabet$ satisfying it, denoted as $\lang(\fof)$.\looseness=-1
\begin{testexample}
The sentence below defines all the strings that contain $\syma\symb\symc$ as a subsequence (symbols appearing in order, possibly non-contiguously):
\begin{equation}
    \exists x:\left(\atom_\symc(x)\land\exists y<x: \left(\atom_\symb(y)\land \exists x<y: \atom_\syma(x)\right)\right).
\end{equation}
Note that the variable $x$ is reused, as permitted in $\pfo$.
The formulas below, however, are not definable in $\pfo$:
\begin{itemize}
    \item $x<y<z$ (more than two variables);
    \item $\forall x: (\exists y\geq x: \atom_\syma(y))$ (disallowed use of the quantifier $\exists y\geq x$).
\end{itemize}
\end{testexample}
$\pfo$ is specifically designed to match the expressive power of $\ptl$, allowing us to leverage both formalisms in our analysis of transformer expressivity.
\begin{restatable}{reTheorem}{pfotoptl}
\label{thm:pfo2ptl}
    $\pfo$ and $\ptl$ have the same expressive power.
\end{restatable}
\begin{proof}
    See \cref{app:pfo}.
\end{proof}

\section{Transformers}
In this section, we formally define the transformer idealization under consideration and establish two central expressivity results: (i) every transformer can be translated into an equivalent $\pfo$ formula, and (ii) every $\ptl$ formula can be simulated by a transformer. Combined with \cref{thm:pfo2ptl}, these results imply that transformers are expressively equivalent to both $\ptl$ and $\pfo$.

\subsection{The Transformer Architecture}
\label{sec:transformer}
The transformer considered in this section follows the architecture described by \citet{yang2024masked}, with the exception that we use \defn{soft attention} in place of unique hard attention (UHA). Before formally defining the architecture, we first state explicitly the assumptions that will govern our model as follows.
\begin{restatable}{reAssumption}{tassum}
\label{assumption:transformer}
We assume that the transformer satisfies the following conditions:
    \begin{enumerate}
        \item \defn{Fixed precision}: There exists a finite set of values that the floating-point numbers can assume. 
        We denote this set as $\fpnset=\{\fpn_1, \fpn_2, \ldots\}$.
        \item \defn{No positional encoding} (NoPE): Positional encodings are omitted for now, but are addressed in \cref{app:pe}.
        \item \defn{Soft attention}: Attention weights are computed using the $\softmax$ function. We also treat average hard attention (AHA) in \cref{app:hard}, showing that AHA is equally expressive as soft attention.
        \item \defn{Strict future masking}: As in \citet{yang2024masked}, the attention is strictly future-masked. Strict masking is shown to be more expressive than non-strict masking in \cref{app:non_strict}.
        \item \defn{Recognition}: Following standard practice in expressivity studies \citep{10.1162/tacl_a_00663}, the transformer is treated as a language recognizer. We will extend the analysis to transformer language models in \cref{sec:tlm}.
    \end{enumerate}
\end{restatable}

Transformers take strings as input. 
Given a string $\str=\sym_1\cdots\sym_\length$ over an alphabet $\alphabet$, we append a special end-of-sentence symbol $\eos \notin \alphabet$ to form a \defn{completed} string $\eosstr \defeq \sym_1 \cdots \sym_\length \eos$ over the extended alphabet $\eosalphabet \defeq \alphabet \cup \{\eos\}$. 
An initial embedding layer $\embedding \colon \eosalphabet^{\length+1} \to \fpnset^{\dimension \times (\length+1)}$ maps each input symbol including $\eos$ to a $\dimension$-dimensional representation.
We define the embedding layer as the $0^{\text{th}}$ layer of the transformer:
\begin{equation}
\transformer[0] \defeq \embedding.
\end{equation}

The embedding is followed by a stack of $\layernumber$ hidden layers. Each hidden layer contains two sublayers: a self-attention mechanism $\attention[\ell] \colon \fpnset^{\dimension \times (\length+1)} \rightarrow \fpnset^{\dimension \times (\length+1)}$, and a feedforward network $\ffn[\ell]\colon \fpnset^{\dimension \times (\length+1)} \rightarrow \fpnset^{\dimension \times (\length+1)}$.
These components are composed sequentially. 
For each $\ell \in \{0, 1, \ldots, \layernumber - 1\}$, we define:
\begin{subequations}
\begin{align}
    \transformer[\ell+0.5] &\defeq \LN\left(\attention[\ell]\left(\transformer[\ell]\right) + \transformer[\ell]\right), \\
    \transformer[\ell+1] &\defeq \LN\left(\ffn[\ell]\left(\transformer[\ell+0.5]\right) + \transformer[\ell+0.5]\right).
\end{align}
\end{subequations}
where $\LN$ denotes layer normalization \citep{ba2016layer}.
Finally, a classification head $\classification \colon \fpnset^\dimension \to \fpnset$ maps the representation at the $\eos$ position in the final layer to a scalar output:
\begin{equation}
    \cls(\eosstr) \defeq \classification\left(\transformer[\layernumber][:][\length+1][\eosstr]\right).
\end{equation}
We use $\transformer[\layernumber][d][n][\eosstr]$ to denote the $(d, n)^{\text{th}}$ entry in the matrix. Similarly, $\transformer[\layernumber][:][n][\eosstr]$ refers to the column vector at position $n$, and $\transformer[\layernumber][d][:][\eosstr]$ refers to the row vector corresponding to dimension $d$.
Consequently, a transformer defines a function of type $\alphabet^{\length+1}\rightarrow\fpnset$, where $\length$ is a parameter of the type.
We say a string $\str$ is accepted by the transformer if $\cls(\eosstr)>0$.
A detailed specification of the architecture is provided in \cref{app:transformer}.

\subsection{From Transformers to $\pfo$}
\label{sec:transformer2pfo}
In this subsection, we discuss the following theorem.
\begin{restatable}{reTheorem}{ttopfo}
\label{thm:transformer2pfo}
Every transformer can be simulated by $\pfo$.
\end{restatable}
\begin{proof}
    See \cref{app:transformer2pfo}.
\end{proof}
Our proof closely follows the approach of \citet[Section 5]{pmlr-v202-chiang23a}, with one key difference: the simulation of summation in soft attention.
While prior work makes use of counting quantifiers to simulate summation, \citet{li2024chain} demonstrate that summation with iterative rounding can be simulated with $\fo$. We take this a step further by showing that the specific summation involved in soft attention can be simulated by $\pfo$. 
Here, we provide a high-level overview of the proof. We identify two summations in soft attention. The first occurs in the denominator of the $\softmax$ function, defined for a $\dimension$-dimensional vector $\vx$, as follows:
\begin{equation}
    \softmax(\vx)_d \defeq \frac{\exp(x_d)}{\sum_{i=1}^D \exp(x_i)}, \text{ for } d\in\{1,\ldots,\dimension\}.
\end{equation}
$\exp$ is a deterministic function. Under the fixed precision assumption, the possible input and output values of $\exp$ form a finite set, allowing $\pfo$ formulas to encode $\exp$ by explicit enumeration of these values.
Since $\exp$ outputs only non-negative values, the summation in the denominator reduces to a sum of non-negative terms, which can be simulated by $\pfo$ (\cref{lemma:sum_pos}).
Additionally, the second summation is a weighted sum, in which the weights are produced by the $\softmax$ function. Under fixed precision, the output of a $\softmax$ contains a bounded number of non-zero entries (\cref{prop:softmax_finite}). Therefore, the weighted sum involves only a bounded number of terms and can similarly be simulated by $\pfo$ (\cref{lemma:sum_bounded}).\looseness=-1

\subsection{From $\ptl$ to Transformers}
\label{sec:ptl2transformer}
In this subsection, we discuss the following theorem.
\begin{restatable}{reTheorem}{ptltot}
\label{thm:ptl2transformer}
Every $\ptl$ formula can be simulated by a transformer.
\end{restatable}
\begin{proof}
    See \cref{app:ptl2transformer}.
\end{proof}
Our proof is adapted from that of \citet[Appendix C.1]{yang2024masked}. Intuitively, each $\ptl$ formula is encoded in a dedicated coordinate $d \in \{1,\ldots,\dimension\}$ of the transformer's hidden state. At each position, this coordinate takes the value $1$ if the formula is satisfied at that position, and $0$ otherwise.
The key challenge in proving \cref{thm:ptl2transformer} lies in simulating the temporal operator $\past$.
Prior constructions typically employ uniform attention, which assigns equal weight to all preceding positions, to simulate such operators. 
However, under fixed precision, soft attention has a limited attention span, as the output of the $\softmax$ function contains only a bounded number of non-zero entries. 
We refer to this bound as the maximum attention span, denoted by $\nmax$. 
To overcome this limitation, previous work has relied on non-fixed numerical precision---such as arbitrary precision \citep{pmlr-v202-chiang23a, yang2024countingliketransformerscompiling} or log precision \citep{NEURIPS2023_a48e5877}---to prevent attention weights from vanishing. In contrast, we present a construction that overcomes this issue without requiring increased numerical precision.\looseness=-1

We begin by describing a base construction that applies when attention weights do not vanish, and then extend it to handle cases where vanishing weights may occur.
\paragraph{Base Construction.} Suppose a formula $\tlf$ is simulated by coordinate $d_1$ at the $\ell^{\text{th}}$ layer, i.e., by $\transformer[\ell][d_1][:]$ (omitting the input string $\eosstr$ for brevity), and let $\past \tlf$ be the formula we aim to simulate. We construct an attention sublayer that uniformly attends to all previous positions $m < n$ such that $\transformer[\ell][d_1][m] = 1$. 
If such positions exist, we set an unused coordinate $\coord{\past}$ to 1, i.e., $\transformer[\ell+0.5][\coord{\past}][n] = 1$ and otherwise to $0$.\looseness=-1

\paragraph{Handling Vanishing Weights.}
However, when too many (more than $\nmax$) previous positions satisfy $\tlf$, the resulting attention weights will underflow to $0$. This results in $\transformer[\ell+0.5][\coord{\past}][n] = 0$ even when $\past \tlf$ is true, leading to incorrect behavior. 
To address this, we first compute the logical conjunction of $d_1$ and $\coord{\past}$ and store it in $\coord{\land}$, i.e., 
\begin{equation}
    \transformer[\ell+1][\coord{\land}][n] = 
    \begin{cases}
        1 & \text{if } \transformer[\ell][d_1][n]=1 \text{ and } \transformer[\ell+0.5][\coord{\past}][n] = 1, \\
        0 & \text{otherwise}.
    \end{cases}
\end{equation}
The number of positions $n$ with $\transformer[\ell+1][\coord{\land}][n] = 1$ is bounded by $\nmax$, since beyond the $(\nmax+1)^{\text{th}}$ position, the first attention sublayer will suffer from vanishing attention weights, causing $\transformer[\ell+0.5][\coord{\past}][n] = 0$.\looseness=-1

Next, we construct a second layer in which the attention sublayer uniformly attends to all positions $m < n$ where $\transformer[\ell+1][\coord{\land}][m] = 1$. This produces a coordinate $\coord{\past\past}$ that simulates $\past(\tlf\land\past\tlf)$. Since the number of positions satisfying $\transformer[\ell+1][\coord{\land}][m] = 1$ is bounded, this second attention sublayer is not subject to vanishing attention weights and therefore computes its output reliably.

The formula $\past(\tlf\land\past\tlf)$ effectively asks whether there are at least two positions before $n$ that satisfy $\tlf$. Thus, if there is exactly one position $m < n$ satisfying $\tlf$, then $\past(\tlf\land\past\tlf)$ differs from the target formula $\past\tlf$. In this case, however, the first attention sublayer (producing $\coord{\past}$) correctly simulates $\past\tlf$,  since the attention only needs to cover one position.
We therefore take the logical disjunction of $\coord{\past}$ and $\coord{\past\past}$, implemented via the subsequent feedforward sublayer. The resulting coordinate $\coord{\lor}$ correctly simulates the formula $\past\tlf$. This completes the proof sketch.

\begin{testexample}
    Assume the attention span is limited to 1 position. The following example illustrates the simulation process, with defective simulations highlighted in \textcolor{ETHRed}{red}.
    \begin{center}
        \begin{tabular}{rcccccc}
            formula & coordinate & \multicolumn{5}{c}{position} \\
            $\tlf$ & $d_1$ & $1$ & $0$ & $1$ & $0$ & $1$ \\
            $\textcolor{ETHRed}{\past\tlf}$ & $\coord{\past}$ & 0 & 1 & 1 & \textcolor{ETHRed}{0} & \textcolor{ETHRed}{0} \\
            $\tlf\land\textcolor{ETHRed}{\past\tlf}$ & $\coord{\land}$ & 0 & 0 & 1 & 0 & \textcolor{ETHRed}{0} \\
            $\past(\tlf\land\past\tlf)$ & $\coord{\past\past}$ & 0 & 0 & 0 & 1 & 1 \\
            $\past\tlf=\past(\tlf\land\past\tlf)\lor\textcolor{ETHRed}{\past\tlf}$ & $\coord{\lor}$ & 0 & 1 & 1 & 1 & 1 \\
        \end{tabular}	
    \end{center}
\end{testexample}

\section{Characterizations of $\ptl$}
\label{sec:characterizations}
We have established the expressive equivalence between transformers and $\ptl$. This connection becomes especially compelling when paired with precise and rich characterizations of $\ptl$. To that end, we prove the following theorem.
\begin{restatable}{reTheorem}{thmptl}
    \label{thm:ptl}
    Let $\lang\subseteq\kleene{\alphabet}$ be a regular language, $\monoid$ be its syntactic monoid, and $\automaton$ be the DFA accepting it. The following assertions are equivalent: (1) $\lang$ is a left-deterministic polynomial,  (2) $\monoid$ is $\gR$-trivial, (3) $\automaton$ is partially ordered, and (4) $\lang$ is definable by $\ptl$.
\end{restatable}
\begin{proof}
    See \cref{app:characterizations}.
\end{proof}

In the remainder of this section, we focus on the PODFA characterization, which will be central to later developments. The other characterizations are defined in \cref{app:characterizations}, which also includes a discussion of superclasses of $\ptl$, summarized in \cref{tab:equivalence}.

\begin{definition}
    A \defn{deterministic finite automaton} (DFA) is a $5$-tuple $\automaton=\dfatuple$ where
    \begin{itemize}
    \item $\alphabet$ is an alphabet;
    \item $\states$ is a finite set of states;
    \item $\qinit\in\states$ is the initial state;
    \item $\final\subseteq \states$ is the set of final states;
    \item $\trans\colon\states\times\alphabet\rightarrow\states$ is a total transition function;
    \item $\reject\in\states\setminus\final$ is a rejecting sink state, i.e., $\trans(\reject,\syma)=\reject$ for every $\syma\in\alphabet$.
    \end{itemize}
\end{definition}

Given an automaton $\automaton=\dfatuple$, we say $\automaton$ is in state $\stateq$ upon reading a string $\str=\sym_1\cdots\sym_\length\in\kleene{\alphabet}$ if and only if there exists a sequence of states $\stateq_0,\ldots,\stateq_\length$ such that    
\begin{itemize}
    \item $\stateq_0=\qinit$;
    \item $\stateq_{n+1}=\trans(\stateq_{n},\sym_{n+1})$ for $n \in [\length-1]$;
    \item $\stateq_\length=\stateq$.
\end{itemize}
A string $\str$ is accepted by $\automaton$ if it reaches one of the final states upon reading $\str$.

\begin{definition}
    A DFA is said to be \defn{partially ordered} if there exists a partial order $\preceq$ on $\states$ such that for every state $\stateq \in \states$ and symbol $\syma \in \alphabet$, we have $\stateq \preceq \trans(\stateq, \syma)$.
\end{definition}
Informally, in a partially ordered DFA, once the automaton leaves a state, it never revisits it. An example of a non-partially ordered DFA is provided in \cref{app:po}.

\section{Transformer Language Models}
\label{sec:tlm}
In this section, we extend our analysis to transformer LMs. Formally, an LM constitutes a distribution over $\kleene{\alphabet}$, typically factorized autoregressively as follows:
\begin{equation}
    \plm(\str) \defeq \localplm(\eos\mid\str)\prod_{n=1}^{\length} \localplm(\sym_n \mid \str_{<n}).
\end{equation}
where $\str_{<n}\defeq\sym_1\cdots\sym_{n-1}$. We define an LM $\plm$'s prefix probability $\preprob$ as:
\begin{equation}
    \preprob(\str) \defeq \sum_{\str'\in\kleene{\alphabet}} \plm(\str\str').
\end{equation}
To adapt a transformer for language modeling, we feed the input string $\str$ into the model autoregressively, and the classification head is replaced with a language modeling head $\lmhead \colon \fpnset^\dimension \rightarrow \fpnset^{\eosalphabetsize}$, which maps the representation in the last layer $\layernumber$ at the most recent position $n-1$ to a probability distribution over $\eosalphabet$. This defines the local distribution $\localplm$ as follows:
\begin{equation}
    \localplm\left(\sym_n\mid \str_{<n}\right) \defeq \lmhead\left(\transformer[\layernumber][:][n-1][\str_{<n}]\right)_{\sym_n}.
\end{equation}
where $\lmhead(\transformer[\layernumber][:][n-1][\str_{<n}])_{\sym_n}$ denotes the probability assigned to the symbol $\sym_n$. 
Under fixed precision, this distribution may not be perfectly normalized, as the sum of its components can deviate from $1$ due to rounding errors.

As transformer LMs do not introduce any new computational components beyond those already present in transformer recognizers, the same constructions and proofs apply. Therefore, every transformer LM can be simulated by $\pfo$.
\begin{restatable}{reTheorem}{tlmtopfo}
\label{thm:tlm2pfo}
Every transformer LM can be simulated by $\pfo$.
\end{restatable}
\begin{proof}
    See \cref{app:tlm}.
\end{proof}

The converse direction is more subtle. \citet{yang2024countingliketransformerscompiling} show how to compile a transformer LM from C-RASP. Here, we present an alternative approach that is more intuitive. Our construction leverages the PODFA characterization of $\ptl$. 
Given a PODFA $\automaton = \dfatuple$, we can define a corresponding LM $\blm$ by specifying the local distribution $\localblm$ as follows: Suppose $\automaton$ is in state $\stateq$ upon reading the prefix $\str_{<n}$:
\begin{itemize}
    \item If $\stateq\notin\final$ and $\stateq\neq\reject$, $\localblm(\syma \mid \str_{<n})$ assigns uniform probability over all symbols in $\{\syma\mid\trans(\stateq, \syma) \in \states\setminus\{\reject\}\}$;\looseness=-1
    \item If $\stateq\in\final$, $\localblm(\syma \mid \str_{<n})$ assigns uniform probability over all symbols in $\{\syma\mid\trans(\stateq, \syma) \in \states\setminus\{\reject\}\}\cup\{\eos\}$;
    \item If $\stateq = \reject$, $\localblm(\syma \mid \str_{<n}) = 0$ for all $\syma \in \eosalphabet$, i.e., $\preprob(\str_{<n}) = 0$. 
\end{itemize}
Note that in practice, a $\softmax$ cannot produce an all-zero vector. For technical convenience, we therefore extend the $\eosalphabet$ with a special symbol $\unk\notin\eosalphabet$, which receives all the probability mass when $\automaton$ reaches the rejecting state, i.e., $\localblm(\unk \mid \str_{<n}) = 1$ if $\stateq = \reject$. 
Accordingly, we extend the LM head $\lmhead$ to accommodate this special symbol: $\lmhead \colon \fpnset^\dimension \rightarrow \fpnset^{|\eosalphabet\cup\{\unk\}|}$.

\begin{restatable}{reTheorem}{pototlm}
\label{thm:po2tlm}
    Every PODFA LM can be simulated by a transformer LM.
\end{restatable}
\begin{proof}
    See \cref{app:tlm}.
\end{proof}

Intuitively, for each state $\stateq \in \states$, we can construct a $\ptl$ formula $\tlf_\stateq$ such that $\str_{<n}, n-1 \models \tlf_\stateq$ if and only if the automaton is in state $\stateq$ after reading the prefix $\str_{<n}$. By \cref{thm:ptl2transformer}, each formula $\tlf_\stateq$ can be encoded into a designated coordinate $d_\stateq$ of the transformer's hidden state. The language modeling head, extended to include the special token $\unk$, then maps these coordinates to the corresponding probability distribution specified by $\localblm$.\looseness=-1 

\begin{table}[b]
    \centering\small
    \caption{Language recognition experiments. Language classes are ordered by decreasing complexity. For each class, examples are chosen to be minimally more complex than those in the immediately lower class. Maximum and mean accuracies (± standard deviation) are reported. Exact values of $100.0\%$ accuracy are highlighted in bold.}
    \label{tab:results}
    \begin{tabular}{ccccccc} 
    \toprule
        \multirow{2}{*}{Class} & \multirow{2}{*}{Language} & \multicolumn{2}{c}{Transformer} & \multicolumn{2}{c}{LSTM} \\
        & & max ($\%$) & mean ($\%$) & max ($\%$) & mean ($\%$) \\
        \midrule
        Counter languages & \textsc{cnt} & $83.3$ & $53.6\pm 8.6$ & $\mathbf{100.0}$ & $86.9\pm20.9$ \\
        \midrule
        Regular languages & \textsc{parity} & $52.1$ & $50.6\pm 0.8$ & $\mathbf{100.0}$ & $\mathbf{100.0}\pm 0.0$ \\
        \midrule
        \multirow{3}{*}{Star-free} & \textsc{dyck}-$(1,2)$ & $83.4$ & $64.2\pm8.0$ & $\mathbf{100.0}$ & $99.3\pm1.0$ \\
        & \textsc{dyck}-$(1,1)$ & $87.7$ & $71.5\pm9.3$ & $\mathbf{100.0}$ & $88.8\pm17.3$ \\
        & \textsc{lt}-2 & $62.1$ & $57.9\pm2.3$ & $\mathbf{100.0}$ & $\mathbf{100.0}\pm0.0$ \\
        \midrule
        \multirow{2}{*}{Unambiguous polynomials} & \textsc{rdp}-1 & $90.0$ & $71.1\pm11.5$ & $\mathbf{100.0}$ & $\mathbf{100.0}\pm0.0$ \\
        & \textsc{last} &$64.8$ & $57.3\pm2.7$ & $\mathbf{100.0}$ & $\mathbf{100.0}\pm0.0$ \\
        \midrule
        \multirow{4}{*}{Left-deterministic polynomials} & \textsc{pt}-2 & $\mathbf{100.0}$ &$98.3\pm3.5$ & $\mathbf{100.0}$ &$99.7\pm1.1$ \\
        & \textsc{lt}-1 &$\mathbf{100.0}$ & $88.8\pm11.8$ &$\mathbf{100.0}$ & $93.0\pm14.2$ \\
        & \textsc{ldp}-2 & $\mathbf{100.0}$ & $100.0\pm0.0$ &$\mathbf{100.0}$ & $\mathbf{100.0}\pm0.0$ \\
        & \textsc{ldp}-1 & $\mathbf{100.0}$ &$97.3\pm6.0$ & $\mathbf{100.0}$ & $\mathbf{100.0}\pm0.0$ \\
        & \textsc{first} & $\mathbf{100.0}$ &$99.4\pm1.4$ & $\mathbf{100.0}$ & $\mathbf{100.0}\pm0.0$ \\
        \toprule
    \end{tabular}
\end{table}

\section{Experiments}
\label{sec:experiments}
The experiments are divided into two parts: language recognition and language modeling. 
\subsection{Language recognition}
\label{sec:lr}
We have shown that a transformer can recognize only left-deterministic polynomials. In this section, we conduct a series of language classification experiments to empirically validate this claim.
We consider five language classes, arranged in a strict inclusion hierarchy---each class is a proper subset of the one preceding it. For each class, we select one or more representative languages. These languages are listed in \cref{tab:results}, and their detailed definitions are provided in \cref{app:tasks}.

\subsubsection{Experimental setup}
We implement the transformer that we theorize about (\cref{assumption:transformer}). For comparison, we also train a long short-term memory (LSTM) \citep{Hochreiter97}. Models are trained on strings up to length $40$, and tested on strings of length $41--500$.
Each experiment is run with $5$ different random seeds and $3$ learning rates. 
We consider a transformer to have successfully recognized a language if it achieves $100\%$ accuracy in at least one of the runs. Details of the experimental setup and model configurations are provided in \cref{app:experimental_setup}.

\subsubsection{Results}
We compute classification accuracy and report both the maximum and mean values across all runs in \cref{tab:results}. The LSTM achieves perfect accuracy on all tasks, consistent with previous work showing that LSTMs can recognize regular languages \citep{merrill-2019-sequential} and implement counting mechanisms \citep{weiss-etal-2018-practical}. This confirms that the tasks are learnable given the available training data. 
Results on transformers align precisely with our theoretical predictions: under fixed precision, transformers with soft attention and NoPE can recognize exactly the class of left-deterministic polynomials. They achieve perfect accuracy on all $\ptl$-definable languages but consistently fail on tasks outside this class, even though prior work has shown that some of these languages are theoretically recognizable under more permissive idealizations \citep{pmlr-v202-chiang23a, NEURIPS2023_a48e5877, yang2024countingliketransformerscompiling, yao-etal-2021-self, svete-cotterell-2024-transformers}.  Notably, we use single-precision (32-bit) floating-point numbers, and the string lengths never exceed the maximum attention span of the transformer. That is, attention can uniformly cover all prior positions without numerical underflow or overflow. Yet, despite these favorable conditions, the transformer exhibits no expressive power beyond what is predicted by our formal characterization. A more detailed breakdown of the results can be found in \cref{app:results}.\looseness=-1

\subsection{Language modeling}
We now turn to experiments on language modeling, focusing on three representative languages: \textsc{rdp}-1, \textsc{ldp}-2, and \textsc{ldp}-1. The corresponding automata are illustrated in \cref{fig:automata}.

We train the transformer LMs using the standard cross-entropy loss. For evaluation, a predicted symbol is considered correct if it has non-zero probability under the target distribution $\localblm$ induced by the DFA.
Per-token accuracies are reported in \cref{tab:lm}. The transformer LM successfully learns \textsc{ldp}-1 and \textsc{ldp}-2 with perfect accuracy. For \textsc{rdp}-1, the best performance reaches $98.3\%$, but the model consistently falls short of achieving $100.0\%$. This gap becomes more evident upon inspecting the hidden states of the model. 

\begin{figure*}[b]
    \centering\small
    \begin{subfigure}[t]{0.28\textwidth}
        \centering
        \includegraphics[scale=1]{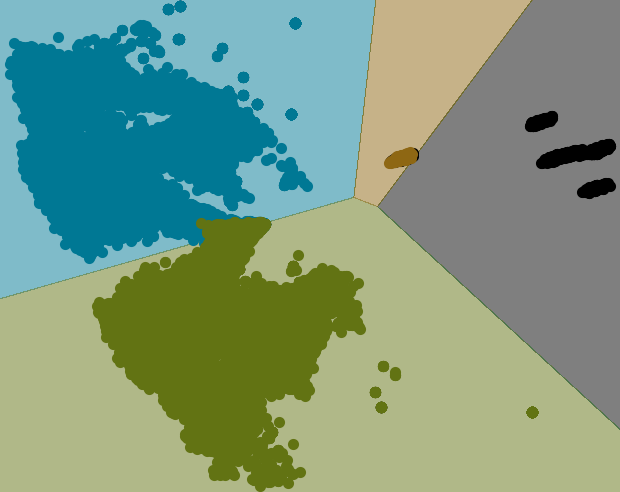}
        \caption{\textsc{rdp}-1}
    \end{subfigure}
    \begin{subfigure}[t]{0.28\textwidth}
        \centering
        \includegraphics[scale=1]{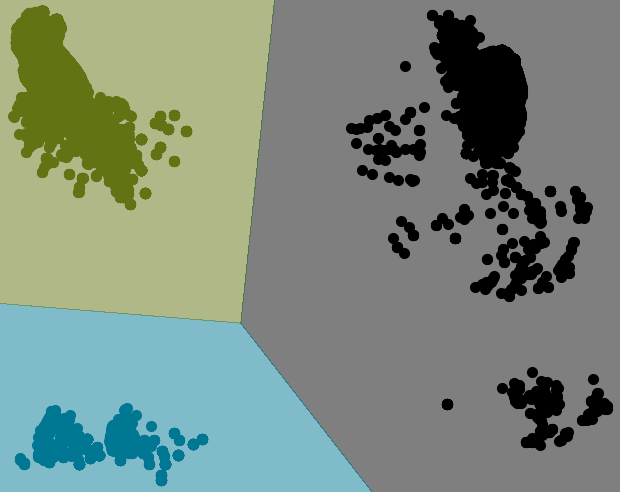}
        \caption{\textsc{ldp}-1}
    \end{subfigure}
    \begin{subfigure}[t]{0.4\textwidth}
        \centering
        \includegraphics[scale=1]{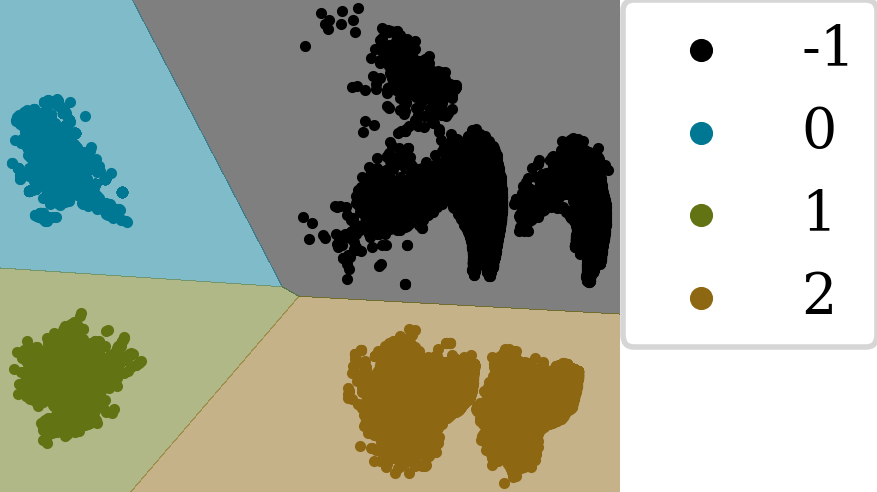}
        \captionsetup{singlelinecheck=false,margin={1.4cm,0cm}}
        \caption{\textsc{ldp}-2}
    \end{subfigure}
    \caption{Visualization of the transformer's representations for different DFA states. $-1$ denotes the rejecting state $\reject$. The filled contours represent the decision boundaries of a linear classifier.}
    \label{fig:voronoi}
    \vspace{-10pt}
\end{figure*}

We extract the representation $\transformer$ at each position $n$ by concatenating the outputs of all self-attention and feedforward sublayers:
\begin{equation}
    \transformer_{:,n} = \left[
            \transformer[0.5][:][n][\eosstr]^\top
            \cdots 
            \transformer[\layernumber][:][n][\eosstr]^\top
        \right]^\top.
\end{equation}

To assess whether the model internally tracks the states in the DFAs, we train a linear classifier to predict the state the DFA is in upon reading $\str_{<n}$, given $\transformer_{:,n-1}$. The classifier consists of two linear layers with no intermediate nonlinearity: the first projects to $\fpnset^2$ for visualization, and the second performs classification. In \cref{fig:voronoi}, we plot the 2D projections and overlay the decision boundaries of the classifier.\looseness=-1

The visualizations reveal that, for \textsc{ldp}-1 and \textsc{ldp}-2, the transformer does distinguish the states, as their representations are linearly separable. In contrast, for \textsc{rdp}-1, state representations are intermixed. Notably, even when we train a neural probe with nonlinearity on the representations, perfect probing accuracy remains elusive. This suggests that the transformer does not learn to fully separate the states in the non-partially ordered DFA corresponding to \textsc{rdp}-1.

\section*{Acknowledgments}
This publication was made possible by an ETH AI Center doctoral fellowship to Jiaoda Li.

\bibliography{reference}

\begin{thebibliography}{51}
\providecommand{\natexlab}[1]{#1}
\providecommand{\url}[1]{\texttt{#1}}
\expandafter\ifx\csname urlstyle\endcsname\relax
  \providecommand{\doi}[1]{doi: #1}\else
  \providecommand{\doi}{doi: \begingroup \urlstyle{rm}\Url}\fi

\bibitem[Ba et~al.(2016)Ba, Kiros, and Hinton]{ba2016layer}
Jimmy~Lei Ba, Jamie~Ryan Kiros, and Geoffrey~E. Hinton.
\newblock Layer normalization.
\newblock In \emph{NIPS Deep Learning Symposium}, 2016.
\newblock URL \url{https://arxiv.org/abs/1607.06450}.

\bibitem[Barcelo et~al.(2024)Barcelo, Kozachinskiy, Lin, and Podolskii]{barcelo2024logical}
Pablo Barcelo, Alexander Kozachinskiy, Anthony~Widjaja Lin, and Vladimir Podolskii.
\newblock Logical languages accepted by transformer encoders with hard attention.
\newblock In \emph{The Twelfth International Conference on Learning Representations}, 2024.
\newblock URL \url{https://openreview.net/forum?id=gbrHZq07mq}.

\bibitem[Bhattamishra et~al.(2020)Bhattamishra, Ahuja, and Goyal]{bhattamishra-etal-2020-ability}
Satwik Bhattamishra, Kabir Ahuja, and Navin Goyal.
\newblock On the {A}bility and {L}imitations of {T}ransformers to {R}ecognize {F}ormal {L}anguages.
\newblock In Bonnie Webber, Trevor Cohn, Yulan He, and Yang Liu, editors, \emph{Proceedings of the 2020 Conference on Empirical Methods in Natural Language Processing (EMNLP)}, pages 7096--7116, Online, November 2020. Association for Computational Linguistics.
\newblock \doi{10.18653/v1/2020.emnlp-main.576}.
\newblock URL \url{https://aclanthology.org/2020.emnlp-main.576}.

\bibitem[Brzozowski and Fich(1980)]{BRZOZOWSKI198032}
J.A. Brzozowski and Faith~E. Fich.
\newblock Languages of $\mathcal{R}$-trivial monoids.
\newblock \emph{Journal of Computer and System Sciences}, 20\penalty0 (1):\penalty0 32--49, 1980.
\newblock ISSN 0022-0000.
\newblock \doi{https://doi.org/10.1016/0022-0000(80)90003-3}.
\newblock URL \url{https://www.sciencedirect.com/science/article/pii/0022000080900033}.

\bibitem[B{\"u}chi(1990)]{Büchi1990}
J.~Richard B{\"u}chi.
\newblock \emph{On a Decision Method in Restricted Second Order Arithmetic}, pages 425--435.
\newblock Springer New York, New York, NY, 1990.
\newblock ISBN 978-1-4613-8928-6.
\newblock \doi{10.1007/978-1-4613-8928-6_23}.
\newblock URL \url{https://doi.org/10.1007/978-1-4613-8928-6_23}.

\bibitem[Butoi et~al.(2025)Butoi, Khalighinejad, Svete, Valvoda, Cotterell, and DuSell]{butoi2024trainingneuralnetworksrecognizers}
Alexandra Butoi, Ghazal Khalighinejad, Anej Svete, Josef Valvoda, Ryan Cotterell, and Brian DuSell.
\newblock Training neural networks as recognizers of formal languages.
\newblock In \emph{The Thirteenth International Conference on Learning Representations}, 2025.
\newblock URL \url{https://openreview.net/forum?id=aWLQTbfFgV}.

\bibitem[Chiang(2025)]{chiang2025transformers}
David Chiang.
\newblock Transformers in uniform {TC$^0$}.
\newblock \emph{Transactions on Machine Learning Research}, 2025.
\newblock ISSN 2835-8856.
\newblock URL \url{https://openreview.net/forum?id=ZA7D4nQuQF}.

\bibitem[Chiang and Cholak(2022)]{chiang-cholak-2022-overcoming}
David Chiang and Peter Cholak.
\newblock Overcoming a theoretical limitation of self-attention.
\newblock In Smaranda Muresan, Preslav Nakov, and Aline Villavicencio, editors, \emph{Proceedings of the 60th Annual Meeting of the Association for Computational Linguistics (Volume 1: Long Papers)}, pages 7654--7664, Dublin, Ireland, May 2022. Association for Computational Linguistics.
\newblock \doi{10.18653/v1/2022.acl-long.527}.
\newblock URL \url{https://aclanthology.org/2022.acl-long.527}.

\bibitem[Chiang et~al.(2023)Chiang, Cholak, and Pillay]{pmlr-v202-chiang23a}
David Chiang, Peter Cholak, and Anand Pillay.
\newblock Tighter bounds on the expressivity of transformer encoders.
\newblock In Andreas Krause, Emma Brunskill, Kyunghyun Cho, Barbara Engelhardt, Sivan Sabato, and Jonathan Scarlett, editors, \emph{Proceedings of the 40th International Conference on Machine Learning}, volume 202 of \emph{Proceedings of Machine Learning Research}, pages 5544--5562. PMLR, 23--29 Jul 2023.
\newblock URL \url{https://proceedings.mlr.press/v202/chiang23a.html}.

\bibitem[Deletang et~al.(2023)Deletang, Ruoss, Grau-Moya, Genewein, Wenliang, Catt, Cundy, Hutter, Legg, Veness, and Ortega]{deletang2023neural}
Gregoire Deletang, Anian Ruoss, Jordi Grau-Moya, Tim Genewein, Li~Kevin Wenliang, Elliot Catt, Chris Cundy, Marcus Hutter, Shane Legg, Joel Veness, and Pedro~A Ortega.
\newblock Neural networks and the chomsky hierarchy.
\newblock In \emph{The Eleventh International Conference on Learning Representations}, 2023.
\newblock URL \url{https://openreview.net/forum?id=WbxHAzkeQcn}.

\bibitem[Diekert et~al.(2008)Diekert, Gastin, and Kufleitner]{doi:10.1142/S0129054108005802}
Volker Diekert, Paul Gastin, and Manfred Kufleitner.
\newblock A survey on small fragments of first-order logic over finite words.
\newblock \emph{International Journal of Foundations of Computer Science}, 19\penalty0 (03):\penalty0 513--548, 2008.
\newblock \doi{10.1142/S0129054108005802}.
\newblock URL \url{https://doi.org/10.1142/S0129054108005802}.

\bibitem[Dubey et~al.(2024)Dubey, Jauhri, Pandey, Kadian, Al-Dahle, Letman, Mathur, Schelten, Yang, Fan, Goyal, Hartshorn, Yang, Mitra, Sravankumar, Korenev, Hinsvark, Rao, Zhang, Rodriguez, Gregerson, Spataru, Roziere, Biron, Tang, Chern, Caucheteux, Nayak, Bi, Marra, McConnell, Keller, Touret, Wu, Wong, Ferrer, Nikolaidis, Allonsius, Song, Pintz, Livshits, Esiobu, Choudhary, Mahajan, Garcia-Olano, Perino, Hupkes, Lakomkin, AlBadawy, Lobanova, Dinan, Smith, Radenovic, Zhang, Synnaeve, Lee, Anderson, Nail, Mialon, Pang, Cucurell, Nguyen, Korevaar, Xu, Touvron, Zarov, Ibarra, Kloumann, Misra, Evtimov, Copet, Lee, Geffert, Vranes, Park, Mahadeokar, Shah, van~der Linde, Billock, Hong, Lee, Fu, Chi, Huang, Liu, Wang, Yu, Bitton, Spisak, Park, Rocca, Johnstun, Saxe, Jia, Alwala, Upasani, Plawiak, Li, Heafield, Stone, El-Arini, Iyer, Malik, Chiu, Bhalla, Rantala-Yeary, van~der Maaten, Chen, Tan, Jenkins, Martin, Madaan, Malo, Blecher, Landzaat, de~Oliveira, Muzzi, Pasupuleti, Singh, Paluri, Kardas, Oldham, Rita, Pavlova, Kambadur, Lewis, Si, Singh, Hassan, Goyal, Torabi, Bashlykov, Bogoychev, Chatterji, Duchenne, Çelebi, Alrassy, Zhang, Li, Vasic, Weng, Bhargava, Dubal, Krishnan, Koura, Xu, He, Dong, Srinivasan, Ganapathy, Calderer, Cabral, Stojnic, Raileanu, Girdhar, Patel, Sauvestre, Polidoro, Sumbaly, Taylor, Silva, Hou, Wang, Hosseini, Chennabasappa, Singh, Bell, Kim, Edunov, Nie, Narang, Raparthy, Shen, Wan, Bhosale, Zhang, Vandenhende, Batra, Whitman, Sootla, Collot, Gururangan, Borodinsky, Herman, Fowler, Sheasha, Georgiou, Scialom, Speckbacher, Mihaylov, Xiao, Karn, Goswami, Gupta, Ramanathan, Kerkez, Gonguet, Do, Vogeti, Petrovic, Chu, Xiong, Fu, Meers, Martinet, Wang, Tan, Xie, Jia, Wang, Goldschlag, Gaur, Babaei, Wen, Song, Zhang, Li, Mao, Coudert, Yan, Chen, Papakipos, Singh, Grattafiori, Jain, Kelsey, Shajnfeld, Gangidi, Victoria, Goldstand, Menon, Sharma, Boesenberg, Vaughan, Baevski, Feinstein, Kallet, Sangani, Yunus, Lupu, Alvarado, Caples, Gu, Ho, Poulton, Ryan, Ramchandani, Franco, Saraf, Chowdhury, Gabriel, Bharambe, Eisenman, Yazdan, James, Maurer, Leonhardi, Huang, Loyd, Paola, Paranjape, Liu, Wu, Ni, Hancock, Wasti, Spence, Stojkovic, Gamido, Montalvo, Parker, Burton, Mejia, Wang, Kim, Zhou, Hu, Chu, Cai, Tindal, Feichtenhofer, Civin, Beaty, Kreymer, Li, Wyatt, Adkins, Xu, Testuggine, David, Parikh, Liskovich, Foss, Wang, Le, Holland, Dowling, Jamil, Montgomery, Presani, Hahn, Wood, Brinkman, Arcaute, Dunbar, Smothers, Sun, Kreuk, Tian, Ozgenel, Caggioni, Guzmán, Kanayet, Seide, Florez, Schwarz, Badeer, Swee, Halpern, Thattai, Herman, Sizov, Guangyi, Zhang, Lakshminarayanan, Shojanazeri, Zou, Wang, Zha, Habeeb, Rudolph, Suk, Aspegren, Goldman, Damlaj, Molybog, Tufanov, Veliche, Gat, Weissman, Geboski, Kohli, Asher, Gaya, Marcus, Tang, Chan, Zhen, Reizenstein, Teboul, Zhong, Jin, Yang, Cummings, Carvill, Shepard, McPhie, Torres, Ginsburg, Wang, Wu, U, Saxena, Prasad, Khandelwal, Zand, Matosich, Veeraraghavan, Michelena, Li, Huang, Chawla, Lakhotia, Huang, Chen, Garg, A, Silva, Bell, Zhang, Guo, Yu, Moshkovich, Wehrstedt, Khabsa, Avalani, Bhatt, Tsimpoukelli, Mankus, Hasson, Lennie, Reso, Groshev, Naumov, Lathi, Keneally, Seltzer, Valko, Restrepo, Patel, Vyatskov, Samvelyan, Clark, Macey, Wang, Hermoso, Metanat, Rastegari, Bansal, Santhanam, Parks, White, Bawa, Singhal, Egebo, Usunier, Laptev, Dong, Zhang, Cheng, Chernoguz, Hart, Salpekar, Kalinli, Kent, Parekh, Saab, Balaji, Rittner, Bontrager, Roux, Dollar, Zvyagina, Ratanchandani, Yuvraj, Liang, Alao, Rodriguez, Ayub, Murthy, Nayani, Mitra, Li, Hogan, Battey, Wang, Maheswari, Howes, Rinott, Bondu, Datta, Chugh, Hunt, Dhillon, Sidorov, Pan, Verma, Yamamoto, Ramaswamy, Lindsay, Lindsay, Feng, Lin, Zha, Shankar, Zhang, Zhang, Wang, Agarwal, Sajuyigbe, Chintala, Max, Chen, Kehoe, Satterfield, Govindaprasad, Gupta, Cho, Virk, Subramanian, Choudhury, Goldman, Remez, Glaser, Best, Kohler, Robinson, Li, Zhang, Matthews, Chou, Shaked, Vontimitta, Ajayi, Montanez, Mohan, Kumar, Mangla, Albiero, Ionescu, Poenaru, Mihailescu, Ivanov, Li, Wang, Jiang, Bouaziz, Constable, Tang, Wang, Wu, Wang, Xia, Wu, Gao, Chen, Hu, Jia, Qi, Li, Zhang, Zhang, Adi, Nam, Yu, Wang, Hao, Qian, He, Rait, DeVito, Rosnbrick, Wen, Yang, and Zhao]{dubey2024llama3herdmodels}
Abhimanyu Dubey, Abhinav Jauhri, Abhinav Pandey, Abhishek Kadian, Ahmad Al-Dahle, Aiesha Letman, Akhil Mathur, Alan Schelten, Amy Yang, Angela Fan, Anirudh Goyal, Anthony Hartshorn, Aobo Yang, Archi Mitra, Archie Sravankumar, Artem Korenev, Arthur Hinsvark, Arun Rao, Aston Zhang, Aurelien Rodriguez, Austen Gregerson, Ava Spataru, Baptiste Roziere, Bethany Biron, Binh Tang, Bobbie Chern, Charlotte Caucheteux, Chaya Nayak, Chloe Bi, Chris Marra, Chris McConnell, Christian Keller, Christophe Touret, Chunyang Wu, Corinne Wong, Cristian~Canton Ferrer, Cyrus Nikolaidis, Damien Allonsius, Daniel Song, Danielle Pintz, Danny Livshits, David Esiobu, Dhruv Choudhary, Dhruv Mahajan, Diego Garcia-Olano, Diego Perino, Dieuwke Hupkes, Egor Lakomkin, Ehab AlBadawy, Elina Lobanova, Emily Dinan, Eric~Michael Smith, Filip Radenovic, Frank Zhang, Gabriel Synnaeve, Gabrielle Lee, Georgia~Lewis Anderson, Graeme Nail, Gregoire Mialon, Guan Pang, Guillem Cucurell, Hailey Nguyen, Hannah Korevaar, Hu~Xu, Hugo Touvron, Iliyan Zarov, Imanol~Arrieta Ibarra, Isabel Kloumann, Ishan Misra, Ivan Evtimov, Jade Copet, Jaewon Lee, Jan Geffert, Jana Vranes, Jason Park, Jay Mahadeokar, Jeet Shah, Jelmer van~der Linde, Jennifer Billock, Jenny Hong, Jenya Lee, Jeremy Fu, Jianfeng Chi, Jianyu Huang, Jiawen Liu, Jie Wang, Jiecao Yu, Joanna Bitton, Joe Spisak, Jongsoo Park, Joseph Rocca, Joshua Johnstun, Joshua Saxe, Junteng Jia, Kalyan~Vasuden Alwala, Kartikeya Upasani, Kate Plawiak, Ke~Li, Kenneth Heafield, Kevin Stone, Khalid El-Arini, Krithika Iyer, Kshitiz Malik, Kuenley Chiu, Kunal Bhalla, Lauren Rantala-Yeary, Laurens van~der Maaten, Lawrence Chen, Liang Tan, Liz Jenkins, Louis Martin, Lovish Madaan, Lubo Malo, Lukas Blecher, Lukas Landzaat, Luke de~Oliveira, Madeline Muzzi, Mahesh Pasupuleti, Mannat Singh, Manohar Paluri, Marcin Kardas, Mathew Oldham, Mathieu Rita, Maya Pavlova, Melanie Kambadur, Mike Lewis, Min Si, Mitesh~Kumar Singh, Mona Hassan, Naman Goyal, Narjes Torabi, Nikolay Bashlykov, Nikolay Bogoychev, Niladri Chatterji, Olivier Duchenne, Onur Çelebi, Patrick Alrassy, Pengchuan Zhang, Pengwei Li, Petar Vasic, Peter Weng, Prajjwal Bhargava, Pratik Dubal, Praveen Krishnan, Punit~Singh Koura, Puxin Xu, Qing He, Qingxiao Dong, Ragavan Srinivasan, Raj Ganapathy, Ramon Calderer, Ricardo~Silveira Cabral, Robert Stojnic, Roberta Raileanu, Rohit Girdhar, Rohit Patel, Romain Sauvestre, Ronnie Polidoro, Roshan Sumbaly, Ross Taylor, Ruan Silva, Rui Hou, Rui Wang, Saghar Hosseini, Sahana Chennabasappa, Sanjay Singh, Sean Bell, Seohyun~Sonia Kim, Sergey Edunov, Shaoliang Nie, Sharan Narang, Sharath Raparthy, Sheng Shen, Shengye Wan, Shruti Bhosale, Shun Zhang, Simon Vandenhende, Soumya Batra, Spencer Whitman, Sten Sootla, Stephane Collot, Suchin Gururangan, Sydney Borodinsky, Tamar Herman, Tara Fowler, Tarek Sheasha, Thomas Georgiou, Thomas Scialom, Tobias Speckbacher, Todor Mihaylov, Tong Xiao, Ujjwal Karn, Vedanuj Goswami, Vibhor Gupta, Vignesh Ramanathan, Viktor Kerkez, Vincent Gonguet, Virginie Do, Vish Vogeti, Vladan Petrovic, Weiwei Chu, Wenhan Xiong, Wenyin Fu, Whitney Meers, Xavier Martinet, Xiaodong Wang, Xiaoqing~Ellen Tan, Xinfeng Xie, Xuchao Jia, Xuewei Wang, Yaelle Goldschlag, Yashesh Gaur, Yasmine Babaei, Yi~Wen, Yiwen Song, Yuchen Zhang, Yue Li, Yuning Mao, Zacharie~Delpierre Coudert, Zheng Yan, Zhengxing Chen, Zoe Papakipos, Aaditya Singh, Aaron Grattafiori, Abha Jain, Adam Kelsey, Adam Shajnfeld, Adithya Gangidi, Adolfo Victoria, Ahuva Goldstand, Ajay Menon, Ajay Sharma, Alex Boesenberg, Alex Vaughan, Alexei Baevski, Allie Feinstein, Amanda Kallet, Amit Sangani, Anam Yunus, Andrei Lupu, Andres Alvarado, Andrew Caples, Andrew Gu, Andrew Ho, Andrew Poulton, Andrew Ryan, Ankit Ramchandani, Annie Franco, Aparajita Saraf, Arkabandhu Chowdhury, Ashley Gabriel, Ashwin Bharambe, Assaf Eisenman, Azadeh Yazdan, Beau James, Ben Maurer, Benjamin Leonhardi, Bernie Huang, Beth Loyd, Beto~De Paola, Bhargavi Paranjape, Bing Liu, Bo~Wu, Boyu Ni, Braden Hancock, Bram Wasti, Brandon Spence, Brani Stojkovic, Brian Gamido, Britt Montalvo, Carl Parker, Carly Burton, Catalina Mejia, Changhan Wang, Changkyu Kim, Chao Zhou, Chester Hu, Ching-Hsiang Chu, Chris Cai, Chris Tindal, Christoph Feichtenhofer, Damon Civin, Dana Beaty, Daniel Kreymer, Daniel Li, Danny Wyatt, David Adkins, David Xu, Davide Testuggine, Delia David, Devi Parikh, Diana Liskovich, Didem Foss, Dingkang Wang, Duc Le, Dustin Holland, Edward Dowling, Eissa Jamil, Elaine Montgomery, Eleonora Presani, Emily Hahn, Emily Wood, Erik Brinkman, Esteban Arcaute, Evan Dunbar, Evan Smothers, Fei Sun, Felix Kreuk, Feng Tian, Firat Ozgenel, Francesco Caggioni, Francisco Guzmán, Frank Kanayet, Frank Seide, Gabriela~Medina Florez, Gabriella Schwarz, Gada Badeer, Georgia Swee, Gil Halpern, Govind Thattai, Grant Herman, Grigory Sizov, Guangyi, Zhang, Guna Lakshminarayanan, Hamid Shojanazeri, Han Zou, Hannah Wang, Hanwen Zha, Haroun Habeeb, Harrison Rudolph, Helen Suk, Henry Aspegren, Hunter Goldman, Ibrahim Damlaj, Igor Molybog, Igor Tufanov, Irina-Elena Veliche, Itai Gat, Jake Weissman, James Geboski, James Kohli, Japhet Asher, Jean-Baptiste Gaya, Jeff Marcus, Jeff Tang, Jennifer Chan, Jenny Zhen, Jeremy Reizenstein, Jeremy Teboul, Jessica Zhong, Jian Jin, Jingyi Yang, Joe Cummings, Jon Carvill, Jon Shepard, Jonathan McPhie, Jonathan Torres, Josh Ginsburg, Junjie Wang, Kai Wu, Kam~Hou U, Karan Saxena, Karthik Prasad, Kartikay Khandelwal, Katayoun Zand, Kathy Matosich, Kaushik Veeraraghavan, Kelly Michelena, Keqian Li, Kun Huang, Kunal Chawla, Kushal Lakhotia, Kyle Huang, Lailin Chen, Lakshya Garg, Lavender A, Leandro Silva, Lee Bell, Lei Zhang, Liangpeng Guo, Licheng Yu, Liron Moshkovich, Luca Wehrstedt, Madian Khabsa, Manav Avalani, Manish Bhatt, Maria Tsimpoukelli, Martynas Mankus, Matan Hasson, Matthew Lennie, Matthias Reso, Maxim Groshev, Maxim Naumov, Maya Lathi, Meghan Keneally, Michael~L. Seltzer, Michal Valko, Michelle Restrepo, Mihir Patel, Mik Vyatskov, Mikayel Samvelyan, Mike Clark, Mike Macey, Mike Wang, Miquel~Jubert Hermoso, Mo~Metanat, Mohammad Rastegari, Munish Bansal, Nandhini Santhanam, Natascha Parks, Natasha White, Navyata Bawa, Nayan Singhal, Nick Egebo, Nicolas Usunier, Nikolay~Pavlovich Laptev, Ning Dong, Ning Zhang, Norman Cheng, Oleg Chernoguz, Olivia Hart, Omkar Salpekar, Ozlem Kalinli, Parkin Kent, Parth Parekh, Paul Saab, Pavan Balaji, Pedro Rittner, Philip Bontrager, Pierre Roux, Piotr Dollar, Polina Zvyagina, Prashant Ratanchandani, Pritish Yuvraj, Qian Liang, Rachad Alao, Rachel Rodriguez, Rafi Ayub, Raghotham Murthy, Raghu Nayani, Rahul Mitra, Raymond Li, Rebekkah Hogan, Robin Battey, Rocky Wang, Rohan Maheswari, Russ Howes, Ruty Rinott, Sai~Jayesh Bondu, Samyak Datta, Sara Chugh, Sara Hunt, Sargun Dhillon, Sasha Sidorov, Satadru Pan, Saurabh Verma, Seiji Yamamoto, Sharadh Ramaswamy, Shaun Lindsay, Shaun Lindsay, Sheng Feng, Shenghao Lin, Shengxin~Cindy Zha, Shiva Shankar, Shuqiang Zhang, Shuqiang Zhang, Sinong Wang, Sneha Agarwal, Soji Sajuyigbe, Soumith Chintala, Stephanie Max, Stephen Chen, Steve Kehoe, Steve Satterfield, Sudarshan Govindaprasad, Sumit Gupta, Sungmin Cho, Sunny Virk, Suraj Subramanian, Sy~Choudhury, Sydney Goldman, Tal Remez, Tamar Glaser, Tamara Best, Thilo Kohler, Thomas Robinson, Tianhe Li, Tianjun Zhang, Tim Matthews, Timothy Chou, Tzook Shaked, Varun Vontimitta, Victoria Ajayi, Victoria Montanez, Vijai Mohan, Vinay~Satish Kumar, Vishal Mangla, Vítor Albiero, Vlad Ionescu, Vlad Poenaru, Vlad~Tiberiu Mihailescu, Vladimir Ivanov, Wei Li, Wenchen Wang, Wenwen Jiang, Wes Bouaziz, Will Constable, Xiaocheng Tang, Xiaofang Wang, Xiaojian Wu, Xiaolan Wang, Xide Xia, Xilun Wu, Xinbo Gao, Yanjun Chen, Ye~Hu, Ye~Jia, Ye~Qi, Yenda Li, Yilin Zhang, Ying Zhang, Yossi Adi, Youngjin Nam, Yu, Wang, Yuchen Hao, Yundi Qian, Yuzi He, Zach Rait, Zachary DeVito, Zef Rosnbrick, Zhaoduo Wen, Zhenyu Yang, and Zhiwei Zhao.
\newblock The llama 3 herd of models.
\newblock \emph{Computing Research Repository}, arXiv:2407.21783, 2024.
\newblock URL \url{https://arxiv.org/abs/2407.21783}.
\newblock Version 2.

\bibitem[Etessami and Wilke(1996)]{561310}
K.~Etessami and T.~Wilke.
\newblock An until hierarchy for temporal logic.
\newblock In \emph{Proceedings 11th Annual IEEE Symposium on Logic in Computer Science}, pages 108--117, 1996.
\newblock \doi{10.1109/LICS.1996.561310}.
\newblock URL \url{https://ieeexplore.ieee.org/document/561310}.

\bibitem[Etessami et~al.(2002)Etessami, Vardi, and Wilke]{ETESSAMI2002279}
Kousha Etessami, Moshe~Y. Vardi, and Thomas Wilke.
\newblock First-order logic with two variables and unary temporal logic.
\newblock \emph{Information and Computation}, 179\penalty0 (2):\penalty0 279--295, 2002.
\newblock ISSN 0890-5401.
\newblock \doi{https://doi.org/10.1006/inco.2001.2953}.
\newblock URL \url{https://www.sciencedirect.com/science/article/pii/S0890540101929530}.

\bibitem[Fischer et~al.(1968)Fischer, Meyer, and Rosenberg]{Fischer1968CounterMA}
Patrick~C. Fischer, Albert~R. Meyer, and Arnold~L. Rosenberg.
\newblock Counter machines and counter languages.
\newblock \emph{Mathematical systems theory}, 2:\penalty0 265--283, 1968.
\newblock URL \url{https://link.springer.com/article/10.1007/BF01694011}.

\bibitem[Gabbay et~al.(1980)Gabbay, Pnueli, Shelah, and Stavi]{10.1145/567446.567462}
Dov Gabbay, Amir Pnueli, Saharon Shelah, and Jonathan Stavi.
\newblock On the temporal analysis of fairness.
\newblock POPL '80, page 163–173, New York, NY, USA, 1980. Association for Computing Machinery.
\newblock ISBN 0897910117.
\newblock \doi{10.1145/567446.567462}.
\newblock URL \url{https://doi.org/10.1145/567446.567462}.

\bibitem[Hahn(2020)]{hahn-2020-theoretical}
Michael Hahn.
\newblock Theoretical limitations of self-attention in neural sequence models.
\newblock \emph{Transactions of the Association for Computational Linguistics}, 8:\penalty0 156--171, 2020.
\newblock \doi{10.1162/tacl_a_00306}.
\newblock URL \url{https://aclanthology.org/2020.tacl-1.11}.

\bibitem[Hao et~al.(2022)Hao, Angluin, and Frank]{hao-etal-2022-formal}
Yiding Hao, Dana Angluin, and Robert Frank.
\newblock Formal language recognition by hard attention transformers: Perspectives from circuit complexity.
\newblock \emph{Transactions of the Association for Computational Linguistics}, 10:\penalty0 800--810, 2022.
\newblock \doi{10.1162/tacl_a_00490}.
\newblock URL \url{https://aclanthology.org/2022.tacl-1.46}.

\bibitem[Hendrycks et~al.(2021)Hendrycks, Burns, Basart, Zou, Mazeika, Song, and Steinhardt]{hendrycks2021measuring}
Dan Hendrycks, Collin Burns, Steven Basart, Andy Zou, Mantas Mazeika, Dawn Song, and Jacob Steinhardt.
\newblock Measuring massive multitask language understanding.
\newblock In \emph{International Conference on Learning Representations}, 2021.
\newblock URL \url{https://openreview.net/forum?id=d7KBjmI3GmQ}.

\bibitem[Hochreiter and Schmidhuber(1997)]{Hochreiter97}
Sepp Hochreiter and Jürgen Schmidhuber.
\newblock Long short-term memory.
\newblock \emph{Neural computation}, 9:\penalty0 1735--80, 12 1997.
\newblock \doi{10.1162/neco.1997.9.8.1735}.
\newblock URL \url{https://dl.acm.org/doi/10.1162/neco.1997.9.8.1735}.

\bibitem[Huang et~al.(2019)Huang, Vaswani, Uszkoreit, Simon, Hawthorne, Shazeer, Dai, Hoffman, Dinculescu, and Eck]{huang2018music}
Cheng-Zhi~Anna Huang, Ashish Vaswani, Jakob Uszkoreit, Ian Simon, Curtis Hawthorne, Noam Shazeer, Andrew~M. Dai, Matthew~D. Hoffman, Monica Dinculescu, and Douglas Eck.
\newblock Music transformer.
\newblock In \emph{The Seventh International Conference on Learning Representations}, 2019.
\newblock URL \url{https://openreview.net/forum?id=rJe4ShAcF7}.

\bibitem[Huang et~al.(2025)Huang, Yang, Bhattamishra, Sarrof, Krebs, Zhou, Nakkiran, and Hahn]{huang2025a}
Xinting Huang, Andy Yang, Satwik Bhattamishra, Yash Sarrof, Andreas Krebs, Hattie Zhou, Preetum Nakkiran, and Michael Hahn.
\newblock A formal framework for understanding length generalization in transformers.
\newblock In \emph{The Thirteenth International Conference on Learning Representations}, 2025.
\newblock URL \url{https://openreview.net/forum?id=U49N5V51rU}.

\bibitem[IEEE(2008)]{4610935}
IEEE.
\newblock Ieee standard for floating-point arithmetic.
\newblock \emph{IEEE Std 754-2008}, pages 1--70, 2008.
\newblock \doi{10.1109/IEEESTD.2008.4610935}.

\bibitem[Immerman and Kozen(1989)]{IMMERMAN1989121}
Neil Immerman and Dexter Kozen.
\newblock Definability with bounded number of bound variables.
\newblock \emph{Information and Computation}, 83\penalty0 (2):\penalty0 121--139, 1989.
\newblock ISSN 0890-5401.
\newblock \doi{https://doi.org/10.1016/0890-5401(89)90055-2}.
\newblock URL \url{https://www.sciencedirect.com/science/article/pii/0890540189900552}.

\bibitem[Kamp(1968)]{Kamp1968-KAMTLA-2}
Johan Anthony~Wilem Kamp.
\newblock \emph{Tense Logic and the Theory of Linear Order}.
\newblock PhD thesis, University of California, Los Angeles, 1968.
\newblock URL \url{https://www.proquest.com/docview/302320357}.

\bibitem[Kleene(1956)]{Kleene1956}
S.~C. Kleene.
\newblock \emph{Representation of Events in Nerve Nets and Finite Automata}, pages 3--42.
\newblock Princeton University Press, Princeton, 1956.
\newblock ISBN 9781400882618.
\newblock \doi{10.1515/9781400882618-002}.
\newblock URL \url{https://doi.org/10.1515/9781400882618-002}.

\bibitem[Li et~al.(2024)Li, Liu, Zhou, and Ma]{li2024chain}
Zhiyuan Li, Hong Liu, Denny Zhou, and Tengyu Ma.
\newblock Chain of thought empowers transformers to solve inherently serial problems.
\newblock In \emph{The Twelfth International Conference on Learning Representations}, 2024.
\newblock URL \url{https://openreview.net/forum?id=3EWTEy9MTM}.

\bibitem[McNaughton and Papert(1971)]{mcnaughton1971counter}
R.~McNaughton and S.~Papert.
\newblock \emph{Counter-free Automata}.
\newblock M.I.T. Press research monographs. M.I.T. Press, 1971.
\newblock ISBN 9780262130769.
\newblock URL \url{https://mitpress.mit.edu/9780262130769/counter-free-automata/}.

\bibitem[Merrill(2019)]{merrill-2019-sequential}
William Merrill.
\newblock Sequential neural networks as automata.
\newblock In Jason Eisner, Matthias Gall{\'e}, Jeffrey Heinz, Ariadna Quattoni, and Guillaume Rabusseau, editors, \emph{Proceedings of the Workshop on Deep Learning and Formal Languages: Building Bridges}, pages 1--13, Florence, August 2019. Association for Computational Linguistics.
\newblock \doi{10.18653/v1/W19-3901}.
\newblock URL \url{https://aclanthology.org/W19-3901}.

\bibitem[Merrill(2020)]{DBLP:journals/corr/abs-2004-06866}
William Merrill.
\newblock On the linguistic capacity of real-time counter automata.
\newblock \emph{Computing Research Repository}, arXiv:2004.06866, 2020.
\newblock URL \url{https://arxiv.org/abs/2004.06866}.
\newblock Version 2.

\bibitem[Merrill and Sabharwal(2023{\natexlab{a}})]{NEURIPS2023_a48e5877}
William Merrill and Ashish Sabharwal.
\newblock A logic for expressing log-precision transformers.
\newblock In A.~Oh, T.~Naumann, A.~Globerson, K.~Saenko, M.~Hardt, and S.~Levine, editors, \emph{Advances in Neural Information Processing Systems}, volume~36, pages 52453--52463. Curran Associates, Inc., 2023{\natexlab{a}}.
\newblock URL \url{https://proceedings.neurips.cc/paper_files/paper/2023/file/a48e5877c7bf86a513950ab23b360498-Paper-Conference.pdf}.

\bibitem[Merrill and Sabharwal(2023{\natexlab{b}})]{merrill-sabharwal-2023-parallelism}
William Merrill and Ashish Sabharwal.
\newblock The parallelism tradeoff: Limitations of log-precision transformers.
\newblock \emph{Transactions of the Association for Computational Linguistics}, 11:\penalty0 531--545, 2023{\natexlab{b}}.
\newblock \doi{10.1162/tacl_a_00562}.
\newblock URL \url{https://aclanthology.org/2023.tacl-1.31}.

\bibitem[Merrill and Sabharwal(2024)]{merrill2024the}
William Merrill and Ashish Sabharwal.
\newblock The expressive power of transformers with chain of thought.
\newblock In \emph{The Twelfth International Conference on Learning Representations}, 2024.
\newblock URL \url{https://openreview.net/forum?id=NjNGlPh8Wh}.

\bibitem[Merrill et~al.(2022)Merrill, Sabharwal, and Smith]{merrill-etal-2022-saturated}
William Merrill, Ashish Sabharwal, and Noah~A. Smith.
\newblock Saturated transformers are constant-depth threshold circuits.
\newblock \emph{Transactions of the Association for Computational Linguistics}, 10:\penalty0 843--856, 2022.
\newblock \doi{10.1162/tacl_a_00493}.
\newblock URL \url{https://aclanthology.org/2022.tacl-1.49}.

\bibitem[Nowak et~al.(2024)Nowak, Svete, Butoi, and Cotterell]{nowak-etal-2024-representational}
Franz Nowak, Anej Svete, Alexandra Butoi, and Ryan Cotterell.
\newblock On the representational capacity of neural language models with chain-of-thought reasoning.
\newblock In Lun-Wei Ku, Andre Martins, and Vivek Srikumar, editors, \emph{Proceedings of the 62nd Annual Meeting of the Association for Computational Linguistics (Volume 1: Long Papers)}, pages 12510--12548, Bangkok, Thailand, August 2024. Association for Computational Linguistics.
\newblock \doi{10.18653/v1/2024.acl-long.676}.
\newblock URL \url{https://aclanthology.org/2024.acl-long.676}.

\bibitem[OpenAI(2023)]{DBLP:journals/corr/abs-2303-08774}
OpenAI.
\newblock {GPT-4} technical report.
\newblock \emph{Computing Research Repository}, arXiv:2303.08774, 2023.
\newblock URL \url{https://doi.org/10.48550/arXiv.2303.08774}.

\bibitem[Pnueli(1977)]{pnueli1977}
Amir Pnueli.
\newblock The temporal logic of programs.
\newblock In \emph{18th Annual Symposium on Foundations of Computer Science (sfcs 1977)}, pages 46--57, 1977.
\newblock \doi{10.1109/SFCS.1977.32}.
\newblock URL \url{https://ieeexplore.ieee.org/document/4567924}.

\bibitem[Pérez et~al.(2019)Pérez, Marinković, and Barceló]{pérez2018on}
Jorge Pérez, Javier Marinković, and Pablo Barceló.
\newblock On the turing completeness of modern neural network architectures.
\newblock In \emph{The Seventh International Conference on Learning Representations}, 2019.
\newblock URL \url{https://openreview.net/forum?id=HyGBdo0qFm}.

\bibitem[Shaw et~al.(2018)Shaw, Uszkoreit, and Vaswani]{shaw-etal-2018-self}
Peter Shaw, Jakob Uszkoreit, and Ashish Vaswani.
\newblock Self-attention with relative position representations.
\newblock In Marilyn Walker, Heng Ji, and Amanda Stent, editors, \emph{Proceedings of the 2018 Conference of the North {A}merican Chapter of the Association for Computational Linguistics: Human Language Technologies, Volume 2 (Short Papers)}, pages 464--468, New Orleans, Louisiana, June 2018. Association for Computational Linguistics.
\newblock \doi{10.18653/v1/N18-2074}.
\newblock URL \url{https://aclanthology.org/N18-2074}.

\bibitem[Simon(1975)]{10.1007/3-540-07407-4_23}
Imre Simon.
\newblock Piecewise testable events.
\newblock In H.~Brakhage, editor, \emph{Automata Theory and Formal Languages}, pages 214--222, Berlin, Heidelberg, 1975. Springer Berlin Heidelberg.
\newblock ISBN 978-3-540-37923-2.
\newblock URL \url{https://link.springer.com/chapter/10.1007/3-540-07407-4_23}.

\bibitem[Srivastava et~al.(2023)Srivastava, Rastogi, Rao, Shoeb, Abid, Fisch, Brown, Santoro, Gupta, Garriga-Alonso, Kluska, Lewkowycz, Agarwal, Power, Ray, Warstadt, Kocurek, Safaya, Tazarv, Xiang, Parrish, Nie, Hussain, Askell, Dsouza, Slone, Rahane, Iyer, Andreassen, Madotto, Santilli, Stuhlm{\"u}ller, Dai, La, Lampinen, Zou, Jiang, Chen, Vuong, Gupta, Gottardi, Norelli, Venkatesh, Gholamidavoodi, Tabassum, Menezes, Kirubarajan, Mullokandov, Sabharwal, Herrick, Efrat, Erdem, Karaka{\c{s}}, Roberts, Loe, Zoph, Bojanowski, {\"O}zyurt, Hedayatnia, Neyshabur, Inden, Stein, Ekmekci, Lin, Howald, Orinion, Diao, Dour, Stinson, Argueta, Ferri, Singh, Rathkopf, Meng, Baral, Wu, Callison-Burch, Waites, Voigt, Manning, Potts, Ramirez, Rivera, Siro, Raffel, Ashcraft, Garbacea, Sileo, Garrette, Hendrycks, Kilman, Roth, Freeman, Khashabi, Levy, Gonz{\'a}lez, Perszyk, Hernandez, Chen, Ippolito, Gilboa, Dohan, Drakard, Jurgens, Datta, Ganguli, Emelin, Kleyko, Yuret, Chen, Tam, Hupkes, Misra, Buzan, Mollo, Yang, Lee, Schrader, Shutova, Cubuk, Segal, Hagerman, Barnes, Donoway, Pavlick, Rodol{\`a}, Lam, Chu, Tang, Erdem, Chang, Chi, Dyer, Jerzak, Kim, Manyasi, Zheltonozhskii, Xia, Siar, Mart{\'\i}nez-Plumed, Happ{\'e}, Chollet, Rong, Mishra, Winata, de~Melo, Kruszewski, Parascandolo, Mariani, Wang, Jaimovitch-Lopez, Betz, Gur-Ari, Galijasevic, Kim, Rashkin, Hajishirzi, Mehta, Bogar, Shevlin, Schuetze, Yakura, Zhang, Wong, Ng, Noble, Jumelet, Geissinger, Kernion, Hilton, Lee, Fisac, Simon, Koppel, Zheng, Zou, Kocon, Thompson, Wingfield, Kaplan, Radom, Sohl-Dickstein, Phang, Wei, Yosinski, Novikova, Bosscher, Marsh, Kim, Taal, Engel, Alabi, Xu, Song, Tang, Waweru, Burden, Miller, Balis, Batchelder, Berant, Frohberg, Rozen, Hernandez-Orallo, Boudeman, Guerr, Jones, Tenenbaum, Rule, Chua, Kanclerz, Livescu, Krauth, Gopalakrishnan, Ignatyeva, Markert, Dhole, Gimpel, Omondi, Mathewson, Chiafullo, Shkaruta, Shridhar, McDonell, Richardson, Reynolds, Gao, Zhang, Dugan, Qin, Contreras-Ochando, Morency, Moschella, Lam, Noble, Schmidt, He, Oliveros-Col{\'o}n, Metz, Senel, Bosma, Sap, Hoeve, Farooqi, Faruqui, Mazeika, Baturan, Marelli, Maru, Ramirez-Quintana, Tolkiehn, Giulianelli, Lewis, Potthast, Leavitt, Hagen, Schubert, Baitemirova, Arnaud, McElrath, Yee, Cohen, Gu, Ivanitskiy, Starritt, Strube, Sw{\k{e}}drowski, Bevilacqua, Yasunaga, Kale, Cain, Xu, Suzgun, Walker, Tiwari, Bansal, Aminnaseri, Geva, Gheini, T, Peng, Chi, Lee, Krakover, Cameron, Roberts, Doiron, Martinez, Nangia, Deckers, Muennighoff, Keskar, Iyer, Constant, Fiedel, Wen, Zhang, Agha, Elbaghdadi, Levy, Evans, Casares, Doshi, Fung, Liang, Vicol, Alipoormolabashi, Liao, Liang, Chang, Eckersley, Htut, Hwang, Mi{\l}kowski, Patil, Pezeshkpour, Oli, Mei, Lyu, Chen, Banjade, Rudolph, Gabriel, Habacker, Risco, Milli{\`e}re, Garg, Barnes, Saurous, Arakawa, Raymaekers, Frank, Sikand, Novak, Sitelew, Bras, Liu, Jacobs, Zhang, Salakhutdinov, Chi, Lee, Stovall, Teehan, Yang, Singh, Mohammad, Anand, Dillavou, Shleifer, Wiseman, Gruetter, Bowman, Schoenholz, Han, Kwatra, Rous, Ghazarian, Ghosh, Casey, Bischoff, Gehrmann, Schuster, Sadeghi, Hamdan, Zhou, Srivastava, Shi, Singh, Asaadi, Gu, Pachchigar, Toshniwal, Upadhyay, Debnath, Shakeri, Thormeyer, Melzi, Reddy, Makini, Lee, Torene, Hatwar, Dehaene, Divic, Ermon, Biderman, Lin, Prasad, Piantadosi, Shieber, Misherghi, Kiritchenko, Mishra, Linzen, Schuster, Li, Yu, Ali, Hashimoto, Wu, Desbordes, Rothschild, Phan, Wang, Nkinyili, Schick, Kornev, Tunduny, Gerstenberg, Chang, Neeraj, Khot, Shultz, Shaham, Misra, Demberg, Nyamai, Raunak, Ramasesh, vinay~uday prabhu, Padmakumar, Srikumar, Fedus, Saunders, Zhang, Vossen, Ren, Tong, Zhao, Wu, Shen, Yaghoobzadeh, Lakretz, Song, Bahri, Choi, Yang, Hao, Chen, Belinkov, Hou, Hou, Bai, Seid, Zhao, Wang, Wang, Wang, and Wu]{srivastava2023beyond}
Aarohi Srivastava, Abhinav Rastogi, Abhishek Rao, Abu Awal~Md Shoeb, Abubakar Abid, Adam Fisch, Adam~R. Brown, Adam Santoro, Aditya Gupta, Adri{\`a} Garriga-Alonso, Agnieszka Kluska, Aitor Lewkowycz, Akshat Agarwal, Alethea Power, Alex Ray, Alex Warstadt, Alexander~W. Kocurek, Ali Safaya, Ali Tazarv, Alice Xiang, Alicia Parrish, Allen Nie, Aman Hussain, Amanda Askell, Amanda Dsouza, Ambrose Slone, Ameet Rahane, Anantharaman~S. Iyer, Anders~Johan Andreassen, Andrea Madotto, Andrea Santilli, Andreas Stuhlm{\"u}ller, Andrew~M. Dai, Andrew La, Andrew~Kyle Lampinen, Andy Zou, Angela Jiang, Angelica Chen, Anh Vuong, Animesh Gupta, Anna Gottardi, Antonio Norelli, Anu Venkatesh, Arash Gholamidavoodi, Arfa Tabassum, Arul Menezes, Arun Kirubarajan, Asher Mullokandov, Ashish Sabharwal, Austin Herrick, Avia Efrat, Aykut Erdem, Ayla Karaka{\c{s}}, B.~Ryan Roberts, Bao~Sheng Loe, Barret Zoph, Bart{\l}omiej Bojanowski, Batuhan {\"O}zyurt, Behnam Hedayatnia, Behnam Neyshabur, Benjamin Inden, Benno Stein, Berk Ekmekci, Bill~Yuchen Lin, Blake Howald, Bryan Orinion, Cameron Diao, Cameron Dour, Catherine Stinson, Cedrick Argueta, Cesar Ferri, Chandan Singh, Charles Rathkopf, Chenlin Meng, Chitta Baral, Chiyu Wu, Chris Callison-Burch, Christopher Waites, Christian Voigt, Christopher~D Manning, Christopher Potts, Cindy Ramirez, Clara~E. Rivera, Clemencia Siro, Colin Raffel, Courtney Ashcraft, Cristina Garbacea, Damien Sileo, Dan Garrette, Dan Hendrycks, Dan Kilman, Dan Roth, C.~Daniel Freeman, Daniel Khashabi, Daniel Levy, Daniel~Mosegu{\'\i} Gonz{\'a}lez, Danielle Perszyk, Danny Hernandez, Danqi Chen, Daphne Ippolito, Dar Gilboa, David Dohan, David Drakard, David Jurgens, Debajyoti Datta, Deep Ganguli, Denis Emelin, Denis Kleyko, Deniz Yuret, Derek Chen, Derek Tam, Dieuwke Hupkes, Diganta Misra, Dilyar Buzan, Dimitri~Coelho Mollo, Diyi Yang, Dong-Ho Lee, Dylan Schrader, Ekaterina Shutova, Ekin~Dogus Cubuk, Elad Segal, Eleanor Hagerman, Elizabeth Barnes, Elizabeth Donoway, Ellie Pavlick, Emanuele Rodol{\`a}, Emma Lam, Eric Chu, Eric Tang, Erkut Erdem, Ernie Chang, Ethan~A Chi, Ethan Dyer, Ethan Jerzak, Ethan Kim, Eunice~Engefu Manyasi, Evgenii Zheltonozhskii, Fanyue Xia, Fatemeh Siar, Fernando Mart{\'\i}nez-Plumed, Francesca Happ{\'e}, Francois Chollet, Frieda Rong, Gaurav Mishra, Genta~Indra Winata, Gerard de~Melo, Germ{\`a}n Kruszewski, Giambattista Parascandolo, Giorgio Mariani, Gloria~Xinyue Wang, Gonzalo Jaimovitch-Lopez, Gregor Betz, Guy Gur-Ari, Hana Galijasevic, Hannah Kim, Hannah Rashkin, Hannaneh Hajishirzi, Harsh Mehta, Hayden Bogar, Henry Francis~Anthony Shevlin, Hinrich Schuetze, Hiromu Yakura, Hongming Zhang, Hugh~Mee Wong, Ian Ng, Isaac Noble, Jaap Jumelet, Jack Geissinger, Jackson Kernion, Jacob Hilton, Jaehoon Lee, Jaime~Fern{\'a}ndez Fisac, James~B Simon, James Koppel, James Zheng, James Zou, Jan Kocon, Jana Thompson, Janelle Wingfield, Jared Kaplan, Jarema Radom, Jascha Sohl-Dickstein, Jason Phang, Jason Wei, Jason Yosinski, Jekaterina Novikova, Jelle Bosscher, Jennifer Marsh, Jeremy Kim, Jeroen Taal, Jesse Engel, Jesujoba Alabi, Jiacheng Xu, Jiaming Song, Jillian Tang, Joan Waweru, John Burden, John Miller, John~U. Balis, Jonathan Batchelder, Jonathan Berant, J{\"o}rg Frohberg, Jos Rozen, Jose Hernandez-Orallo, Joseph Boudeman, Joseph Guerr, Joseph Jones, Joshua~B. Tenenbaum, Joshua~S. Rule, Joyce Chua, Kamil Kanclerz, Karen Livescu, Karl Krauth, Karthik Gopalakrishnan, Katerina Ignatyeva, Katja Markert, Kaustubh Dhole, Kevin Gimpel, Kevin Omondi, Kory~Wallace Mathewson, Kristen Chiafullo, Ksenia Shkaruta, Kumar Shridhar, Kyle McDonell, Kyle Richardson, Laria Reynolds, Leo Gao, Li~Zhang, Liam Dugan, Lianhui Qin, Lidia Contreras-Ochando, Louis-Philippe Morency, Luca Moschella, Lucas Lam, Lucy Noble, Ludwig Schmidt, Luheng He, Luis Oliveros-Col{\'o}n, Luke Metz, L{\"u}tfi~Kerem Senel, Maarten Bosma, Maarten Sap, Maartje~Ter Hoeve, Maheen Farooqi, Manaal Faruqui, Mantas Mazeika, Marco Baturan, Marco Marelli, Marco Maru, Maria~Jose Ramirez-Quintana, Marie Tolkiehn, Mario Giulianelli, Martha Lewis, Martin Potthast, Matthew~L Leavitt, Matthias Hagen, M{\'a}ty{\'a}s Schubert, Medina~Orduna Baitemirova, Melody Arnaud, Melvin McElrath, Michael~Andrew Yee, Michael Cohen, Michael Gu, Michael Ivanitskiy, Michael Starritt, Michael Strube, Micha{\l} Sw{\k{e}}drowski, Michele Bevilacqua, Michihiro Yasunaga, Mihir Kale, Mike Cain, Mimee Xu, Mirac Suzgun, Mitch Walker, Mo~Tiwari, Mohit Bansal, Moin Aminnaseri, Mor Geva, Mozhdeh Gheini, Mukund~Varma T, Nanyun Peng, Nathan~Andrew Chi, Nayeon Lee, Neta Gur-Ari Krakover, Nicholas Cameron, Nicholas Roberts, Nick Doiron, Nicole Martinez, Nikita Nangia, Niklas Deckers, Niklas Muennighoff, Nitish~Shirish Keskar, Niveditha~S. Iyer, Noah Constant, Noah Fiedel, Nuan Wen, Oliver Zhang, Omar Agha, Omar Elbaghdadi, Omer Levy, Owain Evans, Pablo Antonio~Moreno Casares, Parth Doshi, Pascale Fung, Paul~Pu Liang, Paul Vicol, Pegah Alipoormolabashi, Peiyuan Liao, Percy Liang, Peter~W Chang, Peter Eckersley, Phu~Mon Htut, Pinyu Hwang, Piotr Mi{\l}kowski, Piyush Patil, Pouya Pezeshkpour, Priti Oli, Qiaozhu Mei, Qing Lyu, Qinlang Chen, Rabin Banjade, Rachel~Etta Rudolph, Raefer Gabriel, Rahel Habacker, Ramon Risco, Rapha{\"e}l Milli{\`e}re, Rhythm Garg, Richard Barnes, Rif~A. Saurous, Riku Arakawa, Robbe Raymaekers, Robert Frank, Rohan Sikand, Roman Novak, Roman Sitelew, Ronan~Le Bras, Rosanne Liu, Rowan Jacobs, Rui Zhang, Russ Salakhutdinov, Ryan~Andrew Chi, Seungjae~Ryan Lee, Ryan Stovall, Ryan Teehan, Rylan Yang, Sahib Singh, Saif~M. Mohammad, Sajant Anand, Sam Dillavou, Sam Shleifer, Sam Wiseman, Samuel Gruetter, Samuel~R. Bowman, Samuel~Stern Schoenholz, Sanghyun Han, Sanjeev Kwatra, Sarah~A. Rous, Sarik Ghazarian, Sayan Ghosh, Sean Casey, Sebastian Bischoff, Sebastian Gehrmann, Sebastian Schuster, Sepideh Sadeghi, Shadi Hamdan, Sharon Zhou, Shashank Srivastava, Sherry Shi, Shikhar Singh, Shima Asaadi, Shixiang~Shane Gu, Shubh Pachchigar, Shubham Toshniwal, Shyam Upadhyay, Shyamolima~Shammie Debnath, Siamak Shakeri, Simon Thormeyer, Simone Melzi, Siva Reddy, Sneha~Priscilla Makini, Soo-Hwan Lee, Spencer Torene, Sriharsha Hatwar, Stanislas Dehaene, Stefan Divic, Stefano Ermon, Stella Biderman, Stephanie Lin, Stephen Prasad, Steven Piantadosi, Stuart Shieber, Summer Misherghi, Svetlana Kiritchenko, Swaroop Mishra, Tal Linzen, Tal Schuster, Tao Li, Tao Yu, Tariq Ali, Tatsunori Hashimoto, Te-Lin Wu, Th{\'e}o Desbordes, Theodore Rothschild, Thomas Phan, Tianle Wang, Tiberius Nkinyili, Timo Schick, Timofei Kornev, Titus Tunduny, Tobias Gerstenberg, Trenton Chang, Trishala Neeraj, Tushar Khot, Tyler Shultz, Uri Shaham, Vedant Misra, Vera Demberg, Victoria Nyamai, Vikas Raunak, Vinay~Venkatesh Ramasesh, vinay~uday prabhu, Vishakh Padmakumar, Vivek Srikumar, William Fedus, William Saunders, William Zhang, Wout Vossen, Xiang Ren, Xiaoyu Tong, Xinran Zhao, Xinyi Wu, Xudong Shen, Yadollah Yaghoobzadeh, Yair Lakretz, Yangqiu Song, Yasaman Bahri, Yejin Choi, Yichi Yang, Sophie Hao, Yifu Chen, Yonatan Belinkov, Yu~Hou, Yufang Hou, Yuntao Bai, Zachary Seid, Zhuoye Zhao, Zijian Wang, Zijie~J. Wang, Zirui Wang, and Ziyi Wu.
\newblock Beyond the imitation game: Quantifying and extrapolating the capabilities of language models.
\newblock \emph{Transactions on Machine Learning Research}, 2023.
\newblock ISSN 2835-8856.
\newblock URL \url{https://openreview.net/forum?id=uyTL5Bvosj}.
\newblock Featured Certification.

\bibitem[Strobl et~al.(2024)Strobl, Merrill, Weiss, Chiang, and Angluin]{10.1162/tacl_a_00663}
Lena Strobl, William Merrill, Gail Weiss, David Chiang, and Dana Angluin.
\newblock What formal languages can transformers express? a survey.
\newblock \emph{Transactions of the Association for Computational Linguistics}, 12:\penalty0 543--561, 05 2024.
\newblock ISSN 2307-387X.
\newblock \doi{10.1162/tacl_a_00663}.
\newblock URL \url{https://doi.org/10.1162/tacl\_a\_00663}.

\bibitem[Su et~al.(2024)Su, Ahmed, Lu, Pan, Bo, and Liu]{SU2024127063}
Jianlin Su, Murtadha Ahmed, Yu~Lu, Shengfeng Pan, Wen Bo, and Yunfeng Liu.
\newblock Roformer: Enhanced transformer with rotary position embedding.
\newblock \emph{Neurocomputing}, 568:\penalty0 127063, 2024.
\newblock ISSN 0925-2312.
\newblock \doi{https://doi.org/10.1016/j.neucom.2023.127063}.
\newblock URL \url{https://www.sciencedirect.com/science/article/pii/S0925231223011864}.

\bibitem[Svete and Cotterell(2024)]{svete-cotterell-2024-transformers}
Anej Svete and Ryan Cotterell.
\newblock Transformers can represent $n$-gram language models.
\newblock In Kevin Duh, Helena Gomez, and Steven Bethard, editors, \emph{Proceedings of the 2024 Conference of the North American Chapter of the Association for Computational Linguistics: Human Language Technologies (Volume 1: Long Papers)}, pages 6845--6881, Mexico City, Mexico, June 2024. Association for Computational Linguistics.
\newblock \doi{10.18653/v1/2024.naacl-long.381}.
\newblock URL \url{https://aclanthology.org/2024.naacl-long.381/}.

\bibitem[Tesson and Thérien(2002)]{doi:10.1142/9789812776884_0021}
Pascal Tesson and Denis Thérien.
\newblock \emph{Diamonds are forever: The variety DA}, pages 475--499.
\newblock 2002.
\newblock \doi{10.1142/9789812776884_0021}.
\newblock URL \url{https://www.worldscientific.com/doi/abs/10.1142/9789812776884_0021}.

\bibitem[Vaswani et~al.(2017)Vaswani, Shazeer, Parmar, Uszkoreit, Jones, Gomez, Kaiser, and Polosukhin]{NIPS2017_3f5ee243}
Ashish Vaswani, Noam Shazeer, Niki Parmar, Jakob Uszkoreit, Llion Jones, Aidan~N Gomez, \L~ukasz Kaiser, and Illia Polosukhin.
\newblock Attention is all you need.
\newblock In I.~Guyon, U.~Von Luxburg, S.~Bengio, H.~Wallach, R.~Fergus, S.~Vishwanathan, and R.~Garnett, editors, \emph{Advances in Neural Information Processing Systems}, volume~30. Curran Associates, Inc., 2017.
\newblock URL \url{https://proceedings.neurips.cc/paper_files/paper/2017/file/3f5ee243547dee91fbd053c1c4a845aa-Paper.pdf}.

\bibitem[Wang et~al.(2019)Wang, Pruksachatkun, Nangia, Singh, Michael, Hill, Levy, and Bowman]{NEURIPS2019_4496bf24}
Alex Wang, Yada Pruksachatkun, Nikita Nangia, Amanpreet Singh, Julian Michael, Felix Hill, Omer Levy, and Samuel Bowman.
\newblock Superglue: A stickier benchmark for general-purpose language understanding systems.
\newblock In H.~Wallach, H.~Larochelle, A.~Beygelzimer, F.~d\textquotesingle Alch\'{e}-Buc, E.~Fox, and R.~Garnett, editors, \emph{Advances in Neural Information Processing Systems}, volume~32. Curran Associates, Inc., 2019.
\newblock URL \url{https://proceedings.neurips.cc/paper_files/paper/2019/file/4496bf24afe7fab6f046bf4923da8de6-Paper.pdf}.

\bibitem[Weiss et~al.(2018)Weiss, Goldberg, and Yahav]{weiss-etal-2018-practical}
Gail Weiss, Yoav Goldberg, and Eran Yahav.
\newblock On the practical computational power of finite precision {RNN}s for language recognition.
\newblock In Iryna Gurevych and Yusuke Miyao, editors, \emph{Proceedings of the 56th Annual Meeting of the Association for Computational Linguistics (Volume 2: Short Papers)}, pages 740--745, Melbourne, Australia, July 2018. Association for Computational Linguistics.
\newblock \doi{10.18653/v1/P18-2117}.
\newblock URL \url{https://aclanthology.org/P18-2117}.

\bibitem[Yang and Chiang(2024)]{yang2024countingliketransformerscompiling}
Andy Yang and David Chiang.
\newblock Counting like transformers: Compiling temporal counting logic into softmax transformers.
\newblock In \emph{First Conference on Language Modeling}, 2024.
\newblock URL \url{https://openreview.net/forum?id=FmhPg4UJ9K}.

\bibitem[Yang et~al.(2024)Yang, Chiang, and Angluin]{yang2024masked}
Andy Yang, David Chiang, and Dana Angluin.
\newblock Masked hard-attention transformers recognize exactly the star-free languages.
\newblock \emph{Computing Research Repository}, arXiv:2310.13897, 2024.
\newblock URL \url{https://arxiv.org/abs/2310.13897}.
\newblock Version 3.

\bibitem[Yao et~al.(2021)Yao, Peng, Papadimitriou, and Narasimhan]{yao-etal-2021-self}
Shunyu Yao, Binghui Peng, Christos Papadimitriou, and Karthik Narasimhan.
\newblock Self-attention networks can process bounded hierarchical languages.
\newblock In Chengqing Zong, Fei Xia, Wenjie Li, and Roberto Navigli, editors, \emph{Proceedings of the 59th Annual Meeting of the Association for Computational Linguistics and the 11th International Joint Conference on Natural Language Processing (Volume 1: Long Papers)}, pages 3770--3785, Online, August 2021. Association for Computational Linguistics.
\newblock \doi{10.18653/v1/2021.acl-long.292}.
\newblock URL \url{https://aclanthology.org/2021.acl-long.292}.

\end{thebibliography}
\bibliographystyle{plainnat}

\appendix

\section{Background}
In this section, we provide background information that was omitted from the main text for brevity. We include it here for the sake of completeness.

\subsection{Temporal logic}
Temporal logic \citep{pnueli1977} is a special case of modal logic, with modalities referring to time. 
\label{app:tl}
\subsubsection{$\protect\ltl$}
Linear temporal logic $\ltl$ includes four temporal operators: $\past$ (\texttt{past}), $\future$ (\texttt{future}), $\since$ (\texttt{since}), and $\until$ (\texttt{until}). Notably, $\past$, $\future$, and one of $\since$ or $\until$ can be omitted without loss of expressive power \citep{10.1145/567446.567462}. 
We write $\true$ to denote $\TRUE$ and $\false$ to denote $\FALSE$. The semantics of the formulas are defined inductively as follows:
\begin{itemize}
    \item $\str,n\models \atom_\syma$ iff the $n^{\text{th}}$ symbol of $\str$ is an $\syma$. 
    \item $\str,n\models \tlf_1 \land \tlf_2$ iff $\str,n\models \tlf_1$ and $\str,n\models \tlf_2$. 
    \item $\str,n\models \lnot \tlf$ iff $\str,n \not\models \tlf$.
    \item $\str,n\models \past \tlf$ iff there exists $m$ such that $m<n$ and $\str, m\models \tlf$. 
    \item $\str,n\models \future \tlf$ iff there exists $m$ such that $m>n$ and $\str, m\models \tlf$.
    \item $\str,n\models \tlf_1 \since \tlf_2$ iff there exists $m$ such that $m<n$, $\str,m\models\tlf_2$, and for every $k$ such that $m<k<n$, $\str, k\models \tlf_1$.
    \item $\str,n\models \tlf_1 \until \tlf_2$ iff there exists $m$ such that $m>n$, $\str,m\models\tlf_2$, and for every $k$ such that $n<k<m$, $\str, k\models \tlf_1$.
\end{itemize}

\begin{testexample}
    \begin{itemize}[leftmargin=*]
        \item $\syma\symb\symb,1 \models \atom_\syma$ as the first symbol is $\syma$.
        \item $\syma\symb\symb,3 \models \past\atom_\syma$ as an $\syma$ occurs before position 3.
        \item $\syma\symb\symb,3 \models \atom_\symb\since \atom_\syma$ as $\atom_\syma$ holds at position 1 (before position 3), and $\atom_\symb$ holds at every position between 1 and 3.
    \end{itemize}
\end{testexample}

To say a string $\str$ of length $\length$ satisfies a formula $\tlf$, written as $\str\models\tlf$, we imagine we start at a position outside the string, e.g., position $0$ or $\length+1$.

\begin{testexample}
    \begin{itemize}[leftmargin=*]
        \item $\syma\symb\symb\models\atom_\symb\since\atom_\syma$ as the formula is satisfied at position $4$, outside the string.
        \item $\syma\symb\symb\models\atom_\syma\until\atom_\symb$ as the formula is satisfied at position $0$, outside the string.
    \end{itemize}
\end{testexample}

The language defined by a formula $\tlf$ is the set of all strings that satisfy $\tlf$, denoted by $\lang(\tlf)$:
\begin{equation}
    \lang(\tlf) \defeq \{\str\in\kleene{\alphabet} \mid \str\models\tlf\}.
\end{equation}

\begin{testexample}
    \begin{subequations}
    \begin{align}
        \lang\left(\atom_\symb\since\atom_\syma\right) &= \kleene{\alphabet}\syma\kleene{\symb} \\
        \lang\left(\atom_\syma\until\atom_\symb\right) &= \kleene{\syma}\symb\kleene{\alphabet}
    \end{align}
    \end{subequations}
\end{testexample}

We define the \defn{operator depth} of a formula $\tlf$, denoted $\od(\tlf)$, as the maximum number of nested temporal operators in $\tlf$. It is defined recursively as follows:
\begin{equation*}
    \begin{aligned}
        \od(\atom_\syma) &= 0 \\
        \od(\tlf_1\land\tlf_2) &= \max(\od(\tlf_1),\od(\tlf_2)) \\
        \od(\lnot\tlf) &= \od(\tlf) \\
        \od(\past\tlf) &= \od(\tlf) + 1 \\
        \od(\future\tlf) &= \od(\tlf) + 1 \\
        \od(\tlf_1\since\tlf_2) &= \max(\od(\tlf_1),\od(\tlf_2)) + 1 \\
        \od(\tlf_1\until\tlf_2) &= \max(\od(\tlf_1),\od(\tlf_2)) + 1
    \end{aligned}
\end{equation*}

\subsubsection{$\protect\tl$}
Unary temporal logic $\tl$ is the fragment of $\ltl$ that excludes the binary operators $\since$ and $\until$. This restriction renders $\tl$ strictly less expressive than $\ltl$ \citep{561310, ETESSAMI2002279}.
\begin{testexample}
    Consider the language consisting of all words that begin with $\syma$ and end with $\symb$. To enforce that a word starts with $\syma$, we require that $\syma$ occurs at the beginning of the word, i.e., it is preceded by nothing. This can be expressed as $\past(\atom_\syma \land \lnot \past\true)$. Symmetrically, ending with $\symb$ can be expressed using the dual condition involving $\future$. Thus, the formula
    \begin{equation}
    \lang(\past(\atom_\syma \land \lnot \past\true) \land \past(\atom_\symb \land \lnot \future\true)) = \syma\kleene{\alphabet}\symb
    \end{equation}
    defines the desired language.
\end{testexample}

\subsubsection{$\ptl$}
We define $\ptl$ as the past fragment of $\tl$, which excludes the $\future$ operator. 
\begin{proposition}
    $\ptl$ is strictly less expressive than $\tl$. 
\end{proposition}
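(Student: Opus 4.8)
The plan is to prove the inclusion and the strictness separately. The inclusion $\ptl \subseteq \tl$ is immediate: $\ptl$ is by construction a syntactic fragment of $\tl$ (it forbids the operator $\future$) and the two logics share the same semantic clauses, so every $\ptl$-definable language is $\tl$-definable. It thus remains to produce a single language that is $\tl$-definable but not $\ptl$-definable.

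For the witness I would take $\alphabet = \{\syma,\symb\}$ and $\lang = \syma\kleene{\alphabet}\symb$, the strings that begin with $\syma$ and end with $\symb$. That $\lang$ is $\tl$-definable is exactly the example worked out in \cref{app:tl}: $\lang = \lang\bigl(\past(\atom_\syma \land \lnot\past\true)\land\past(\atom_\symb\land\lnot\future\true)\bigr)$, where $\future$ is used precisely to pin down the last position of the string.

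To show $\lang \notin \ptl$ I would invoke \cref{thm:ptl} and verify that the minimal DFA of $\lang$ is not partially ordered. A short computation shows this DFA has four reachable, pairwise-inequivalent states: the initial state $s_0$; a state $s_1$ reached after an initial $\syma$ followed by a (possibly empty) block whose last symbol is $\syma$; the accepting state $s_2$, reached exactly when the current prefix begins with $\syma$ and ends with $\symb$; and a dead state $s_d$, entered after an initial $\symb$. Since $s_1 \xrightarrow{\symb} s_2$ and $s_2 \xrightarrow{\syma} s_1$, any partial order $\preceq$ with $\stateq \preceq \trans(\stateq,\syma)$ for all $\stateq,\syma$ would force $s_1 \preceq s_2 \preceq s_1$, hence $s_1 = s_2$ by antisymmetry, contradicting the fact that $\varepsilon$ distinguishes $s_1$ from $s_2$ (only $s_2$ is final). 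So the minimal DFA is not partially ordered, whence $\lang\notin\ptl$ by \cref{thm:ptl}; combined with $\lang\in\tl$ this shows $\ptl$ is strictly less expressive than $\tl$.

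The only step that needs care is the structural claim about the minimal DFA---checking reachability and pairwise inequivalence of its four states---which is routine bookkeeping; equivalently, one can observe that the syntactic monoid of $\lang$ fails to be $\gR$-trivial and appeal to the same theorem. The argument would become genuinely harder only without \cref{thm:ptl}: one would then need a direct model-theoretic separation, e.g.\ an Ehrenfeucht--Fra{\"i}ss{\'e} game for $\ptl$ in which the challenger may only move strictly leftward, showing that no $\ptl$ formula of bounded operator depth can decide whether the final symbol is $\symb$---the step that truly uses the absence of any future-directed modality in $\ptl$.
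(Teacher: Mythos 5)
Your proof is correct, and its skeleton matches the paper's: both note that $\ptl$ is a syntactic fragment of $\tl$ and both separate the two logics with the same witness $\syma\kleene{\alphabet}\symb$, whose $\tl$-definability comes from the formula $\past(\atom_\syma \land \lnot\past\true)\land\past(\atom_\symb\land\lnot\future\true)$. Where you diverge is the crucial step, non-definability of the witness in $\ptl$: the paper settles this with a one-line informal remark (``expressing the end of a string requires the ability to look forward, which $\ptl$ lacks''), whereas you derive it from \cref{thm:ptl} by computing the minimal DFA and exhibiting the cycle $s_1 \xrightarrow{\symb} s_2 \xrightarrow{\syma} s_1$, which kills any partial order compatible with the transitions. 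Your DFA analysis is accurate (four pairwise-inequivalent states, with $s_1$ and $s_2$ distinguished by $\varepsilon$), and the appeal to \cref{thm:ptl} is legitimate and non-circular: the direction you need, $\ptl$-definable $\Rightarrow$ p.o.\ minimal DFA, follows from \cref{lemma:ptl2rt} together with Brzozowski's equivalences, none of which use this proposition. So your argument buys rigor that the paper's own proof lacks, at the price of invoking the heavier machinery of \cref{app:characterizations}; the paper's version is self-contained within the background appendix but, taken literally, only gestures at why no $\ptl$ formula can detect the last symbol (your closing remark about an Ehrenfeucht--Fra{\"i}ss{\'e}-style direct argument is exactly what a from-scratch proof of that gesture would require, and the paper never supplies it). Your alternative observation that the syntactic monoid fails $\gR$-triviality would also work, though you do not carry it out.
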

\begin{proof}
    Since $\ptl$ is a syntactic fragment of $\tl$, every formula expressible in $\ptl$ is also expressible in $\tl$. To establish strictness, it suffices to exhibit a language definable in $\tl$ that is not definable in $\ptl$. Consider again the language $\syma\kleene{\alphabet}\symb$. It is not definable in $\ptl$, as expressing the end of a string requires the ability to look forward, which $\ptl$ lacks.
\end{proof}

\subsection{First-order logic}
The seminal work of \citet{Büchi1990} demonstrated how properties of languages can be described using logical formulas, while \citet{mcnaughton1971counter} was the first to restrict these formulas to first-order logic ($\fo$).
\label{app:fo}
\subsubsection{$\fo$}
The atomic formulas of $\fo$ consist of unary predicates $\atom_\syma$ for each $\syma\in\alphabet$ and a binary numerical predicate $<$ that can be used to determine the sequential order between variables. 
Formulas in $\fo$ may use arbitrarily many variables. However, it has been shown that three variables are both necessary and sufficient to define all first-order definable languages \citep{Kamp1968-KAMTLA-2, IMMERMAN1989121}.
In addition to the usual Boolean connectives $\land$ and $\lnot$, there is an \defn{existential quantifier} $\exists$.
The formulas in $\fo$ are constructed inductively as follows:
\begin{itemize}
    \item Every atomic formula is a formula;
    \item A Boolean combination of formulas is a formula;
    \item If $\fof(x,\ldots)$ is a formula, so is $\exists x: \fof(x,\ldots)$;
\end{itemize}
The \defn{universal quantifier} $\forall$ can be defined as follows:
\begin{equation}
    \forall x: \fof(x,\ldots) \defeq \lnot \exists x: \lnot \fof(x,\ldots).
\end{equation}
A formula with no free variables is called a \defn{sentence}. We write $\str\models\fof$ if $\fof$ is satisfied for $\str$ under the interpretation, where variables range over the positions in $\str$, unary predicates $\atom_\syma(x)$ hold if the symbol at position $x$ is $\syma$, and $<$ is interpreted as the natural order on positions.
A sentence $\fof$ can define a language, the set of strings over $\alphabet$ satisfying it, denoted as $\lang(\fof)$.\looseness=-1 

\begin{testexample}
  The formula below defines the language $\syma\kleene{\alphabet}\symb$:
  \begin{equation}
    \label{eg:fo}
    \exists x \forall y \exists z: (\atom_\syma(x)\land x\leq y\land y\leq z\land \atom_\symb(z))
  \end{equation}
  where $x \leq y$ is shorthand for $x < y \lor x = y$. 
\end{testexample}

The \defn{quantifier depth} of a formula $\fof$, denoted $\qd(\fof)$, is defined recursively as follows:
\begin{equation*}
    \begin{aligned}
        \qd(\atom_\syma) &= 0 \\
        \qd(\fof_1\land\fof_2) &= \max(\qd(\fof_1),\qd(\fof_2)) \\
        \qd(\lnot\fof) &= \qd(\fof) \\
        \qd(\exists x:\fof(x,\ldots)) &= \qd(\fof) + 1 \\
    \end{aligned}
\end{equation*}

It is well known that $\fo$ and $\ltl$ are equivalent in expressive power \citep{Kamp1968-KAMTLA-2} and they both define exactly the class of star-free languages \citep{mcnaughton1971counter}.

\subsubsection{$\fo[2]$}
$\fo[2]$ is the fragment of $\fo$ restricted to using only two distinct variables (typically reused via quantification).
\begin{testexample}
  The formula \cref{eg:fo} can be written as a formula in $\fo[2]$ as follows: 
  \begin{equation}
    \exists x: \left(\atom_\syma(x)\land \forall y:\left(x\leq y\land \exists x:(y\leq x\land \atom_\symb(x))\right)\right),
  \end{equation}
  which uses only two distinct variables $x$ and $y$. 
\end{testexample}

It is shown that $\tl$ and $\fo[2]$ recognize precisely the same languages \citep{ETESSAMI2002279}.

\subsubsection{$\pfo$}
\label{app:pfo}
We define $\pfo$ as the past fragment of $\fo[2]$, in which formulas are restricted so that whenever a free variable $x$ is present, every existential quantifier must be of the form $\exists y < x$.. This ensures that all quantification is limited to positions at or before $x$, i.e., only past positions relative to the free variable are accessed.

$\pfo$ has precisely the same expressive power as $\ptl$. We establish this equivalence by providing translations in both directions.
\begin{lemma}
  \label{lemma:ptl2pfo}
  Every formula $\tlf$ in $\ptl$ can be translated into an equivalent single-variable formula $\fof(x)$ in $\pfo$.
\end{lemma}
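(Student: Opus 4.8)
The plan is a structural induction on $\tlf$, refined so as to respect the two-variable, future-free restrictions of $\pfo$ without incurring variable capture. Rather than a single translation I would define \emph{two} mutually recursive maps $T_x$ and $T_y$, where $T_x(\tlf)$ is a $\pfo$ formula with $x$ as its only free variable and $T_y(\tlf)$ is its mirror image, with $y$ as its only free variable. Concretely: $T_x(\atom_\syma) \defeq \atom_\syma(x)$; $T_x(\tlf_1 \land \tlf_2) \defeq T_x(\tlf_1) \land T_x(\tlf_2)$; $T_x(\lnot\tlf) \defeq \lnot\,T_x(\tlf)$; and, the only nontrivial clause, $T_x(\past\tlf) \defeq \exists y < x \colon T_y(\tlf)$. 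The map $T_y$ is defined identically with the roles of $x$ and $y$ exchanged, so in particular $T_y(\past\tlf) \defeq \exists x < y \colon T_x(\tlf)$. The formula claimed by the lemma is then $\fof(x) \defeq T_x(\tlf)$.

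The first step is to check, by a routine induction, that $T_x(\tlf)$ and $T_y(\tlf)$ are well-formed $\pfo$ formulas and that $T_x(\tlf)$ has exactly one free variable, $x$ (symmetrically for $T_y$). The only clause needing justification is the $\past$ clause: since $T_y(\tlf)$ has $y$ as its sole free variable, the $\pfo$ formation rule ``if $\fof(x)$ is a formula with one variable $x$, then $\exists x < y \colon \fof(x)$ is a formula'' (applied with the two variable names swapped) makes $\exists y < x \colon T_y(\tlf)$ a legal $\pfo$ formula with free variable $x$. Carrying both $T_x$ and $T_y$ is exactly what makes this safe: in $\exists y < x \colon T_y(\tlf)$ the bound variable $y$ is the outer variable of $T_y(\tlf)$, while $x$ occurs in $T_y(\tlf)$ only inside already-quantified subformulas, so no capture can occur; a single translation that renamed the free variable on the fly would break here.

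The second step is the semantic correspondence, proved by simultaneous induction on $\tlf$: for every string $\str = \sym_1\cdots\sym_\length$ and every position $n \in \{1,\dots,\length\}$,
\begin{equation}
    \str, n \models \tlf \quad\Longleftrightarrow\quad \str, n \models T_x(\tlf) \quad\Longleftrightarrow\quad \str, n \models T_y(\tlf),
\end{equation}
where, for a $\pfo$ formula with a single free variable, $\str, n \models \fof$ abbreviates ``$\fof$ holds in $\str$ with its free variable interpreted as $n$''. The atomic, $\land$, and $\lnot$ cases are immediate by unfolding both semantics. The $\past$ case is the only one with content: $\str, n \models \past\tlf$ holds iff some $m < n$ satisfies $\str, m \models \tlf$, which by the induction hypothesis for $T_y$ holds iff some $m < n$ has $T_y(\tlf)$ true at $y\mapsto m$, which is precisely the truth condition of $\exists y < x \colon T_y(\tlf) = T_x(\past\tlf)$ at $x \mapsto n$; the statement for $T_y(\past\tlf)$ is obtained by exchanging $x$ and $y$. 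Taking $\fof(x) = T_x(\tlf)$ then establishes the lemma.

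I expect no deep obstacle: the only delicate point is the bookkeeping just described, namely that every quantifier introduced must have the permitted bounded shape $\exists y < x$ (never $\exists y \geq x$, never a third variable), and that the alternating $T_x/T_y$ scheme is what both enforces this and prevents variable capture in nested occurrences of $\past$. Everything else is a mechanical unfolding of the two families of semantic clauses.
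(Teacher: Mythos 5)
Your proposal is correct and takes essentially the same route as the paper: a structural induction whose only substantive clause translates $\past\tlf$ to $\exists y<x\colon\fof_1(y)$, with the two variables reused at each nesting level. Your explicit $T_x/T_y$ mutual recursion is just a more careful bookkeeping of the variable reuse that the paper handles implicitly.
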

\begin{proof}
    We proceed by structural induction on the formula $\tlf$.
    \paragraph{Base case.} If $\tlf = \atom_\syma$, we translate it to $\fof(x) = \atom_\syma(x)$.
    \paragraph{Induction step.} Assume $\tlf_1$ and $\tlf_2$ can be translated into $\fof_1(x)$ and $\fof_2(x)$ respectively. We translate compound formulas as follows:
    \begin{itemize}
    \item If $\tlf=\tlf_1\land\tlf_2$, then $\fof(x) = \fof_1(x)\land\fof_2(x)$.
    \item If $\tlf=\lnot \tlf_1$, then $\fof(x) = \lnot \fof_1(x)$.
    \item If $\tlf=\past\tlf_1$, then $\fof(x) = \exists y<x: \fof_1(y)$
    \end{itemize}
    In each case, the resulting formula $\fof(x)$ belongs to $\pfo$, since it uses at most two variables (the free variable $x$ and a bound variable $y$), both of which can be reused.
\end{proof}

\begin{lemma}
  \label{lemma:pfo2ptl}
  Every single-variable formula $\fof(x)$ in $\pfo$ can be translated into an equivalent formula $\tlf$ in $\ptl$.
\end{lemma}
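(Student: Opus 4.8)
The plan is to prove the lemma by strong induction on the size of the $\pfo$ formula. Since the unbounded quantifier $\exists x$ turns a single-free-variable formula into a \emph{sentence}, I take it to occur only at the top level (where a language is being defined); the induction therefore ranges over formulas built from atoms, Boolean connectives, and the bounded quantifiers $\exists y<x$ and $\exists x<y$. The induction hypothesis is the lemma statement itself, read as: a formula $\fof$ with a single free variable $z\in\{x,y\}$ has a $\ptl$ translation $\hat\fof$ with $\str,p\models\hat\fof$ iff $\str\models\fof[z\mapsto p]$, for every string $\str$ and position $p$. Once this holds, a top-level sentence $\exists x:\fof(x)$ is translated as $\past\hat\fof$, which by the convention $\str\models\tlf\iff\str,\length+1\models\tlf$ has exactly the intended meaning. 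Everything below is symmetric in $x$ and $y$, so I describe only the case of free variable $x$ and the quantifier $\exists y<x$. The base case $\atom_\syma(x)\mapsto\atom_\syma$ and the Boolean cases (which commute with the translation, using that $\lor$ is definable in $\ptl$) are immediate.

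The substantive case is $\fof=\bigl(\exists y<x:\fof'(x,y)\bigr)$; its translation must be a $\ptl$ formula that, evaluated at the position bound to $x$, asserts the existence of an earlier position witnessing $\fof'$. First I would normalize $\fof'$. Its top-level structure is a Boolean combination of \emph{top-level literals}, each an order atom ($x<y$, $y<x$, $x=y$), a unary atom ($\atom_\syma(x)$ or $\atom_\syma(y)$), or a subformula beginning with a quantifier. Under the assumption $y<x$ carried by the quantifier, I replace the top-level order atoms by $\true/\false$ accordingly (order atoms occurring inside nested quantifiers are left untouched, since they concern rebound copies of the variables); what remains are unary atoms and quantified subformulas, and because a $\pfo$ quantifier binds one of the two variables, every quantified subformula has at most one \emph{free} variable --- hence no remaining literal has both $x$ and $y$ free. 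Next I put the result in disjunctive normal form over these literals, and in each disjunct split the conjunction into the part $\alpha$ whose literals do not have $y$ free (so they are about $x$ only) and the part $\beta$ whose literals do have $y$ free (so they are about $y$ only), invoking $\exists y<x:(\alpha\land\beta)\equiv\alpha\land(\exists y<x:\beta)$. Translating $\alpha$'s literals at the position of $x$ --- unary atoms directly, quantified subformulas by the induction hypothesis --- yields a $\ptl$ formula $\hat\alpha$; translating $\beta$'s literals at the position of $y$ the same way yields a $\ptl$ formula $\hat\beta$, whereupon $\exists y<x:\beta$ becomes precisely $\past\hat\beta$ (this automatically covers the edge case $\beta=\true$, which becomes $\past\true$, i.e.\ ``$x$ is not the first position''). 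The translation of $\fof$ is then the disjunction over disjuncts of $\hat\alpha\land\past\hat\beta$, a $\ptl$ formula. The mirror quantifier $\exists x<y$ --- including the special case $\exists x<y:\fof(x)$ where $\fof$ does not mention $y$ --- is handled by the symmetric argument.

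The hard part will be the bookkeeping forced by $\pfo$'s variable reuse. A quantified literal such as $\exists x<y:\fof''(x)$ can appear inside $\fof'(x,y)$ while \emph{shadowing} the outer $x$, so one cannot resolve order atoms or classify literals ``globally''; the normalization and the $\alpha/\beta$ split must be performed strictly at the top level of $\fof'$, and the induction hypothesis must already cover arbitrary single-free-variable subformulas, which can themselves carry further nested two-variable material. Getting this layering right is the crux --- the rest is routine propositional and temporal manipulation. It is also worth noting why the target logic is $\ptl$ rather than $\tl$: the only temporal operator ever introduced is $\past$, never $\future$, because every bounded quantifier of $\pfo$ moves to a \emph{strictly earlier} position, so in the induction the quantified position always lies in the past of the current evaluation position. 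Together with the already-proved converse (\cref{lemma:ptl2pfo}), this gives \cref{thm:pfo2ptl}.
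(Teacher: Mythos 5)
Your proof is correct, and at its core it is the same translation as the paper's: proceed by induction, translate Boolean structure homomorphically, and render each bounded quantifier $\exists y<x$ as $\past$ (the paper inducts on quantifier depth, you on formula size; nothing hinges on this choice). The genuine difference is that you explicitly handle the case the paper's induction step glosses over: a top-level quantified subformula $\exists y<x:\fof_1(x,y)$ whose body has \emph{both} variables free. The paper only lists subformulas of the form $\exists y<x:\fof_1(y)$, implicitly assuming the body can be made to depend on $y$ alone; your normalization --- resolving the top-level order atoms under the guard $y<x$, passing to disjunctive normal form over literals (each of which, as you correctly observe, has at most one free variable because every $\pfo$ quantifier binds one of the two variables), and splitting each disjunct via $\exists y<x:(\alpha\land\beta)\equiv\alpha\land\exists y<x:\beta$ so the whole subformula becomes a disjunction of $\hat\alpha\land\past\hat\beta$ --- is exactly the separation argument needed to justify that step, so on this point your write-up is more complete than the paper's. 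Two small remarks: the atom $x=y$ is not part of the paper's $\pfo$ syntax (only $<$), so carrying it along is harmless but unnecessary; and your decision to confine the unbounded $\exists x$ to the top level matches the paper's implicit convention --- indeed it is forced, since a sentence occurring as a proper subformula (asserting something about arbitrary, possibly later, positions) could not be translated position-wise into a purely past-looking logic, and the paper's own induction likewise ignores that case.
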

\begin{proof}
  We proceed by induction on the quantifier depth $\qd(\fof)$.
  \paragraph{Base case.} If $\qd(\fof) = 0$, then $\fof(x)$ is a Boolean combination of atomic formulas such as $\atom_\syma(x)$, which are directly translatable into $\ptl$ formulas.
  \paragraph{Induction step.} 
    Assume that every formula of quantifier depth at most $k$ can be translated into an equivalent $\ptl$ formula. Consider a formula $\fof(x)$ of depth $k + 1$. The formula can be written as a Boolean combination of:
    \begin{itemize}
        \item Subformulas of quantifier depth at most $k$, which can be translated into $\ptl$ by the inductive hypothesis;
        \item Subformulas of the form $\exists y < x: \fof_1(y)$, where $\fof_1$ has depth at most $k$. By the inductive hypothesis, $\fof_1(y)$ translates to a $\ptl$ formula $\tlf_1$, and the whole subformula translates to $\tlf = \past \tlf_1$.
    \end{itemize}
\end{proof}

Combining the two lemmas, we obtain the following theorem.
\pfotoptl*
\begin{proof}
    The result follows directly from \cref{lemma:ptl2pfo} and \cref{lemma:pfo2ptl}. To define languages, $\ptl$ formulas are interpreted at position $\length + 1$. Similarly, $\pfo$ formulas with a single variable $x$ can be converted into sentences by prefixing them with $\exists x$, which is semantically equivalent to $\exists x < \length + 1$.
\end{proof}

\section{Transformers}
In this section, we describe the transformer idealization under consideration in more detail. The formal proofs for the following two results are also given: (i) every transformer can be translated into an equivalent $\pfo$ formula, and (ii) every $\ptl$ formula can be simulated by a transformer.

\subsection{Transformer architecture}
\label{app:transformer}
The assumptions we make are restated as follows:
\tassum*
We denote the set of non-negative floating-point numbers as $\nnfpnset \subset \fpnset$. Similar notation is used for positive numbers, negative numbers, and other subsets. The set $\fpnset$ includes two special values, $\infty$ and $-\infty$, representing positive and negative infinity. Their behaviors are defined as follows:
\begin{multicols}{2}
\begin{itemize}
  \item $\forall\fpn\in\fpnset\setminus\{-\infty\}$, $\infty + \fpn \defeq \infty$;
  \item $\forall\fpn\in\fpnset\setminus\{-\infty\}$, $-\infty + \fpn \defeq -\infty$;
  \item $\forall\fpn\in\fpnset_{>0}$, $\fpn\cdot\infty\defeq\infty$ and $\fpn\cdot(-\infty)\defeq-\infty$;
  \item $\forall\fpn\in\fpnset_{<0}$, $\fpn\cdot\infty\defeq-\infty$ and $\fpn\cdot(-\infty)\defeq\infty$;
  \item $\forall\fpn\in\fpnset\setminus\{\infty, -\infty\}$, $\fpn/\infty\defeq0$;
  \item $\exp(\infty)\defeq\infty$ and $\exp(-\infty)\defeq 0$.
\end{itemize}
\end{multicols}

Transformers take strings as inputs. Given a string $\str=\sym_1\cdots\sym_\length$ over an alphabet $\alphabet$, we append a special end-of-sentence symbol $\eos \notin \alphabet$ to form the extended string $\eosstr \defeq \sym_1 \cdots \sym_\length \eos$ over the extended alphabet $\eosalphabet \defeq \alphabet \cup \{\eos\}$. 

The transformer consists of an input layer, followed by a stack of $\layernumber$ hidden layers and a final output layer. 

\subsubsection{Input layer}
An embedding function $\ve$ of type $\eosalphabet\rightarrow\fpnset^\dimension$ maps each symbol to a $\dimension$-dimensional column vector. The embedding layer applies $\ve$ to each symbol in $\eosstr$ to produce the input representation:
$\embedding(\eosstr)\in\fpnset^{\dimension\times(\length+1)}$, i.e.,
\begin{equation}
\label{eq:input}
\embedding(\eosstr)_{:,n} \defeq \ve(\sym_n),  \quad n \in\{1,\ldots,\length+1\}.
\end{equation}
where $\sym_{\length+1} \defeq \eos$.

\subsubsection{Hidden layers}
Each hidden layer contains two sublayers: a self-attention mechanism and a feedforward network.

The self-attention mechanism is a function of type $\fpnset^{\dimension\times(\length+1)}\rightarrow\fpnset^{\dimension\times(\length+1)}$.
 Given input $\transformer(\eosstr) \in \fpnset^{\dimension \times (\length+1)}$, we compute:
\begin{subequations}
\label{eq:qkv}
\begin{align}
    \query &\defeq \W[Q]\transformer(\eosstr), \\
    \key &\defeq \W[K]\transformer(\eosstr), \\
    \val &\defeq \W[V]\transformer(\eosstr), 
\end{align}
\end{subequations}
where $\W[Q], \W[K], \W[V]\in\fpnset^{\dimension\times\dimension}$ are learnable parameter matrices.

A pairwise compatibility score $\score \in \fpnset^{(\length+1) \times (\length+1)}$ is computed via scaled dot product:
\begin{equation}
\label{eq:score}
\score_{n,m} \defeq \frac{\query[:][n]\bigcdot \key[:][m]}{\sqrt{\dimension}}.
\end{equation}
We then compute attention weights $\aalpha \in \fpnset^{(\length+1) \times (\length+1)}$ using $\softmax$:
\begin{equation}
\label{eq:softmax_SA}
\aalpha_{n,m} \defeq \frac{\exp(\score_{n,m})}{\sum_{i<n} \exp(\score_{n,i})}.
\end{equation}
In practice, the $\softmax$ is numerically stabilized by subtracting the maximum score from all scores before exponentiation:
\begin{equation}
    \label{eq:softmax_stable}
    \aalpha_{n,m} \defeq \frac{\exp(\score_{n,m} - \max_{i<n}\score_{n,i})}{\sum_{j<n} \exp(\score_{n,j} - \max_{i<n}\score_{n,i})}.
\end{equation}
The attention output at position $n$ is then computed as a weighted sum of the value vectors:
\begin{equation}
\label{eq:attention}
    \attention(\transformer(\eosstr))_{:,n} \defeq \sum_{m<n} \aalpha_{n,m}\val[:][m].
\end{equation}
We assume a single attention head, since multiple heads do not increase expressive power \citep{li2024chain}.

The feedforward sublayer is a function of type $\fpnset^{\dimension\times(\length+1)}\rightarrow\fpnset^{\dimension\times(\length+1)}$ and consists of two linear transformations with a ReLU nonlinearity:
\begin{equation}
\ffn(\transformer(\eosstr))_{:,n} \defeq \W[F2]\ReLU\left(\W[F1]\transformer[][:][n][\eosstr]+\bb[F1]\right) + \bb[F2],
\end{equation}
where $\W[F1]\in\fpnset^{\dimension'\times\dimension},\bb[F1]\in\fpnset^{\dimension'},\W[F2]\in\fpnset^{\dimension'\times\dimension},\bb[F2]\in\fpnset^\dimension$ are trainable parameters, $\dimension'$ is the hidden size of the intermediate layer.

\subsubsection{Output layer}
The final output layer is a function of type $\fpnset^{\dimension\times(\length+1)}\rightarrow \fpnset$ that computes a scalar score based on the representation at the final position  (corresponding to $\eos$): 
\begin{equation}
  \classification\left(\transformer[\layernumber][][][\eosstr]\right) = \left(\bw^{\text{C}}\right)^\top\transformer[\layernumber][:][\length+1][\eosstr] + b^{\text{C}}
\end{equation}
where $\bw^{\text{C}}\in\fpnset^{\dimension}$ and $b^{\text{C}}\in\fpnset$ are learnable parameters.

\subsection{From transformers to $\pfo$}
\label{app:transformer2pfo}
In this sub-section, we show that any transformer can be simulated by $\pfo$.
We begin by formalizing what it means for a function to be simulated by $\pfo$.
With a slight abuse of notation, we write $\dimension$ to refer either to the model’s hidden dimensionality or to 1, in the case of a scalar classification output.
\begin{definition}
    A function $\transformer \colon \eosalphabet^{\length+1} \to \fpnset^{\dimension \times (\length+1)}$ is said to be simulated by $\pfo$ if for every dimension $d \in \{1,\ldots,\dimension\}$ and every floating-point value $\fpn \in \fpnset$, there exists a single-variable formula $\fof(x)$ in $\pfo$ such that for every position $n \in \{1,\ldots,\length+1\}$,
  \begin{equation}
      \transformer[][d][n][\eosstr]=\fpn \quad \text{ if and only if } \quad \eosstr, n\models \fof(x).
  \end{equation}

  Similarly, a function $\transformer$ of type $\eosalphabet^{\length+1}\rightarrow\fpnset^{(\length+1)\times (\length+1)}$ is said to be simulated by $\pfo$ if for every floating-point value $\fpn \in \fpnset$, there exists a two-variable formula $\fof(x, y)$ in $\pfo$ such that for every pair of positions $n, m \in \{1,\ldots,\length+1\}$,
  \begin{equation}
      \transformer[][n][m][\eosstr]=\fpn \quad \text{ if and only if } \quad \eosstr, n, m\models \fof(x, y).
  \end{equation}
\end{definition}

We will prove the following proposition, which we will use repeatedly.
\begin{proposition}
\label{prop:ff}
Let $\transformer_1, \transformer_2$ be functions of type $\eosalphabet^{\length+1} \rightarrow \fpnset^\dimension$, and let $\transformer$ be either a function $\fpnset^\dimension \to \fpnset^{\dimension'}$ (where $\dimension'$ is not necessarily equal to $\dimension$) or $\fpnset^\dimension \times \fpnset^\dimension \to \fpnset$.
If both $\transformer_1$ and $\transformer_2$ can be simulated by $\pfo$, then so can $\transformer(\transformer_1)$ and $\transformer(\transformer_1, \transformer_2)$.
\end{proposition}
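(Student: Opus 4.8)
The plan is to \emph{unfold} the finitely-specified function $\transformer$ into a finite Boolean combination of the $\pfo$ formulas that are already known to define $\transformer_1$ and $\transformer_2$. The only ingredient beyond bookkeeping is the observation that, because the input length $\length$ is a fixed parameter and precision is fixed (\cref{assumption:transformer}), each coordinate of $\transformer_i$ attains only finitely many floating-point values across all inputs $\eosstr\in\eosalphabet^{\length+1}$ and positions $n$, so only finitely many value-tuples can ever be fed into $\transformer$.

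Concretely, first I would unpack the hypothesis: for $i\in\{1,2\}$, each coordinate $d\in\{1,\dots,\dimension\}$, and each $\fpn\in\fpnset$, fix a single-variable $\pfo$ formula $\fof^{(i)}_{d,\fpn}(x)$ with $\transformer_i(\eosstr)_{d,n}=\fpn$ iff $\eosstr,n\models\fof^{(i)}_{d,\fpn}(x)$ for every $n$. Since $\eosalphabet^{\length+1}$ is finite and there are $\length+1$ positions, the set $V^{(i)}_d\defeq\{\fpn\in\fpnset:\transformer_i(\eosstr)_{d,n}=\fpn\text{ for some }\eosstr,n\}$ is finite; hence the product $V^{(i)}\defeq V^{(i)}_1\times\cdots\times V^{(i)}_\dimension$ is finite and every column $\transformer_i(\eosstr)_{:,n}$ lies in it. For $\transformer\colon\fpnset^\dimension\to\fpnset^{\dimension'}$ applied columnwise, i.e. $\transformer(\transformer_1)(\eosstr)_{d',n}=\transformer\big(\transformer_1(\eosstr)_{:,n}\big)_{d'}$, I would then define, for each output coordinate $d'$ and value $\fpn\in\fpnset$,
\[
    \fof_{d',\fpn}(x)\;\defeq\;\bigvee_{\substack{\vec a\in V^{(1)}\\ \transformer(\vec a)_{d'}=\fpn}}\ \bigwedge_{d=1}^{\dimension}\fof^{(1)}_{d,a_d}(x),
\]
a finite Boolean combination of single-variable formulas and hence again a single-variable $\pfo$ formula. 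It is correct because $\eosstr,n\models\fof_{d',\fpn}(x)$ holds iff there is $\vec a\in V^{(1)}$ with $\transformer(\vec a)_{d'}=\fpn$ and $\transformer_1(\eosstr)_{d,n}=a_d$ for all $d$; as $\transformer_1(\eosstr)_{:,n}$ is the unique such $\vec a$, this is equivalent to $\transformer(\transformer_1(\eosstr)_{:,n})_{d'}=\fpn$. The two-argument case $\transformer\colon\fpnset^\dimension\times\fpnset^\dimension\to\fpnset$ is identical: for each $\fpn\in\fpnset$ set
\[
    \fof_{\fpn}(x)\;\defeq\;\bigvee_{\substack{(\vec a,\vec b)\in V^{(1)}\times V^{(2)}\\ \transformer(\vec a,\vec b)=\fpn}}\Big(\bigwedge_{d=1}^{\dimension}\fof^{(1)}_{d,a_d}(x)\;\wedge\;\bigwedge_{d=1}^{\dimension}\fof^{(2)}_{d,b_d}(x)\Big),
\]
again a finite single-variable $\pfo$ formula, with correctness argued as before using that $\big(\transformer_1(\eosstr)_{:,n},\transformer_2(\eosstr)_{:,n}\big)$ is the unique pair in $V^{(1)}\times V^{(2)}$ consistent with $\eosstr,n$. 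Exhibiting such a defining formula for every coordinate and value shows that $\transformer(\transformer_1)$ and $\transformer(\transformer_1,\transformer_2)$ are simulated by $\pfo$.

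The only real "obstacle" is conceptual rather than technical: one must notice that fixed precision together with a fixed input length bounds the number of floating-point tuples that can arise, which is what keeps the disjunctions above finite. Everything else reduces to closure of $\pfo$ under finite Boolean combinations — and since the construction introduces no new quantifiers or variables, the single-variable (respectively two-variable) budget of $\pfo$ is never threatened, so this closure is immediate from the definition of the logic.
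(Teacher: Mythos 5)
Your proof is correct and takes essentially the same route as the paper's: because only finitely many input value tuples can occur, the condition that $\transformer$'s output equals a given $\fpn$ unfolds into a finite disjunction, over input tuples, of conjunctions of the defining formulas for $\transformer_1$ (and $\transformer_2$), which introduces no new variables or quantifiers and hence stays in $\pfo$. The only (cosmetic) difference is that you restrict the disjunction to values attained at the particular length $\length$, whereas the paper simply invokes finiteness of $\fpnset$ (fixed precision) and ranges over all of $\fpnset^\dimension$, which is preferable since it keeps the resulting formula independent of $\length$; your per-length restriction is unnecessary and, read literally, makes the simulating formula length-dependent.
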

\begin{proof}
    Since the inputs are fixed-dimensional vectors over $\fpnset$, the number of possible value combinations is finite. For any given output value $\fpn \in \fpnset$, one can enumerate all input configurations for which the function yields $\fpn$, and construct a $\pfo$ formula expressing their disjunction.
\end{proof}

This proposition accounts for all components of the transformer architecture, such as the feedforward sublayers, projection operations, dot products, elementwise operations such as addition, scalar operations such as division, the $\exp$ function, and the classification head---except for the embedding layer, the summations involved in the attention computation, and the operation of identifying the maximum score introduced by the stabilized softmax.

We handle the embedding layer as follows:
\begin{lemma}
\label{lemma:embedding2pfo}
The embedding layer can be simulated by $\pfo$.
\end{lemma}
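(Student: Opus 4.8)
The plan is to exploit the fact that the embedding layer is \emph{position-local}: under \cref{assumption:transformer} (no positional encoding), the entry $\embedding(\eosstr)_{d,n}$ equals $\ve(\sym_n)_d$, which depends only on the symbol $\sym_n$ occupying position $n$. This is precisely the kind of dependence that the unary predicates $\atom_\syma$ of $\pfo$ are designed to express, so no quantifiers are needed at all: the witnessing formulas will be quantifier-free Boolean combinations of atoms (quantifier depth $0$).

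Concretely, fix a dimension $d \in \{1,\dots,\dimension\}$ and a value $\fpn \in \fpnset$. Since $\eosalphabet$ is finite, the set $S_{d,\fpn} \defeq \{\syma \in \eosalphabet : \ve(\syma)_d = \fpn\}$ is finite, and I take
\begin{equation}
    \fof_{d,\fpn}(x) \defeq \bigvee_{\syma \in S_{d,\fpn}} \atom_\syma(x),
\end{equation}
with the convention that the empty disjunction is $\false$ (realized as, say, $\atom_\syma(x)\land\lnot\atom_\syma(x)$ for a fixed $\syma$); recall that $\lor$, and hence iterated $\lor$, is definable in $\pfo$ from $\land$ and $\lnot$. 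Here I regard $\eos$ as an ordinary symbol of the extended alphabet $\eosalphabet$, so that the predicate $\atom_\eos$ is available; this is consistent with the definition of simulability, in which $\pfo$ formulas are interpreted over the extended string $\eosstr$ whose positions range up to $\length+1$.

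Correctness is then immediate by unwinding definitions: for every position $n \in \{1,\dots,\length+1\}$ we have $\eosstr, n \models \fof_{d,\fpn}(x)$ iff $\sym_n \in S_{d,\fpn}$ iff $\ve(\sym_n)_d = \fpn$ iff $\embedding(\eosstr)_{d,n} = \fpn$, which is exactly what the definition of ``simulated by $\pfo$'' demands. There is no genuinely hard step here; the only points deserving a word of care are (i) that $\eos$ must be furnished with its own predicate $\atom_\eos$, since $\pfo$ cannot instead pick out the final position from within the logic --- it forbids quantifying over positions lying to the right of a free variable --- and (ii) the degenerate case $S_{d,\fpn} = \emptyset$, covered by the unsatisfiable formula above, which correctly reflects that $\embedding(\eosstr)_{d,n}$ then never takes the value $\fpn$.
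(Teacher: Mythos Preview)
Your proof is correct and follows essentially the same approach as the paper: both construct the formula $\bigvee_{\syma\in\{\syma\mid\ve(\syma)_d=\fpn\}}\atom_\syma(x)$ and verify the required equivalence directly. Your version is somewhat more careful in explicitly handling the empty disjunction and the need for $\atom_\eos$, but these are refinements of the same argument rather than a different route.
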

\begin{proof}
For any dimension $d$ and position $n$, the embedding layer outputs a fixed value $\fpn$ if the symbol at position $n$ is mapped to $\fpn$ by the embedding function $\ve$ in the $d$-th coordinate. Since $\eosalphabet$ is finite and $\ve$ is fixed, we can express this using a disjunction over all symbols that are mapped to $\fpn$ in coordinate $d$:\looseness=-1
    \begin{equation}
        \embedding(\eosstr)_{d,n} = \fpn \quad \text{ if and only if } \quad \eosstr,n\models\bigvee_{\syma\in\{\syma\mid\ve(\syma)_d=\fpn\}}\atom_\syma(x).
    \end{equation}  
\end{proof}

We next address the summations in the attention mechanism.
We begin by establishing the following result:\looseness=-1
\begin{restatable}{reProposition}{threshcount}
\label{prop:count}
$\pfo$ can count up to a threshold.  
\end{restatable}
\begin{proof}
The threshold counting quantifiers can be defined as follows.
\begin{itemize}[leftmargin=*]
    \item The quantifier for ``there exists at least one'' coincides with the standard bounded existential quantifier:
    \begin{equation}
    \exists^{\geq 1} y<x: \fof(y) \defeq \exists y<x: \fof(y).
    \end{equation}
    \item ``There exist at least two'' can be defined as:
    \begin{equation}
        \exists^{\geq 2} y<x: \fof(y) 
        \defeq \exists y<x: (\fof(y) \land \exists x<y: \fof(x)).
    \end{equation}
    \item  We can define ``exactly one'' by combining the above: 
    \begin{equation}
        \exists^{=1} y<x: \fof(y) \\
        \defeq \exists^{\geq 1} y<x: \fof(y) \land \lnot \exists^{\geq 2} y<x: \fof(y).
    \end{equation}
    \item Additional counting quantifiers up to a threshold can be defined in a similar fashion.
\end{itemize}
\end{proof}

Soft attention involves two types of summations, which we address in turn.
The first appears in the denominator of the $\softmax$ function, used to compute the attention weights $\aalpha$ (\cref{eq:softmax_SA}).
Since $\exp$ outputs non-negative values, this summation consists entirely of non-negative numbers and can be simulated by $\pfo$.

\begin{restatable}{reLemma}{sumpos}
    \label{lemma:sum_pos}
    $\pfo$ can simulate a sum of non-negative fixed-precision floating-point numbers.     
\end{restatable}
\begin{proof}
Since $\nnfpnset$ is finite, there exists a finite set of possible sums.
Moreover, because all values are non-negative, for each such sum
there are only finitely many combinations of input values that yield it.
Each combination can be characterized using the threshold-counting
quantifiers introduced in \cref{prop:count}:
Too many positive entries lead to overflow, while $0$'s do not affect the sum.
Taking a finite disjunction over all such combinations
defines a $\pfo$ formula that holds exactly when
the prefix sum equals a given value.
\end{proof}

The second summation occurs in the computation of attention outputs (\cref{eq:attention}).
The value matrix $\val\in\fpnset^{\dimension\times (\length+1)}$ may include both positive and negative values, so we cannot apply the previous approach directly. However, as pointed out by \citet{NEURIPS2023_a48e5877}:
\begin{restatable}{reProposition}{attentionspan}
\label{prop:softmax_finite}
Under fixed precision, there exists an upper limit on the number of non-zero entries in the output of a $\softmax$ function. We refer to the upper bound as the \emph{maximum attention span} $\nmax$.\looseness=-1
\end{restatable}
\begin{proof}
This upper bound is given by:
\begin{equation}
    \nmax = \left\lfloor\frac{\min(1,\max(\fpnset\setminus\{\infty\}))}{\min(\fpnset_{>0})}\right\rfloor,
\end{equation}
where $\lfloor\cdot\rfloor$ is the floor function.
\end{proof}
Consequently, the number of nonzero attention weights is bounded. It follows that \cref{eq:attention} computes the sum over a bounded number of terms, which can be simulated by $\pfo$:
\begin{restatable}{reLemma}{sumbound}
\label{lemma:sum_bounded}
$\pfo$ can simulate a sum of a bounded number of fixed-precision floating-point numbers. 
\end{restatable}
\begin{proof}
Since the number of summands is bounded, we can enumerate all possible combinations of inputs. This behavior can be simulated using the threshold counting quantifiers introduced in \cref{prop:count}.
\end{proof}

Finally, we account for the numerical-stability adjustment in \cref{eq:softmax_stable}.
The only additional operation introduced by this stabilization is the identification of the maximum score among a finite set of floating-point values, which can be expressed in $\pfo$ as follows:
\begin{proposition}
    \label{prop:max}
    $\pfo$ can identify the maximum score $\max_{i<n}\score_{n,i}$.
\end{proposition}
\begin{proof}
    Let the ordered set of floating-point values be 
    $\fpnset = \{-\infty, \fpn_1, \fpn_2, \ldots, \fpn_K, \infty\}$. 
    For each $\fpn\in\fpnset$, let $\fof_\fpn(x)$ be a $\pfo$-formula that holds
    iff the value at position $x$ equals $\fpn$. 
    Since the set of floating-point numbers is finite, we can enumerate all possible maximum values. For each candidate maximum value $\fpn$, we can construct a $\pfo$ formula $\fof_\fpn^{\max}(x)$ that holds if and only if the scores among all positions $y < x$ are less than or equal to $\fpn$, and at least one value equals $\fpn$. 
    They can be defined inductively over the total order of $\fpnset$ as follows:
    \begin{subequations}
        \begin{align}
            \fof_\infty^{\max}(x) &\defeq \exists y<x: \fof_\infty(y), \\
            \fof_{\fpn_K}^{\max}(x) &\defeq \left(\lnot \exists y<x: \fof_\infty^{\max}(y)\right) \land \exists y<x: \fof_{\fpn_K}(y), \\
            &\;\;\vdots \nonumber \\
            \fof_{\fpn_1}^{\max}(x) &\defeq \left(\lnot \exists y<x: \bigvee_{\fpn\in\{\infty,\fpn_K,\ldots,\fpn_2\}}\fof_\fpn^{\max}(y)\right) \land \exists y<x: \fof_{\fpn_1}(y), \\
            \fof_{-\infty}^{\max}(x) &\defeq \left(\lnot \exists y<x: \bigvee_{\fpn\in\{\infty,\fpn_K,\ldots,\fpn_1\}}\fof_\fpn^{\max}(y)\right) \land \exists y<x: \fof_{-\infty}(y).
        \end{align}
    \end{subequations}
\end{proof}

We have now completed the proof of the following theorem:
\ttopfo*
\begin{proof}
By \cref{lemma:embedding2pfo}, the embedding layer can be simulated by $\pfo$.
By \cref{prop:ff}, simulation by $\pfo$ is closed under all subsequent operations in the transformer, except for summations.
By \cref{lemma:sum_pos}, \cref{prop:softmax_finite}, and \cref{lemma:sum_bounded}, the summation operations involved in attention computation can also be simulated by $\pfo$.
Therefore, the entire transformer can be simulated by $\pfo$.
\end{proof}

\subsection{From $\ptl$ to transformers}
\label{app:ptl2transformer}
In this subsection, we show that every $\ptl$ formula can be simulated by a transformer. The central idea is to encode the truth value of the formula at each position as a dedicated coordinate within the transformer's hidden state: the value $1$ represents $\true$, and $0$ represents $\false$.

The formal definition is as follows.
\begin{definition}
    A $\ptl$ formula $\tlf$ is said to be simulated by a function $\transformer$ of type $\eosalphabet^{\length+1}\rightarrow\fpnset^{\dimension\times(\length+1)}$ if there exists a coordinate $d\in\{1,\ldots,\dimension\}$ such that for every position $n\in\{1,\ldots,\length+1\}$ and every input string $\eosstr\in\kleene{\eosalphabet}$,
    \begin{equation}
        \transformer[][d][n][\eosstr] =
        \begin{cases}
            1, &\text{if } \eosstr,n\models \tlf, \\
            0, &\text{otherwise.}
        \end{cases}
    \end{equation}
\end{definition}

We introduce a one-hot encoding function $\onehot$, which maps an index to a column vector of zeros except for a single entry equal to $1$ at that index. Let $\zero$ denote the all-zero column vector.

We first show that the atomic formulas can be translated into an embedding layer.
\begin{lemma}
\label{lemma:atomic2transformer}
    The atomic formulas $\atom_\syma$ for $\syma\in\eosalphabet$ can be simulated by an embedding layer.
\end{lemma}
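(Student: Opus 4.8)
The plan is to realize every atomic formula as a single coordinate of a one-hot input embedding. First I would fix an enumeration $\eosalphabet = \{\syma_1,\dots,\syma_{\eosalphabetsize}\}$ and reserve the first $\eosalphabetsize$ coordinates of the model's hidden dimension $\dimension$ for this purpose; the remaining coordinates are left free for the subsequent lemmas that handle $\land$, $\lnot$, and $\past$, and they play no role here. I would then define the embedding function $\ve\colon\eosalphabet\to\fpnset^\dimension$ by letting $\ve(\syma_j)$ be the $j$-th standard basis vector, that is, $\ve(\symb)_d = 1$ when $d \le \eosalphabetsize$ and $\symb = \syma_d$, and $\ve(\symb)_d = 0$ otherwise. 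Since $0,1 \in \fpnset$, this is a legitimate fixed-precision embedding.

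Next I would check the simulation condition of the definition with the witness dimension $d_{\syma_j} \defeq j$. Unfolding \cref{eq:input}, for each position $n \in \{1,\dots,\length+1\}$ we get $\embedding(\eosstr)_{j,n} = \ve(\sym_n)_j$, which equals $1$ exactly when $\sym_n = \syma_j$ and $0$ otherwise. On the other side, the $\ptl$ semantics say that $\eosstr, n \models \atom_{\syma_j}$ holds exactly when the $n$-th symbol of $\eosstr$ is $\syma_j$, that is, when $\sym_n = \syma_j$. The two conditions coincide, so
\[
  \embedding(\eosstr)_{j,n} =
  \begin{cases}
    1 & \text{if } \eosstr, n \models \atom_{\syma_j}, \\
    0 & \text{otherwise,}
  \end{cases}
\]
which is precisely what it means for the embedding layer to simulate $\atom_{\syma_j}$. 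Since $j$ was arbitrary, all atomic formulas $\atom_\syma$ with $\syma \in \eosalphabet$ are simulated.

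I do not expect a genuine obstacle in this lemma; the only items requiring attention are bookkeeping. One must ensure the constants $0$ and $1$ lie in the fixed-precision set $\fpnset$ (they do, and they are used throughout the whole $\ptl$-to-transformer construction anyway), and one must keep the $\eosalphabetsize$ one-hot coordinates disjoint from whatever coordinates later lemmas allocate for compound formulas, which amounts to taking $\dimension$ large enough. The substantive difficulty of \cref{thm:ptl2transformer} lies entirely in those later lemmas---in particular in simulating the $\past$ operator under the bounded attention span $\nmax$, as sketched in \cref{sec:ptl2transformer}---not in the embedding layer.
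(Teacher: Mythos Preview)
Your proof is correct and takes essentially the same approach as the paper: both use a one-hot embedding indexed by the symbols of $\eosalphabet$. The only cosmetic difference is that the paper also reserves an extra constant-$1$ ``bias'' coordinate $d_b$ in the embedding (used later when building the query in the $\past$ construction), but that is not needed for the present lemma and fits comfortably within your ``remaining coordinates are left free'' remark.
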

\begin{proof}
    We assign a distinct coordinate $d$ to every distinct symbol $\syma\in\eosalphabet$ with a map $\ltod\colon \eosalphabet\rightarrow\{1,\ldots,\eosalphabetsize\}$. Since the biases in attention mechanisms are sometimes omitted, we include a dedicated bias coordinate $\coord{b}$ that always holds a constant value $1$. The embedding layer is then defined as follows:
    \begin{equation}
        \transformer[0][d][n][\eosstr] = 
        \begin{cases}
            1,  & \text{if } d = \coord{b}, \\
            1,  & \text{if } d = \ltod(\sym_n), \\
            0,  & \text{otherwise.}
        \end{cases}
    \end{equation}
    Coordinates that remain inactive at this stage are initialized to $0$; they will later be used to store logical formulas introduced in subsequent layers.
\end{proof}

Next, we show that the Boolean connectives can be simulated.
\begin{lemma}
\label{lemma:boolean2transformer}
    If $\tlf_1$ and $\tlf_2$ can be simulated by a transformer, then their Boolean combinations can also be simulated by a transformer.
\end{lemma}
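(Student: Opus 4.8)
The plan is to reduce the claim to the two primitive cases $\lnot\tlf_1$ and $\tlf_1\land\tlf_2$ — disjunction being definable from these, as noted in the main text — and to show that each is obtained by appending a constant number of layers and dimensions to a transformer that already simulates the operand(s). The first ingredient is a \emph{parallel composition} step: if $\transformer^{(1)}$ simulates $\tlf_1$ in some dimension $d_1$ after $\layernumber_1$ layers and $\transformer^{(2)}$ simulates $\tlf_2$ in some dimension $d_2$ after $\layernumber_2$ layers, I would build one transformer whose embedding layer produces the union of the two embeddings in two disjoint coordinate blocks $A,B$, whose first $\layernumber_1$ hidden layers are those of $\transformer^{(1)}$ acting only on block $A$, and whose next $\layernumber_2$ hidden layers are those of $\transformer^{(2)}$ acting only on block $B$. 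Because residual connections preserve any coordinate to which a sublayer adds nothing, the value in $d_1$ survives through $\transformer^{(2)}$'s layers (which write $0$ to block $A$), while the embedding information needed by $\transformer^{(2)}$ sits untouched in block $B$ until those layers run; all of this is just block-diagonal weight matrices and zero-padding. After this composition, at the final layer both $\transformer[][d_1][n][\eosstr]$ and $\transformer[][d_2][n][\eosstr]$ carry the truth values of $\tlf_1$ and $\tlf_2$ at position $n$.

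The second ingredient is a single feedforward sublayer that computes the Boolean function. For $\lnot\tlf_1$, let the attention sublayer be the zero map (so its residual leaves the stream unchanged) and let the feedforward sublayer write into a fresh dimension $d_3$ the value $\ReLUfunc{1-\transformer[][d_1][n][\eosstr]}$; since $\transformer[][d_1][n][\eosstr]\in\{0,1\}$ this equals $1-\transformer[][d_1][n][\eosstr]$, i.e.\ exactly the indicator of $\eosstr,n\models\lnot\tlf_1$. For $\tlf_1\land\tlf_2$, analogously write $\ReLUfunc{\transformer[][d_1][n][\eosstr]+\transformer[][d_2][n][\eosstr]-1}$ into $d_3$, which is $1$ iff both bits are $1$. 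In each case the map is affine-then-$\ReLU$, hence realizable by one feedforward sublayer of the architecture of \cref{app:transformer}.

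The step I expect to be the main obstacle is controlling layer normalization: after the residual addition and $\LN$, a raw indicator of $0$ or $1$ need not be preserved, so the invariant "dimension $d$ holds $\ind{\eosstr,n\models\tlf}$" can be broken by the $\LN$ that follows each sublayer. I would resolve this exactly as the rest of the construction does — by fixing at the outset a normalized invariant (for instance, pairing each Boolean dimension with a companion dimension holding its complement so that the statistics entering $\LN$ are constant and known, and/or choosing the affine parameters of $\LN$ to restore the intended values), and then checking that both primitive constructions above preserve this invariant; the verification is routine bookkeeping once the invariant is pinned down. With closure under $\lnot$ and under $\land$ established, closure under arbitrary Boolean combinations follows by a straightforward structural induction, each connective costing $\bigO{1}$ additional layers and dimensions.
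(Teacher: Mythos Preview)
Your proposal is correct and takes essentially the same approach as the paper: skip the attention sublayer and use a single feedforward sublayer to compute $1-\transformer[][d_1][n]$ for negation and $\ReLUfunc{\transformer[][d_1][n]+\transformer[][d_2][n]-1}$ for conjunction, with layer normalization handled by the mirror-dimension trick (which the paper states separately as \cref{prop:ln}). The only real difference is that you spend a paragraph on parallel composition of two separate transformers; the paper sidesteps this entirely by assuming from the outset that $\tlf_1$ and $\tlf_2$ are simulated in dimensions $d_1,d_2$ of the \emph{same} transformer at the \emph{same} layer $\ell$, which is justified because the overall argument (\cref{thm:ptl2transformer}) proceeds by structural induction on a single formula and builds one transformer incrementally.
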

\begin{proof}
    Assume $\tlf_1$ and $\tlf_2$ are simulated by $\transformer[\ell][d_1][:]$ and $\transformer[\ell][d_2][:]$ respectively. For brevity, we write $\vx=\transformer[\ell][d_1][:][\eosstr]$ and $\vy=\transformer[\ell][d_2][:][\eosstr]$, so $\vx,\vy \in \fpnset^{1\times(\length+1)}$ are row vectors indexed by positions. All operations below are applied elementwise.
    \begin{itemize}[leftmargin=*]
    \item $\lnot \tlf_1$: Construct a layer that computes, in a fresh coordinate $\coord{\lnot}$,
    \begin{equation}
        -\vx + 1.
    \end{equation}
    This can be implemented with an FFN as follows. 
    Select a hidden unit $d'$ in the intermediate hidden state $\{1,\ldots,\dimension'\}$ and set the first linear transformation to:
    \begin{equation}
    \W[F1]_{d,:} = 
    \begin{cases}
        -\onehot[d_1]^\top & \text{if } d=d',\\
        \zero^\top & \text{otherwise,}
    \end{cases}
    \qquad
    \bb[F1] = \onehot[d'].
    \end{equation}
    The second linear transformation maps $d'$ to the output coordinate $\coord{\lnot}$:
    \begin{equation}
    \W[F2]_{d,:} = 
    \begin{cases}
        \onehot[d']^\top & \text{if } d=\coord{\lnot},\\
        \zero^\top & \text{otherwise,}
    \end{cases}
    \qquad
    \bb[F2] = \zero.
    \end{equation}
    The attention sublayer is bypassed by setting $\W[Q],\W[K],\W[V]$ to zero matrices.
    \item $\tlf=\tlf_1\land \tlf_2$: 
    Compute, in a fresh coordinate $\coord{\land}$,
    \begin{equation}
        \ReLU\left(\vx+\vy-1\right).
    \end{equation}
    Implement this with a single FFN as follows. 
    Select a hidden unit $d'$ and set the first linear transformation to:
    \begin{equation}
    \W[F1]_{d,:} = 
    \begin{cases}
        \onehot[d_1]^\top + \onehot[d_2]^\top & \text{if } d=d',\\
        \zero^\top & \text{otherwise,}
    \end{cases}
    \qquad
    \bb[F1] = -\onehot[d'].
    \end{equation}
    The second linear transformation connects $d'$ to the output coordinate $\coord{\land}$:
    \begin{equation}
    \W[F2]_{d,:} = 
    \begin{cases}
        \onehot[d']^\top & \text{if } d=\coord{\land},\\
        \zero^\top & \text{otherwise,}
    \end{cases}
    \qquad
    \bb[F2] = \zero.
    \end{equation}
    Again, the attention sublayer is bypassed. 
    \end{itemize}
\end{proof}

Finally, we address the temporal operator $\past$. 
\begin{lemma}
\label{lemma:past2transformer}
    If $\tlf$ can be simulated by a transformer, then $\past\tlf$ can also be simulated by a transformer.
\end{lemma}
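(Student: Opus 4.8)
The goal is to simulate $\past\tlf$ given that $\tlf$ is simulated by some dimension $d_1$ at layer $\ell$. The main difficulty — already flagged in the main-text sketch of Section~\ref{sec:ptl2transformer} — is that under fixed precision the softmax has a bounded attention span $\nmax$ (Proposition~\ref{prop:softmax_finite}): a naive "uniformly attend to all previous positions where $\tlf$ holds, and check whether the attention output is nonzero" construction fails as soon as more than $\nmax$ earlier positions satisfy $\tlf$, because the attention weights there underflow to $0$. So the plan is to implement the two-stage trick from the main text: build a dimension $d_2'$ that computes $\past\tlf$ \emph{correctly wherever at most $\nmax$ earlier positions satisfy $\tlf$}, observe that this already lets us pin down all but finitely many positions where $\past\tlf$ holds, and then use a second attention layer to recover the rest.

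First I would construct an attention sublayer whose query/key parameters make the (strictly masked) compatibility scores $\score_{n,m}$ take one large constant value when $\transformer[\ell][d_1][m]=1$ and a much smaller constant otherwise — concretely, put a $\pm$ large coefficient on dimension $d_1$ and a balancing term on the constant dimension $d_b$ from Lemma~\ref{lemma:atomic2transformer}, so that after softmax the weight on a "$\tlf$-position" is $1/k$ where $k$ is the number of such positions among $m<n$ (up to $\nmax$), and $0$ on the others and once $k>\nmax$. Let the value vector carry a $1$ in dimension $d_1$; then the attention output in that coordinate is $1$ if $1\le k\le\nmax$ and $0$ if $k=0$ or $k>\nmax$. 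A feedforward step then thresholds this to a clean $\{0,1\}$ in a fresh dimension $d_2'$; this is exactly the defective-$\past\tlf$ row in the worked example. Next, using Lemma~\ref{lemma:boolean2transformer}, compute $d_2 := d_1 \land d_2'$ in the feedforward sublayer of the same block: $\transformer[\ell+1][d_2][n]=1$ iff position $n$ satisfies $\tlf$ and some earlier position does too, so the positions with $d_2=1$ are at most $\nmax$ in number (the $(\nmax{+}1)$-st such position onward, $d_2'$ has underflowed to $0$).

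I would then add a second hidden layer that uniformly attends (same construction, with $d_2$ playing the role of $d_1$) to earlier positions where $\transformer[\ell+1][d_2][m]=1$; since there are at most $\nmax$ of these, no underflow occurs, and a feedforward threshold yields a dimension $d_3'$ with $\transformer[d_3'][n]=1$ iff at least two earlier positions satisfy $\tlf$, i.e.\ $d_3'$ simulates $\past(\tlf\land\past\tlf)$. Finally, take $d_3 := d_2' \lor d_3'$ via a feedforward sublayer (Lemma~\ref{lemma:boolean2transformer}). The correctness argument is a short case split on the number $k$ of earlier positions satisfying $\tlf$: if $k=0$ both disjuncts are $0$ and $\past\tlf$ is false; if $k=1$ then $d_2'=1$ (attention covers one position, no underflow) and $\past\tlf$ is true; if $2\le k$, whether or not $d_2'$ underflowed, we have $d_3'=1$ because the second layer only ever attends to the $\le\nmax$ positions in $d_2$, which is nonempty and underflow-free — so $\past\tlf$ is true and $d_3=1$. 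Hence $d_3$ simulates $\past\tlf$ exactly, for every position $n$ including $n=\length+1$.

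The step I expect to be the main obstacle is the first attention sublayer: one must verify that fixed-precision softmax with the chosen score gap actually yields weight exactly $1/k$ (not merely "approximately uniform") on the $\tlf$-positions when $k\le\nmax$, and exactly $0$ once $k>\nmax$ or on non-$\tlf$-positions — this relies on the precise rounding/underflow semantics of $\fpnset$ fixed in Section~\ref{app:transformer} (in particular $\exp(-\infty)=0$, $\fpn/\infty=0$, and the definition of $\nmax$ in Proposition~\ref{prop:softmax_finite}), together with the layer-normalization and residual bookkeeping needed to keep the relevant coordinates from being perturbed. Everything else — the Boolean combinations, the thresholding feedforward maps, bypassing unused sublayers by zeroing $\W[Q],\W[K],\W[V]$ — is routine given Lemmas~\ref{lemma:atomic2transformer} and~\ref{lemma:boolean2transformer}.
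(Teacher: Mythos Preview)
Your proposal is correct and follows essentially the same two-stage construction as the paper: build the defective dimension $d_2'$ via attention over $d_1$, form $d_2 := d_1 \land d_2'$ (which has at most $\nmax$ ones globally), attend over $d_2$ to get $d_3'$ simulating $\past(\tlf\land\past\tlf)$, and finally take $d_3 := d_2' \lor d_3'$. The paper is slightly more explicit, setting the scores to exactly $0$ versus $-\infty$ via $\W[Q]$ with an $\infty$ entry on $d_b$ and $\W[K]$ shifting $d_1$ by $-1$, which sidesteps the ``large constant / rounding'' worry you flag --- but your identification of that as the main technical checkpoint, and your case split on $k$, match the paper's argument.
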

\begin{proof}
    We start by introducing the base construction and then discuss two issues arising from fixed-precision arithmetic, together with their corresponding fixes. 
    Assume $\tlf$ is simulated by $\transformer[\ell][d_1][:]$.
    \paragraph{Base construction.}
    We construct an attention mechanism such that each position $n$ attends uniformly to all previous positions $m < n$ where $\transformer[\ell][d_1][m][\eosstr] = 1$. Specifically, we need the attention scores to be:
    \begin{equation}
    \score_{n,m} = 
    \begin{cases}
        0 & \text{if } \transformer[\ell][d_1][m][\eosstr] = 1, \\
        -\fpnlarge & \text{if } \transformer[\ell][d_1][m][\eosstr] = 0, \\
    \end{cases}
    \end{equation}
    where $\fpnlarge\in\fpnset$ is a large enough positive floating-point number such that $\exp(-\fpnlarge)=0$.
    This ensures that positions where $\tlf$ is $\false$ receive zero attention weight.

    This behavior can be implemented by configuring the query and key vectors as follows:
    \begin{equation}
        \forall n, \quad \query[d][n] = 
        \begin{cases}
            \fpnlarge &\text{if } d = d_1, \\
            0 &\text{otherwise,}
        \end{cases}
    \end{equation}
    and 
    \begin{equation}
        \key=\transformer[\ell][][][\eosstr] - 1.
    \end{equation}
    where the subtraction by $1$ is applied elementwise.

    The corresponding attention projection weights are:
    \begin{equation}
    \W[Q]_{d,:} = 
    \begin{cases}
        \fpnlarge\onehot[\coord{b}]^\top & \text{if } d=d_1, \\
        \zero^\top & \text{otherwise,}
    \end{cases}
    \end{equation}
    and 
    \begin{equation}
        \W[K]_{d,:} = \onehot[d]^\top - \onehot[\coord{b}]^\top,
    \end{equation}
    where $\coord{b}$ denotes the bias coordinate.

    Next, we copy the values from $\transformer[\ell][d_1][:]$ into an unused coordinate $\coord{\past}$ via the value projection:
    \begin{equation}
    \W[V]_{d,:} = 
    \begin{cases}
        \onehot[d_1]^\top & \text{if } d=\coord{\past}, \\
        \zero^\top & \text{otherwise.}
    \end{cases}
    \end{equation}
    Thus, $\val$ becomes:
    \begin{equation}
    \val[d][:] = 
    \begin{cases}
        \transformer[\ell][d_1][:][\eosstr] & \text{if } d=\coord{\past}, \\
        \zero^\top & \text{otherwise.}
    \end{cases}
    \end{equation}
    Let $\nvalid$ denote the number of positions $m < n$ such that $\transformer[\ell][d_1][m][\eosstr] = 1$.
    If $\nvalid > 0$, then $\aalpha_{n,m} = 1 / \nvalid$, giving $\transformer[\ell+0.5][\coord{\past}][n][\eosstr] = 1$.
    Otherwise, if $\nvalid = 0$, the attention weights are distributed uniformly over previous positions, all of which satisfy $\transformer[\ell][d_1][m][\eosstr] = 0$, yielding $\transformer[\ell+0.5][\coord{\past}][n][\eosstr] = 0$.
    Hence, $\transformer[\ell+0.5][\coord{\past}][:]$ simulates $\past \tlf$ in the ideal case.

    \paragraph{Patch 1: Vanishing Attention Weights}
    When too many previous positions with $\transformer[\ell][d_1][m]=1$, i.e., $\nvalid>\nmax$, the resulting attention weights $\aalpha_{n,:}$ will underflow to $\zero^\top$. This causes $\transformer[\ell+0.5][\coord{\past}][n][\eosstr] = 0$ even when $\past \tlf$ is true, leading to incorrect behavior.

    To fix this, we first compute the logical conjunction of $d_1$ and $\coord{\past}$ and store the result in $\coord{\land}$ via the feedforward sublayer. When $\transformer[\ell+1][\coord{\land}][n] = 1$, it indicates that the position $n$ satisfies $\tlf$ and there exists at least one prior position that also satisfies $\tlf$. The number of such positions is bounded by $\nmax$, as starting from the $(\nmax+1)^{\text{th}}$ position, the first attention sublayer will suffer from vanishing attention weights, causing $\transformer[\ell+0.5][\coord{\past}][n][\eosstr] = 0$.

    Next, we construct a second layer in which the attention sublayer uniformly attends to positions $m < n$ where $\transformer[\ell+1][\coord{\land}][m][\eosstr] = 1$. This constructs a coordinate $\coord{\past\past}$ simulating $\past(\tlf\land\past\tlf)$. Since the number of positions where $\transformer[\ell+1][\coord{\land}][m] = 1$ is bounded, this second attention sublayer is not subject to vanishing attention weights and reliably computes its output.
    
    Note that $\past(\tlf\land\past\tlf)$ evaluates to $\true$ when at least two earlier positions satisfy $\tlf$. If exactly one earlier position satisfies $\tlf$, then $\past\tlf$ holds but $\past(\tlf\land\past\tlf)$ does not. In this case, however, the first attention sublayer (producing $\coord{\past}$) already correctly simulates $\past\tlf$,  since the attention only needs to cover one position. 
    Thus, we take the logical disjunction of $\coord{\past}$ and $\coord{\past\past}$, implemented via the feedforward sublayer. The resulting coordinate $\coord{\lor}$ correctly simulates the formula $\past\tlf$. 

    \paragraph{Patch 2: Rounding Errors}
    Fixed-precision arithmetic introduces rounding errors when computing the attention weights $\aalpha_{n,m}$. The fraction $1/\nvalid$ may not be exactly representable by a finite-precision floating-point number. Moreover, the iterative summation in \cref{eq:attention} can further accumulate rounding errors. Consequently, the output $\transformer[\ell+0.5][\coord{\past}][n]$ may equal a value very close to $1$, when it should be exactly $1$. This could lead to incorrect results in subsequent computations.

    We address this with a thresholding mechanism implemented via a feedforward sublayer. 
    Define $\round{1}$ as the set of floating-point numbers that deviates from $1$ by at most $\delta$:
    \begin{equation}
    \label{eq:rounding}
    \round{1} = \left\{ \fpn \in \fpnset \mid |\fpn - 1| \leq \delta \right\}.
    \end{equation}
    In standard floating-point systems, such as IEEE 754 \citep{4610935}, $\delta$ is on the order of the machine precision, e.g., approximately $2^{-24}$ for single precision and $2^{-11}$ for half precision. Let $\fpnsmall$ be a small positive floating-point number such that $1/\fpnsmall$ is also representable, $\fpnsmall\cdot\frac{1}{\fpnsmall}=1$, and $\min(\round{1})-\fpnsmall > 0$. We rectify the value at $\coord{\past}$ using:
    Let $\fpnsmall$ be a small positive floating-point number such that $1/\fpnsmall$ is also representable, $\fpnsmall\cdot\frac{1}{\fpnsmall}=1$, and $\min(\round{1})-\fpnsmall > 0$. We rectify the value at $\coord{\past}$ using:
    \begin{equation}
        \label{eq:threshold}
        -\frac{1}{\fpnsmall}\left(\ReLU\left(\transformer[\ell+0.5][\coord{\past}][:] - \fpnsmall\right) - \ReLU\left(\transformer[\ell+0.5][\coord{\past}][:]\right)\right).
    \end{equation}
    This function outputs $1$ if $\transformer[\ell+0.5][\coord{\past}][n]\in \round{1}$, and $0$ if $\transformer[\ell+0.5][\coord{\past}][n]=0$. 
    To implement this with an FFN, select two hidden units $d'_1, d'_2$ and define:
    \begin{equation}
        \W[F1]_{d,:} = 
        \begin{cases}
            \onehot[\coord{\past}]^\top & \text{if } d=d'_1 \text{ or } d=d'_2,\\
            \zero^\top & \text{otherwise,}
        \end{cases}
        \qquad
        \bb[F1] = -\fpnsmall\onehot[d'_1].
    \end{equation}
    and
    \begin{equation}
        \W[F2]_{d,:} = 
        \begin{cases}
            -\frac{1}{\fpnsmall}(\onehot[d'_1]^\top+\onehot[d'_2]^\top) & \text{if } d=\coord{\past},\\
            \zero^\top & \text{otherwise,}
        \end{cases}
        \qquad
        \bb[F2] = \zero.
    \end{equation}
\end{proof}

So far, we have ignored layer normalization. We now show that its presence does not affect the representational power of transformers.
\begin{proposition}
\label{prop:ln}
    If a $\ptl$ formula $\tlf$ can be simulated by a transformer without layer normalization, then it can also be simulated by a transformer with layer normalization.
\end{proposition}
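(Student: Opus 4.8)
The plan is to reduce \cref{prop:ln} to the layer-normalization-neutralization technique of \citet{chiang-cholak-2022-overcoming}, specialized to the fact that the transformers produced by \cref{lemma:atomic2transformer,lemma:boolean2transformer,lemma:past2transformer} are especially simple: by fixed precision every hidden-state coordinate ranges over a finite subset of $\fpnset$, and every coordinate that carries the value of a (sub)formula is $\{0,1\}$-valued, with one coordinate held permanently at $1$. Given such a layer-norm-free transformer $\transformer$ simulating $\tlf$, I would build a transformer $\transformer'$ \emph{with} layer normalization that reproduces $\transformer$'s computation coordinate-for-coordinate in a designated block of dimensions, and then read out that coordinate with the classification head.

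The idea is to pad the hidden dimension so that $\LN$ acts as the identity at every hidden-layer boundary. First, for each coordinate $d$ of $\transformer$ I add a \emph{shadow} coordinate holding the negation of $d$ (obtained by negating the relevant rows of every $\W[Q],\W[K],\W[V]$ and of every feedforward weight and bias), plus a block of \emph{slack} coordinate-pairs. This forces $\sum_d x_d = 0$ at every position, so $\LN$ never shifts. Second, the slack block pins the squared norm: fixing a large even $C>2\dimension$, if the content block has (integer, bounded) squared norm $N$ at a position, I set exactly $\tfrac12(C-2N)$ of the slack pairs to $(+1,-1)$ and the rest to $(0,0)$, so the full vector has squared norm exactly $C$ at every position and every reachable input; since $N$ takes finitely many values this slack assignment is a fixed function of the content, computable per position. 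With the squared norm pinned to $C$ the layer-norm denominator is a constant $\kappa$, and taking $\vgamma=\kappa\,\vone$, $\vbeta=\vzero$ makes $\LN$ the identity. The embedding layer is padded into the same canonical form, each symbol mapping to a fixed vector of squared norm $C$.

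The subtle point — and the step I expect to be the real obstacle — is that $\LN$ is applied \emph{twice} per layer, in particular right after the attention sublayer, at which moment the slack block is stale: the content has been updated from $\vc_n$ to $\vc'_n$ but the slack still reflects $\vc_n$, so the squared norm is $C + 2(\norm{\vc'_n}^2-\norm{\vc_n}^2)\ne C$ and this first $\LN$ rescales the whole vector by an unknown, position-dependent factor $\lambda_n$ (it still does not shift, since the mean stays $0$). There is no way to repair the slack inside the attention sublayer, whose output is a convex combination of value vectors and hence linear in the previous layer, whereas the correct slack is a nonlinear function of $\vc'_n$, which is not available until the attention sum is formed. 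Instead, the feedforward sublayer absorbs the distortion: its input $\lambda_n\,\vx$ lies in a finite set (finitely many values of $\lambda_n$, finitely many un-scaled vectors), the scalar $\lambda_n$ can be read off the scaled bias coordinate, dividing by it recovers the clean $\vc'_n$ and the stale slack, and from there the feedforward network computes (i) $\transformer$'s own feedforward transformation of $\vc'_n$, (ii) the fresh slack for the resulting content, and (iii) the additive correction that, through the residual connection, restores the canonical squared-norm-$C$ form. All of this is one fixed function on a finite domain, hence realizable by a single-hidden-layer ReLU feedforward network of width independent of the input length (cf.\ the Boolean gadgets of \cref{lemma:boolean2transformer}). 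The second $\LN$ of the layer is then the identity, re-establishing the invariant, so the next layer's attention — whose queries, keys and $\pm\infty$ scores rely on exact $0/1$ inputs, as in \cref{lemma:past2transformer} — sees precisely the representation it would in $\transformer$. Iterating over all $\layernumber$ layers yields a transformer with layer normalization that simulates $\tlf$.
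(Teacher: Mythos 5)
Your proposal is correct in substance and rests on the same core idea as the paper's proof---the mirror-dimension trick of \citet{chiang-cholak-2022-overcoming}, i.e.\ duplicating every coordinate with its negation so that the per-position mean is zero and then choosing the layer-norm parameters to cancel the remaining effect---but you carry it considerably further than the paper does. The paper's proof is a three-sentence sketch asserting that mirror dimensions make both the mean \emph{and} the variance position-independent; strictly speaking mirror dimensions alone only control the mean (the number of active $0/1$ coordinates, and hence the squared norm, varies across positions), and a position-dependent rescaling would in fact break the gadgets of \cref{lemma:boolean2transformer,lemma:past2transformer}, which rely on exact $0/1$ values and on keys of the form $\transformer[\ell]-1$. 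Your slack-pair device that pins the squared norm to a constant $C$ closes exactly this gap, and your treatment of the post-attention $\LN$ (where the slack is stale, so the first normalization introduces a finite, position-dependent factor $\lambda_n$ that the feedforward sublayer then reads off the bias coordinate and corrects before the second $\LN$) addresses a second issue the paper's proof silently skips over. So your route is a more complete, invariant-maintaining version of the same neutralization argument; what it costs is extra width (shadows plus $C/2$ slack pairs) and one additional appeal that is not spelled out in the paper, namely that a single-hidden-layer ReLU feedforward network can realize an arbitrary map on the finite set of fixed-precision vectors that can occur (and that the required weights are exactly representable in $\fpnset$). That appeal is in the same spirit as \cref{prop:ff} and is standard, but you should state it as an explicit lemma if you want the argument to be self-contained; with that caveat, your construction is sound and arguably supplies the details the paper's own proof of \cref{prop:ln} leaves implicit.
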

\begin{proof}
    Let $\vx$ be a vector. Layer normalization is defined as follows:
    \begin{equation}
        \text{LayerNorm}(\vx) = \frac{\vx - \mu}{\sqrt{\sigma^2+\epsilon}} \cdot \gamma + \beta,
    \end{equation}
    where $\mu$ and $\sigma$ are the mean and standard deviation of the elements in $\vx$, $\gamma$ and $\beta$ are parameters, and $\epsilon$ is a small constant for numerical stability.

    We introduce, for every logical coordinate $d$, a mirror coordinate $\mir{d}$ and enforce
    \begin{equation}
        \transformer[\ell][\mir{d}][n][\eosstr] = 1 - \transformer[\ell][d][n][\eosstr]
    \end{equation}
    for all sublayers $\ell$ and positions $n$. This guarantees that for every position $n$, 
    \begin{equation}
        \mu_n = \frac{1}{2}\quad \text{and}\quad \sigma_n^2 = \frac{1}{4}.
    \end{equation}
    independently of the particular truth assignment.

    With this structure, we can choose the parameters of layer normalization such that the operation becomes the identity function (i.e., has no effect):
    \begin{equation}
        \gamma = \sqrt{\sigma_n^2+\epsilon} = \sqrt{\frac{1}{4}+\epsilon}, \quad \beta = \mu_n  = \frac{1}{2}.
    \end{equation}

    However, the attention sublayer may produce values close to $1$ but not exactly equal to it. Consequently, layer normalization can slightly perturb these values away from the intended Boolean values. We denote by $\round{0}$ and $\round{1}$ the sets of floating-point numbers that deviate from $0$ and $1$, respectively, by at most a small constant $\delta$, which is again on the order of the machine precision. This deviation is harmless, since the subsequent feedforward sublayer can rectify the outputs using the following thresholding function: let $d_1$ be the coordinate we hope to rectify and choose $\fpn_1,\fpn_2\in\fpnset$ with $\max(\round{0}) < \fpn_1 < \fpn_2 < \min(\round{1})$. Apply:
    \begin{equation}
        \frac{1}{\fpn_2-\fpn_1}\left(\left(\transformer[\ell][d_1][:][\eosstr] - \fpn_1\right) - \ReLU\left(\transformer[\ell][d_1][:][\eosstr]-\fpn_2\right)\right).
    \end{equation}
    To implement this with an FFN, select two hidden units $d'_1, d'_2$ and define:
    \begin{equation}
        \W[F1]_{d,:} = 
        \begin{cases}
            \onehot[d_1]^\top & \text{if } d=d'_1 \text{ or } d=d'_2,\\
            \zero^\top & \text{otherwise,}
        \end{cases}
        \qquad
        \bb[F1] = -\fpn_1\onehot[d'_1] - \fpn_2\onehot[d'_2].
    \end{equation}
    and 
    \begin{equation}
        \W[F2]_{d,:} = 
        \begin{cases}
            \frac{1}{\fpn_2-\fpn_1}(\onehot[d'_1]^\top - \onehot[d'_2]^\top) & \text{if } d=d_1,\\
            \zero^\top & \text{otherwise,}
        \end{cases}
        \qquad
        \bb[F2] = \zero.
    \end{equation}
    This yields $1$ if $\transformer[\ell][d_1][n][\eosstr]\in \round{1}$, and $0$ if $\transformer[\ell][d_1][n][\eosstr]\in \round{0}$.
\end{proof}

Now, we obtain the following theorem:
\ptltot*
\begin{proof}
    By applying structural induction to the formula, using the lemmas \cref{lemma:atomic2transformer}, \cref{lemma:boolean2transformer}, and \cref{lemma:past2transformer}, we demonstrate that each formula $\tlf$ can be simulated by dimension $d_\tlf$ in the transformer's hidden state. Furthermore, by \cref{prop:ln}, we can nullify the effect of layer normalization.
    
    Finally, the classification head only needs to copy the relevant dimension $d_\tlf$ as the output.
\end{proof}

\section{Characterizations of $\ptl$}
\label{app:characterizations}
In this section, we introduce three alternative formalisms and show that they are equivalent in expressive power to $\ptl$.

\subsection{Left-deterministic polynomials}
A \defn{monomial} over an alphabet $\alphabet$ is a language of the form:
\begin{equation}
\alphabet_0^* \syma_1 \alphabet_1^*\cdots \syma_n\alphabet_n^*,
\end{equation}
where $\syma_1,\ldots,\syma_k\in\alphabet$ and $\alphabet_0,\alphabet_1,\ldots,\alphabet_n\subseteq \alphabet$.

A monomial is called 
\begin{itemize}
    \item \defn{left-deterministic} if for every $k\in\{1,\ldots,n\}$, $\syma_k\not\in\alphabet_{k-1}$,
    \item \defn{right-deterministic} if for every $k\in\{1,\ldots,n\}$, $\syma_k\not\in\alphabet_{k}$,
    \item \defn{unambiguous} if it is either left-deterministic or right-deterministic.
\end{itemize}

A \defn{polynomial} is a finite union of monomials, and it is said to be left-deterministic (resp. right-deterministic or unambiguous) if it is a finite disjoint union of left-deterministic (resp. right-deterministic or unambiguous) monomials. 

\begin{testexample}
    The language \textsc{first} $\symb\kleene{\alphabet}$ is left-deterministic and therefore definable in $\ptl$. In contrast, the language \textsc{last} $\kleene{\alphabet}\symb$ is right-deterministic but not left-deterministic and thus not recognizable by $\ptl$.
\end{testexample}

\subsection{$\gR$-trivial monoids}
A \defn{semigroup} is a set endowed with an associative operator, and a \defn{monoid} is a semigroup additionally endowed with an identity element.
A canonical example is the set $\kleene{\alphabet}$, with string concatenation as the associative operation and the empty string $\varepsilon$ as the identity element.
A monoid $\monoid$ is said to be $\gR$-\defn{trivial} (resp. $\gL$-\defn{trivial}) if for all elements $s,t\in \monoid$, the condition $s\monoid = t\monoid$ (respectively, $\monoid s = \monoid t$) implies $s = t$.

We now define an equivalence relation on $\kleene{\alphabet}$ known as the \defn{syntactic congruence}. Given a language $\lang \subseteq \kleene{\alphabet}$, two strings $\strs, \strt \in \kleene{\alphabet}$ are \defn{syntactically equivalent}, written $\strs \equivalent \strt$, if and only if:
\begin{equation}
\forall \stru,\strv\in\kleene{\alphabet}\colon \stru\strs\strv\in\lang\Leftrightarrow\stru\strt\strv\in\lang.
\end{equation}
The equivalence class of a string $\str$ is denoted as $\equivClass{\str}$. 
The quotient monoid $\kleene{\alphabet} / \equivalent$, i.e., the set of equivalence classes under $\equivalent$, is called the \defn{syntactic monoid} of the language $\lang$.

\subsection{Partially ordered DFAs}
\label{app:po}
PODFA is introduced in \cref{sec:characterizations}.
Constructing the DFA for a given language is a useful method for assessing whether the language is definable in $\ptl$.
\begin{testexample}
    It is non-trivial to determine whether certain languages can be defined in $\ptl$. However, it becomes clearer when we consider its corresponding automaton.
    For instance, consider the language \textsc{dyck}-$(1,1)$, written as $\kleene{(\syma\symb)}$ in regular expression, the Dyck language over one type of parentheses with nesting depth limited to $1$. The minimal DFA accepting this language is shown below:
    \begin{center}
        \begin{tikzpicture}
            \node[state, initial] (q0) [] { $\stateq_0$ }; 
            \node[state, accepting] (q1) [right = of q0] { $\stateq_1$ }; 
            \draw[transition] (q0) edge[auto, bend left] node{$\syma$} (q1) 
            (q1) edge[auto, bend left] node{$\symb$} (q0);
        \end{tikzpicture}  
    \end{center}
    This DFA is not partially ordered, as it can revisit both $\stateq_1$ and $\stateq_0$. Therefore, \textsc{Dyck}-$(1,1)$ is not definable in $\ptl$.
\end{testexample}

\subsection{Equivalence}
\begin{table*}
\centering
\footnotesize
\caption{Relations between logics, formal languages, monoids, and finite automata.}
\label{tab:equivalence}
\begin{tabular}{ccccc}
    Temporal logic & First-order logic & Formal language & Monoid & Automata \\
    \midrule
    $\ltl$ & $\fo$ & Star-free & Aperiodic & Counter-free \\
    $\tl$ & $\fo[2]$ & Unambiguous polynomial & DA & $2$-PODFA \\
    $\ptl$ & $\pfo$ & Left-deterministic polynomial & $\gR$-trivial & PODFA
\end{tabular}
\end{table*}

We first show that every PODFA can be defined in $\ptl$.
\begin{lemma}
    \label{lemma:po2ptl}
    Let $\automaton=\dfatuple$ be a PODFA and $\str\in\alphabet$ a string of length $\length$. For every $\stateq\in\states$, there exists a formula $\ltlf{\stateq}$ in $\ptl$ such that $\automaton$ is in $\stateq$ upon reading $\str_{<n}$ if and only if $\str,n-1\models\ltlf{\stateq}$.
\end{lemma}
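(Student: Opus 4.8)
The plan is to build the formulas $\tlf_\stateq$ by induction on the partial order $\preceq$ witnessing that $\automaton = \dfatuple$ is partially ordered. The guiding idea is that $\tlf_\stateq$, evaluated at an arbitrary position $j$, should assert ``$\automaton$ is in state $\stateq$ after reading $\sym_1 \cdots \sym_j$''; specializing to $j = n-1$ then yields the statement, with the convention that position $0$ lies before the string (every atom false, $\past$ vacuous). Two structural facts about p.o.\ DFAs drive everything: reading any string, the sequence of visited states is non-decreasing in $\preceq$, so each state is visited during a single contiguous block of time steps; consequently, for each state the event ``$\automaton$ steps \emph{into} it from a strictly smaller state'' occurs at most once. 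For a state $\stateq$ write $A_\stateq \defeq \{\syma \in \alphabet \mid \trans(\stateq, \syma) = \stateq\}$ for its self-loop symbols and $\xi_\stateq \defeq \bigvee_{\syma \in \alphabet \setminus A_\stateq}\atom_\syma$ for ``the current symbol leaves $\stateq$''. Note $\qinit$ is the $\preceq$-minimum among reachable states, and unreachable states will be handled automatically (their formulas come out identically false).

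For the base case $\stateq = \qinit$, the automaton is in $\qinit$ exactly when it has never left it, so I set $\tlf_{\qinit} \defeq \lnot\big(\xi_{\qinit} \lor \past\,\xi_{\qinit}\big)$. For the inductive step, fix $\stateq \neq \qinit$ with $\tlf_\stater$ already defined for every $\stater \prec \stateq$. First I produce, for each $\stater \prec \stateq$, an auxiliary formula $\mathrm{pre}_\stater$ that, evaluated at position $m$, holds iff $\automaton$ is in $\stater$ after reading $\sym_1 \cdots \sym_{m-1}$ (``one symbol ago''). Using monotonicity, being in $\stater$ one step ago is equivalent to having visited $\stater$ at some earlier step \emph{and} never having left the down-set $\{\statep \mid \statep \preceq \stater\}$ at any earlier step; for $\stater \neq \qinit$ this is captured by $\mathrm{pre}_\stater \defeq \past\,\tlf_\stater \land \lnot\past\!\big(\lnot\bigvee_{\statep \preceq \stater}\tlf_\statep\big)$, which refers only to formulas of states $\preceq \stater \prec \stateq$, and for $\stater = \qinit$ by $\mathrm{pre}_{\qinit} \defeq \lnot\past\,\xi_{\qinit}$. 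Next I define the entry formula
\[
\mathrm{enter}_\stateq \defeq \bigvee_{\stater \prec \stateq,\ \syma \in \alphabet:\ \trans(\stater,\syma) = \stateq}\big(\mathrm{pre}_\stater \land \atom_\syma\big),
\]
which at position $m$ expresses ``$\automaton$ was in some $\stater \prec \stateq$ one step ago and the symbol just read sends it to $\stateq$''; by the second structural fact this holds at most one position. Finally, $\automaton$ is in $\stateq$ after $\sym_1\cdots\sym_j$ iff the entry event has occurred by step $j$ and no symbol outside $A_\stateq$ has appeared strictly after it, so I set
\[
\tlf_\stateq \defeq \big(\mathrm{enter}_\stateq \lor \past\,\mathrm{enter}_\stateq\big) \;\land\; \lnot\Big(\big(\past\,\mathrm{enter}_\stateq\big) \land \xi_\stateq\Big) \;\land\; \lnot\past\Big(\big(\past\,\mathrm{enter}_\stateq\big) \land \xi_\stateq\Big),
\]
and correctness follows from the two structural facts. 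The rejecting state is handled by $\tlf_\reject \defeq \lnot\bigvee_{\stateq \in \states}\tlf_\stateq$, since $\automaton$ is in $\reject$ iff it is in no state of $\states$; a routine structural check confirms each $\tlf_\stateq$ is a well-formed $\ptl$ formula, which finishes the induction.

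The main obstacle is that $\ptl$ offers only the unbounded $\past$ modality --- no ``previous step'' and no ``since'' --- whereas the natural description of ``in state $\stateq$ now'' wants both: the state one symbol ago (to detect the transition \emph{into} $\stateq$) and ``no symbol has left $\stateq$ since it was entered''. The proof hinges on replacing these by $\past$-only statements, which is legitimate precisely because the DFA is partially ordered: the run never decreases, each state is visited once, each state is entered once. I expect the real effort to lie in making these equivalences airtight --- getting the empty-prefix/position-$0$ and $\qinit$ corner cases right, and verifying that every auxiliary formula refers only to states already handled by the induction --- since the underlying arguments are short once the right reformulations are in hand.
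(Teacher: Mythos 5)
Your construction is correct and takes essentially the same route as the paper's proof: an induction over the partial order that, for each state $\stateq$, builds a formula saying ``the automaton has entered $\stateq$ via a transition from a strictly smaller state (whose formula is supplied by the inductive hypothesis) and no symbol leaving $\stateq$ has occurred since,'' relying on the same structural facts that p.o.\ runs are non-decreasing and each state is entered at most once. The only difference is bookkeeping: the paper takes the ``state just before the current symbol'' predicate as primitive and obtains the ``state after'' version by one final transition step, whereas you take the ``state after'' predicate as primitive and recover the ``one symbol ago'' predicate through $\past\,\tlf_\stater\land\lnot\past\bigl(\lnot\bigvee_{\statep\preceq\stater}\tlf_\statep\bigr)$ --- an equivalent reformulation rather than a different argument.
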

\begin{proof}
    We first construct a formula $\stlf{\stateq}$ for each state $\stateq\in\states$, such that $\str, n-1 \models \stlf{\stateq}$ if and only if $\automaton$ is in $\stateq$ after reading $\str_{<n-1}$, i.e., \emph{right before} reading $\sym_{n-1}$. We proceed by induction on the partial order $\preceq$ defined over $\states$.
    \paragraph{Base case.} The initial state $\qinit$ can be defined by ensuring that no prior symbol caused a transition out of $\qinit$:
    \begin{equation}
        \stlf{\qinit} = \bigwedge_{\syma\in\{\syma\mid \trans(\qinit, \syma) \neq \qinit\}}\lnot \past \atom_{\syma}.
    \end{equation}
    \paragraph{Induction step.} Assume that for all $\stateq' \preceq \stateq$ with $\stateq' \neq \stateq$, the formulas $\stlf{\stateq'}$ have already been constructed. Then, $\automaton$ has entered $\stateq$ at some prior point if:
    \begin{equation}
        \etlf{\stateq} = \bigvee_{\stateq'\in\states}\bigvee_{\syma\in\{\syma\mid \trans(\stateq', \syma) = \stateq\}} \past (\stlf{\stateq'} \land \atom_{\syma}).
    \end{equation} 
    To define $\stlf{\stateq}$, we must ensure that $\stateq$ has been entered and it is not exited yet:
    \begin{equation}
        \stlf{\stateq} = \etlf{\stateq} \land \bigwedge_{\syma\in\{\syma\mid \trans(\stateq, \syma) \neq \stateq\}}\lnot \past (\etlf{\stateq} \land \atom_{\syma}).
    \end{equation}

    Once we obtain $\stlf{\stateq}$, we can define $\ltlf{\stateq}$ as follows:
    \begin{equation}
        \ltlf{\stateq} = \bigvee_{\stateq'\in\states}\bigvee_{\syma\in\{\syma\mid \trans(\stateq', \syma) = \stateq\}} \stlf{\stateq'} \land \atom_\syma
    \end{equation}
    Finally, a string $\str$ is accepted by $\automaton$ if it reaches a final state:
    \begin{equation}
        \bigvee_{\stateq\in\final} \tlf_{\stateq}.
    \end{equation}
\end{proof}

Now we move on to prove the direction from $\ptl$ to $\gR$-trivial monoid. Our proof is inspired by the proof of Proposition 4 in \citet{doi:10.1142/S0129054108005802}. 

A key characterization of $\gR$-trivial monoids is given by \citet{BRZOZOWSKI198032}:
\begin{proposition}[\citet{BRZOZOWSKI198032}]
    \label{prop:r_trivial}
    $\monoid$ is $\gR$-trivial if and only if there exists an integer $K>0$ such that for all $s,t\in \monoid$, we have $(st)^K s=(st)^K$.
\end{proposition}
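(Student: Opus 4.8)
The plan is to prove the two implications separately, after recalling that in our setting $\monoid$ is finite (it is the syntactic monoid of a regular language) and that $\gR$-triviality means: $s\monoid = t\monoid$ implies $s=t$, where $s\monoid = \{sm : m\in\monoid\}$.

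For the direction ``identity $\Rightarrow$ $\gR$-trivial'', I would argue directly. Suppose $K>0$ satisfies $(st)^K s=(st)^K$ for all $s,t$, and take $a,b$ with $a\monoid=b\monoid$. Then $a=bp$ and $b=aq$ for some $p,q\in\monoid$; substituting one relation into the other gives $a=aqp$, hence $a=a(qp)$ and, iterating, $a=a(qp)^K$. The key move is to feed this absorbing power into the hypothesis:
\begin{equation*}
b=aq=\bigl(a(qp)^K\bigr)q=a\,(qp)^K q=a\,(qp)^K=a,
\end{equation*}
where the penultimate equality is the identity $(qp)^K q=(qp)^K$ applied with $(s,t)=(q,p)$. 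Hence $a=b$, so $\monoid$ is $\gR$-trivial. This direction is short; the only idea is spotting that $a(qp)^K=a$ enables the use of the identity.

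For the converse, ``$\gR$-trivial $\Rightarrow$ identity'', I would first prove a structural lemma: in a finite $\gR$-trivial monoid the powers of every element stabilize, i.e.\ there is $\omega>0$ with $m^{\omega}=m^{\omega+1}$ for all $m\in\monoid$. For each $m$, the subsemigroup $\{m,m^2,\dots\}$ is finite, hence eventually periodic, and its periodic part is a finite cyclic group $G_m\subseteq\monoid$; but any subgroup $G$ of $\monoid$ is trivial, since for $a,b\in G$ we have $a=b\,(b^{-1}a)\in b\monoid$ and, symmetrically, $b\in a\monoid$, so $a\monoid=b\monoid$ and $\gR$-triviality forces $a=b$. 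Thus $G_m$ is a singleton, the period is $1$, the powers of $m$ are eventually constant, and taking $\omega$ to be the largest of the finitely many stabilization indices makes this uniform. Now set $K=\omega$. For any $s,t$ we get $(st)^{K+1}=(st)^K$, i.e.\ $\bigl((st)^K s\bigr)t=(st)^K$, so $(st)^K\in (st)^K s\,\monoid$, while trivially $(st)^K s\in(st)^K\monoid$; hence $(st)^K s\mathrel{\gR}(st)^K$, and $\gR$-triviality yields $(st)^K s=(st)^K$.

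I expect the main obstacle to be the converse direction, and within it the structural lemma that powers stabilize: this is where finiteness of $\monoid$ is genuinely needed (the statement fails for infinite $\gR$-trivial monoids such as $(\N,+)$), and it requires combining eventual periodicity of cyclic subsemigroups with the observation that subgroups of an $\gR$-trivial monoid are trivial. An alternative is to simply cite the classical theory of finite $\gR$-trivial monoids (Eilenberg, or the Brzozowski--Fich source), since the result is standard; but the self-contained argument above is short enough to include in full.
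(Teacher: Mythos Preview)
Your proof is correct in both directions. The paper does not actually prove this proposition; it simply cites it as a known characterization due to \citet{BRZOZOWSKI198032} and uses it as a black box. Your self-contained argument is the standard one: the forward direction via the absorption trick $a=a(qp)^K$ combined with the identity, and the converse via aperiodicity of finite $\gR$-trivial monoids (every subgroup is trivial, hence $m^\omega=m^{\omega+1}$), followed by the observation that $(st)^{K+1}=(st)^K$ forces $(st)^Ks\mathrel{\gR}(st)^K$. There is nothing to compare against, and nothing to fix.
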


Thus, we prove the following lemma as preparation:
\begin{lemma}
\label{lemma:ptl2rtprep}
    Let $\tlf$ be a formula in $\ptl$ with operator depth at most $K$ and $K>0$. For every $\stru,\strv,\strs,\strt\in\kleene{\alphabet}$, we have 
    \begin{equation}
        \stru(\strs\strt)^K\strs\strv\models \tlf \quad \text{ if and only if } \quad \stru(\strs\strt)^K\strv\models \tlf. 
    \end{equation}
\end{lemma}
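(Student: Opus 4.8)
The plan is to prove a stronger, position-indexed statement by structural induction on $\tlf$, from which the stated equivalence follows by evaluating at the terminal position. Write $\strx = \stru(\strs\strt)^K\strs\strv$ and $\stry = \stru(\strs\strt)^K\strv$; these two strings agree on the prefix $\stru(\strs\strt)^K$ and on the suffix $\strv$, differing only by the middle block $\strs$ present in $\strx$ but absent in $\stry$. I would set up an explicit position-correspondence $\pi$ between $\stry$ and $\strx$: positions inside the shared prefix $\stru(\strs\strt)^K$ map to themselves, and positions inside the final $\strv$ map to their shifted copies. The claim to prove by induction is: for every subformula $\psi$ of $\tlf$ with $\od(\psi)\le j$, and for every position $n$ in $\stry$ that lies either in the prefix up to (and including) the last $K-j$ copies of $\strt$, or in the suffix $\strv$, we have $\stry,n\models\psi \iff \strx,\pi(n)\models\psi$. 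The ``budget'' $K-j$ shrinks as the operator depth grows, which is exactly what a $\past$ step can afford.

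The base case ($\psi = \atom_\syma$, $j=0$) is immediate since corresponding positions carry the same symbol. The Boolean cases ($\land$, $\lnot$) are trivial because they do not change operator depth and the claim is stated pointwise. The only real work is the $\past$ case: suppose $\psi = \past\psi_1$ with $\od(\psi_1)\le j-1$, and $n$ is an admissible position for depth $j$. If $\stry,n\models\past\psi_1$, there is some $m<n$ with $\stry,m\models\psi_1$. If $m$ already lies in the admissible region for depth $j-1$, the inductive hypothesis transfers it to $\strx$ directly. If instead $m$ lies in one of the later $\strt$-copies (or the inserted $\strs$-region on the $\strx$ side), I use the periodicity of $(\strs\strt)^K$: because at least $K-(j-1)\ge 1$ full periods still separate $n$ from the ``danger zone,'' I can shift $m$ backward (or forward) by a whole period $\strs\strt$ to land on an admissible witness position carrying the same symbol-context, and then apply the inductive hypothesis there. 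The converse direction ($\strx,\pi(n)\models\past\psi_1 \Rightarrow \stry,n\models\past\psi_1$) is symmetric: a witness in the extra $\strs$-block of $\strx$ gets relocated by a period shift into the common prefix of $\stry$.

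Finally, evaluating at the position just past the end of each string (position $\length+1$, which lies in the ``suffix $\strv$'' region and is admissible for all $j\le K$ since $\od(\tlf)\le K$) gives $\stru(\strs\strt)^K\strs\strv\models\tlf \iff \stru(\strs\strt)^K\strv\models\tlf$. The main obstacle I anticipate is getting the bookkeeping of the admissible region exactly right — precisely quantifying how many copies of $\strt$ must be ``held in reserve'' so that a chain of $j$ nested $\past$ operators can always retreat into the periodic part without ever reaching an ill-defined position, and checking that a single-period shift genuinely preserves the truth of subformulas of the appropriate depth (which is really where Proposition~\ref{prop:r_trivial}'s exponent $K$ is being consumed). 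Once the invariant is phrased carefully, each inductive step is a short argument; the delicate part is stating the invariant so that all steps go through simultaneously.
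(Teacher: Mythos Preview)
Your architecture---induction on operator depth with a depth-dependent safe region---matches the paper's, and you correctly locate the crux at the $\past$ step. But the invariant you propose (a fixed bijection $\pi$ together with an admissible region) is too rigid to carry that step. Concretely, take the converse direction at a suffix position $n$: suppose $\strx,\pi(n)\models\past\psi_1$ with witness $m<\pi(n)$ lying in the extra $\strs$-block of $\strx$. This $m$ is not in the range of $\pi$, so your inductive hypothesis is silent about it. You propose to period-shift $m$ to some $m'$ in the common prefix and then apply the IH at $m'$; but the IH at $m'$ reads $\stry,m'\models\psi_1\iff\strx,m'\models\psi_1$, so you would first need $\strx,m'\models\psi_1$---and a period shift preserves only the symbol at that position, not the truth of a depth-$(j{-}1)$ formula. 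Establishing that period shifts preserve $\ptl$ truth at bounded depth is essentially the lemma itself with smaller parameters, so the step is circular as written.

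The paper's repair is to strengthen the invariant from a bijection to a \emph{relation}: a pair $(m,m')$, with $m$ a position in $\strx$ and $m'$ one in $\stry$, is declared $k$-close if either the two coincide under the natural alignment, or both lie in a depth-dependent middle block and carry the same symbol. The inductive claim becomes $\strx,m\models\psi\iff\stry,m'\models\psi$ for every $k$-close pair and every $\psi$ of depth at most $k$. Now when the witness $m$ falls in the extra $\strs$-block, you pick any same-symbol position $m'$ one period earlier (a valid position in $\stry$); the pair $(m,m')$ is $(k{-}1)$-close, and the strengthened IH \emph{directly} delivers $\stry,m'\models\psi_1$---no within-string shift, no circularity. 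The extra flexibility of allowing $m\neq m'$ inside the periodic block is exactly what your fixed-$\pi$ formulation lacks.
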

\begin{proof}
    The case $\strs = \varepsilon$ is trivial, so we assume $\strs \neq \varepsilon$.
    
    Let $\str=\stru(\strs\strt)^K\strs\strv$ and $\str'=\stru(\strs\strt)^K\strv$. As $\str'$ is a subsequence of $\str$, we align positions in $\str'$ with a subset of positions in $\str$. Define $\length=|\str|$. We consider position $\length + 1$ to point just beyond the end of $\str$.   

    We define a pair $(n,n')\in\{1,\ldots,\length+1\}$ as \defn{legal} if 
    \begin{equation}
        (n=n'=\length+1) \quad \lor \quad \left(\sym_{n} = \sym_{n'} \land (n'<\left|\stru(\strs\strt)^K\right| \lor n'>\left|\stru(\strs\strt)^K\strs\right|)\right).
    \end{equation}
    Next, define the middle block of $\str$ as:
    \begin{equation}
        \ball{k}=\left\{n\;\middle|\;\left|\stru(\strs\strt)^k\right|<n<\left|\stru(\strs\strt)^K\strs\right|\right\},
    \end{equation}
    A legal pair $(n, n')$ is said to be \defn{$k$-close} if:
    \begin{equation}
        n=n' \quad \lor \quad  n,n'\in\ball{k}.
    \end{equation}

    \medskip
    \noindent
    We now prove the following inductive claim: Let $\tlf$ be a formula in $\ptl$ with operator depth at most $k$, for some $0\leq k\leq K$. For every $k$-close pair $(n,n')$, we have 
    \begin{equation}
        \str,n\models \tlf \quad \text{ if and only if } \quad \str',n'\models \tlf.
    \end{equation}

    We proceed by induction on $k$.
    \paragraph{Base case.} $k=0$: In this case, $\tlf$ is a Boolean combination of atomic formulas $\atom_\syma$. $(n,n')$ being a legal pair implies either $n = n'=\length+1$ or $\sym_n = \sym_{n'}$, so the claim holds trivially.
    \paragraph{Induction step.} Assume the claim holds for depth $k-1$. 
    Let $\tlf_1=\past \tlf$ where $\tlf$ has depth $k-1$.
    Suppose $\str, n \models \tlf_1$. Then there exists a position $m < n$ such that $\str,m\models \tlf$.
    For $\str',n'\models \tlf_1$ to hold, we need to identify a position $m'< n'$ such that $\str',m'\models\tlf$. By the inductive hypothesis, it suffices for $m'<n'$ and for $(m,m')$ to be $(k-1)$-close.
    There are two cases:
    \begin{itemize}
        \item Case 1: $m\notin\ball{k}$. Then we can set $m'=m$. $(m,m')$ is $(k-1)$-close by definition. We now need to show that $m'<n'$. 
        \begin{itemize}
            \item Subcase 1.1: $n,n'\in\ball{k}$. Since $m\notin\ball{k}$ and $m<n$, we can conclude $m\leq \left|\stru(\strs\strt)^{k}\right|$. Thus, $m'=m<n'$ because $n'\in\ball{k}$.
            \item Subcase 1.2: $n=n'$. Then we have $m'=m<n=n'$.
        \end{itemize}
        \item Case 2: $m\in\ball{k}$. Then, as $\ball{k-1} \setminus \ball{k}$ covers all symbols in $\ball{k}$, there exists $m' \in \ball{k-1} \setminus \ball{k}$ such that $\sym_{m'} = \sym_m$. Again, $(m,m')$ is $(k-1)$-close since $m,m'\in\ball{k-1}$. We need to show that $m'<n'$:
        \begin{itemize}
            \item Subcase 2.1: $n,n'\in\ball{k}$. Since $m' \in \ball{k-1} \setminus \ball{k}$ and $n' \in \ball{k}$, we have $m' < n'$.
            \item Subcase 2.2: $n=n'$. Since $m' \in \ball{k-1} \setminus \ball{k}$ and $m \in \ball{k}$, we have $m'<m<n=n'$.
        \end{itemize} 
    \end{itemize}

    The converse direction, from $\str',n'\models \tlf$ to $\str,n\models \tlf$ follows analogously.
    \medskip
    \noindent
    
    By applying the claim with $k = K$, $n = \length+1$, and $n' = \length+1$, we conclude that $\str \models \tlf$ if and only if $\str' \models \tlf$.
\end{proof}

This result then follows straightforwardly:
\begin{lemma}
\label{lemma:ptl2rt}
    Every $\ptl$-definable language has a $\gR$-trivial syntactic monoid.
\end{lemma}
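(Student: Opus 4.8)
The plan is to derive this directly from \cref{lemma:ptl2rtprep} together with the Brzozowski characterization in \cref{prop:r_trivial}. Fix a $\ptl$-definable language $\lang = \lang(\tlf)$, and set $K \defeq \max(\od(\tlf), 1)$, so that $K > 0$ and $\tlf$ still has operator depth at most $K$. The strategy is to show that this $K$ serves as the integer demanded by \cref{prop:r_trivial} for the syntactic monoid $\monoid = \kleene{\alphabet}/\equivalent$.

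First I would unpack the syntactic congruence. For arbitrary strings $\strs, \strt \in \kleene{\alphabet}$, I want to establish $(\strs\strt)^K \strs \equivalent (\strs\strt)^K$. By definition of $\equivalent$, this amounts to showing that for all $\stru, \strv \in \kleene{\alphabet}$ we have $\stru (\strs\strt)^K \strs \strv \in \lang \iff \stru (\strs\strt)^K \strv \in \lang$. Since $\lang = \lang(\tlf)$ and membership means satisfaction of $\tlf$ (interpreted at the position just beyond the end of the string), this biconditional is precisely the conclusion of \cref{lemma:ptl2rtprep} applied with this $K$, using that $\tlf$ has operator depth at most $K$ and $K > 0$. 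Hence $(\strs\strt)^K \strs \equivalent (\strs\strt)^K$ for all $\strs, \strt \in \kleene{\alphabet}$.

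Next, I would push this identity down to $\monoid$. Writing $\equivClass{\cdot}$ for the quotient map $\kleene{\alphabet} \to \monoid$, which is a monoid homomorphism, every element of $\monoid$ has the form $\equivClass{\strs}$, and the homomorphism property gives $\equivClass{(\strs\strt)^K} = (\equivClass{\strs}\equivClass{\strt})^K$ and $\equivClass{(\strs\strt)^K\strs} = (\equivClass{\strs}\equivClass{\strt})^K \equivClass{\strs}$. Combining with the congruence just proved, for any $s = \equivClass{\strs}$ and $t = \equivClass{\strt}$ in $\monoid$ we obtain $(st)^K s = (st)^K$. By \cref{prop:r_trivial}, $\monoid$ is $\gR$-trivial, which is the claim.

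There is essentially no obstacle beyond invoking the right prior results: the entire combinatorial work is already done in \cref{lemma:ptl2rtprep}. The only minor points that need care are (i) the degenerate case $\od(\tlf) = 0$, which we sidestep by taking $K = 1$ (a formula of operator depth $0$ also has operator depth at most $1$, and its language is $\emptyset$ or $\kleene{\alphabet}$, whose syntactic monoid is trivially $\gR$-trivial anyway), and (ii) being explicit that the quotient map is a monoid homomorphism, so that the power $(st)^K$ in $\monoid$ is the image of the string $(\strs\strt)^K$ under $\equivClass{\cdot}$.
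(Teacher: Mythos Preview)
Your proposal is correct and follows essentially the same approach as the paper: the paper's proof is a one-line appeal to \cref{lemma:ptl2rtprep} together with the characterization in \cref{prop:r_trivial}, and you have simply spelled out the details of that appeal (unpacking the syntactic congruence, pushing the identity down via the quotient homomorphism, and handling the degenerate $\od(\tlf)=0$ case).
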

\begin{proof}
    \cref{lemma:ptl2rtprep} suffices to show that every $\ptl$-definable language satisfies the characterization in \cref{prop:r_trivial}.  
\end{proof}

We now have all the pieces in place to prove the main theorem.
\thmptl*
\begin{proof}
    \citet{BRZOZOWSKI198032} showed the equivalence of (1), (2), and (3). To complete the picture, it remains to incorporate (4), which follows from \cref{lemma:po2ptl} and \cref{lemma:ptl2rt}.
\end{proof}

The relationships between various temporal logics, first-order logics, formal languages, monoids, and finite automata are summarized in \cref{tab:equivalence}. We treat the last row (for $\ptl$) in this paper. For further details on the other classes ($\tl$ and $\ltl$), interested readers can refer to \citet{mcnaughton1971counter} and \citet{doi:10.1142/9789812776884_0021}.

\section{Transformer Language Models}
\label{app:tlm}
The transformer architecture follows the specification in \cref{app:transformer}, with one modification: the final output layer is replaced by a language modeling head $\lmhead\colon \fpnset^{\dimension}\rightarrow\fpnset^{|\eosalphabet\cup\{\unk\}|}$. $\lmhead$ is defined as follows:
\begin{equation}
    \lmhead(\transformer[\layernumber][][][\str_{<n}]) = \softmax\left(\W[L]\transformer[\layernumber][:][n-1][\str]+\bb[L]\right)
\end{equation}
where $\W[L]\in\fpnset^{\left(\eosalphabetsize+1\right)\times\dimension}$ and $\bb[L]\in\fpnset^{\eosalphabetsize+1}$ are learnable parameters.

We have laid the groundwork that makes the proofs of the following two theorems straightforward.
\tlmtopfo*
\begin{proof}
    The mapping $\lmhead$ consists of a linear layer followed by a $\softmax$.
    Since both the linear layer and $\softmax$ are definable in $\pfo$ as shown in \cref{app:transformer2pfo}, the result follows directly.
\end{proof}

\pototlm*
\begin{proof}
    Let $\automaton = \dfatuple$ be a PODFA, and let $\blm$ be its induced LM.
    By \cref{lemma:po2ptl}, for each state $\stateq \in \states$, there exists a $\ptl$ formula $\tlf_{\stateq}$ that characterizes the automaton being in state $\stateq$.
    Then, by \cref{thm:ptl2transformer}, each such formula can be simulated by a dedicated dimension in the hidden state of a transformer.
    
    Since $\automaton$ is deterministic, at each position, there is exactly one coordinate that takes the value $1$, while the rest are $0$. Therefore, the final prediction head effectively acts as a lookup table that maps each dimension to the target distribution $\localblm$ induced by $\automaton$, which depends solely on the state $\automaton$ is in.
\end{proof}

\section{Experiments}
\label{app:experiments}
In this section, we present additional empirical results that align with our theoretical findings.
\subsection{Language recognition}
We supplement \cref{sec:lr} with detailed descriptions of the experimental setup, tasks, and results.
\subsubsection{Experimental setup}
\label{app:experimental_setup}
Our experimental setup follows \citet{deletang2023neural, butoi2024trainingneuralnetworksrecognizers}. We use a transformer with soft attention, strict future masking, $\layernumber=5$ layers, model size $\dimension=64$, and NoPE. Training strings are of length up to 40, while test strings range from length 41 to 500. The model is trained for 1,000,000 steps with a batch size of 128. For evaluation, we generate 512 samples per test length.
For comparison, we also train a long short-term memory (LSTM) \citep{Hochreiter97} with a hidden size of $256$.
Each experiment is run with $5$ different random seeds and $3$ learning rates $(1\times 10^{-4},3\times 10^{-4},5\times 10^{-4})$. 

All experiments were conducted on a single GPU with 24 GB of memory, each taking approximately one hour to complete. Our code is adapted from \url{https://github.com/google-deepmind/neural_networks_chomsky_hierarchy}, licensed under the Apache License, Version 2.0.

\subsubsection{Languages}
\label{app:tasks}
We consider five language classes, arranged in a strict inclusion hierarchy---each class is a proper subset of the one preceding it. For each class, we select one or more representative languages.
\paragraph{Counter languages.} Counter languages are languages that can be recognized by counter machines---finite-state automata augmented by a number of counters \citep{Fischer1968CounterMA, DBLP:journals/corr/abs-2004-06866}. 
In our experiments, we choose one of the simplest counter languages: \textsc{cnt} $\syma^n\symb^n$, the set of strings consisting of an arbitrary number of $\syma$s followed by an equal number of $\symb$s. 

\paragraph{Regular languages.} A defining characteristic of non-star-free regular languages is modular counting. For instance, \textsc{parity} $\syma^*(\symb\syma^*\symb\syma^*)^*$, the language of binary strings with an even number of $\symb$s, is one of the simplest instances of this class.  

\paragraph{Star-free languages.} Languages that are definable in $\ltl$. We focus on three common examples that are not definable in $\tl$:
\begin{itemize}[leftmargin=*]
\item \textsc{dyck}-$(1,2)$: $(\syma(\syma\symb)^*\symb)^*$, the Dyck language (well-balanced parentheses) with $1$ pair of parentheses, limited to depth $2$.
\item \textsc{dyck}-$(1,1)$: $(\syma\symb)^*$, the Dyck language with $1$ pair of parentheses, limited to depth $1$. This is also a canonical example of strictly locally testable languages, where string membership depends only on adjacent symbol blocks \citep{mcnaughton1971counter}.
\item \textsc{lt}-2: $\kleene{\alphabet}\syma\symb\kleene{\alphabet}$ with $|\alphabet|>2$, the set of strings containing $\syma\symb$ as a substring (symbols appearing contiguously). This is an example of a locally testable language \citep{mcnaughton1971counter}.
\end{itemize} 

\paragraph{Unambiguous polynomials.} Languages that are definable in $\tl$. We are interested in those not definable in $\ptl$, i.e., right-deterministic but not left-deterministic polynomials. We select two such languages:
\begin{itemize}[leftmargin=*]
\item \textsc{rdp}-1: $\kleene{(\alphabet\setminus\{\symb_0\})}\syma\kleene{(\alphabet\setminus\{\syma, \symb_1\})}$, a simple right-deterministic monomial.
\item \textsc{last}: $\kleene{\alphabet}\symb$, the language of strings ending with $\symb$. It can be seen as the simplest representative of the class.
\end{itemize}

\paragraph{Left-deterministic polynomials.} We now consider languages that are definable in $\ptl$, selecting five examples that serve as contrasts to the previously discussed ones.:
\begin{itemize}[leftmargin=*]
\item \textsc{pt}-2: $\kleene{\alphabet}\syma\kleene{\alphabet}\symb\kleene{\alphabet}$ with $|\alphabet|>2$, the language of strings that contain $\syma\symb$ as a subsequence. Languages of this type are known as piecewise testable languages \citep{10.1007/3-540-07407-4_23}.
\item \textsc{lt}-1: $\kleene{\alphabet}\syma\kleene{\alphabet}$, the set of strings that contain $\syma$ as a substring. This is the simplest case in both locally testable and piecewise testable languages.
\item \textsc{ldp}-1: $\kleene{(\alphabet\setminus\{\syma,\symb_0\})}\syma\kleene{(\alphabet\setminus\{\symb_1\})}$, a left-deterministic monomial symmetrical to \textsc{rdp}-1.
\item Although \textsc{ldp}-1 and \textsc{rdp}-1 are symmetrical, the former can be recognized by a DFA with only 2 states, whereas the latter requires 3 (see \cref{fig:automata}). For a fair comparison, we also include \textsc{ldp}-2 $\kleene{(\alphabet\setminus\{\syma_1,\symb_0\})}\syma_1\kleene{(\alphabet\setminus\{\syma_2,\symb_1\})}\syma_2\kleene{(\alphabet\setminus\{\symb_2\})}$, which also requires 3 states.
\item \textsc{first}: $\symb\kleene{\alphabet}$, the set of strings beginning with $\symb$. 
\end{itemize}

\paragraph{Sample generation.} For each language, we construct both positive and negative samples. Some examples are constructed adversarially to increase the difficulty of the classification task.:
\begin{itemize}
\item \textsc{cnt}: Negative samples contain one fewer $\syma$ or $\symb$.
\item \textsc{parity}: Negative samples contain an odd number of $\symb$s.
\item \textsc{dyck}-$(1,2)$, \textsc{dyck}-$(1,1)$:  One symbol in a positive sample is flipped. Since these languages require even-length strings, we only use even-length inputs.
\item \textsc{lt}-2 and \textsc{lt}-1: Negative samples omit $\syma\symb$ (resp. $\syma$) as a substring. Positive samples are constrained to include exactly one such occurrence.
\item \textsc{rdp}-1, \textsc{ldp}-1, and \textsc{ldp}-2: Negative samples contain a single incorrect symbol.
\item \textsc{last} and \textsc{first}: Negative samples do not end or begin with $\symb$, respectively.
\item \textsc{pt}-2: Negative samples lack the $\syma\symb$ subsequence.
\end{itemize}

\begin{figure*}[t!]
    \centering\small
    \begin{subfigure}[c]{0.3\textwidth}
        \centering
        \begin{tikzpicture}
            \node[state, initial] (q0) [] { $\stateq_0$ }; 
            \node[state] (q1) [below = of q0] { $\stateq_1$ }; 
            \node[state, accepting] (q2) [below = of q1] { $\stateq_2$ }; 
            \draw[transition] (q0) edge[auto, bend left] node{$\syma$} (q1) 
            (q1) edge[auto, bend left] node{$\symb_1$} (q0) 
            (q1) edge[auto] node{$\symb_0$} (q2)
            (q0) edge[auto, loop right] node{$\alphabet\setminus\{\syma,\symb_0\}$} (q0)
            (q1) edge[auto, loop right] node{$\alphabet\setminus\{\symb_0,\symb_1\}$} (q1)
            (q2) edge[auto, loop right] node{$\alphabet\setminus\{\syma,\symb_1\}$} (q2);
        \end{tikzpicture}  
        \caption{\textsc{rdp}-1}
    \end{subfigure}
    \begin{subfigure}[c]{0.3\textwidth}
        \centering
        \begin{tikzpicture}
            \node[state, initial] (q0) [] { $\stateq_0$ }; 
            \node[state, accepting] (q1) [below = of q0] { $\stateq_1$ }; 
            \draw[transition] (q0) edge[auto] node{$\syma_1$} (q1) 
            (q0) edge[auto, loop right] node{$\alphabet\setminus\{\syma_1,\symb_0\}$} (q0)
            (q1) edge[auto, loop right] node{$\alphabet\setminus\{\symb_1\}$} (q1);
        \end{tikzpicture}  
        \caption{\textsc{ldp}-1}
        \label{fig:automata_ldp1}
    \end{subfigure}
    \begin{subfigure}[c]{0.3\textwidth}
        \centering
        \begin{tikzpicture}
            \node[state, initial] (q0) [] { $\stateq_0$ }; 
            \node[state] (q1) [below = of q0] { $\stateq_1$ }; 
            \node[state, accepting] (q2) [below = of q1] { $\stateq_2$ }; 
            \draw[transition] (q0) edge[auto] node{$\syma_1$} (q1) 
            (q1) edge[auto] node{$\syma_2$} (q2)
            (q0) edge[auto, loop right] node{$\alphabet\setminus\{\syma_1,\symb_0\}$} (q0)
            (q1) edge[auto, loop right] node{$\alphabet\setminus\{\syma_2,\symb_1\}$} (q1)
            (q2) edge[auto, loop right] node{$\alphabet\setminus\{\symb_2\}$} (q2);
        \end{tikzpicture}  
        \caption{\textsc{ldp}-2}
    \end{subfigure}
    \caption{DFAs for the given languages. Nodes represent states, and arrows represent transitions. The initial state is indicated by an incoming arrow with no source node, and accepting (final) states are shown with double circles. }
    \label{fig:automata}
\end{figure*}
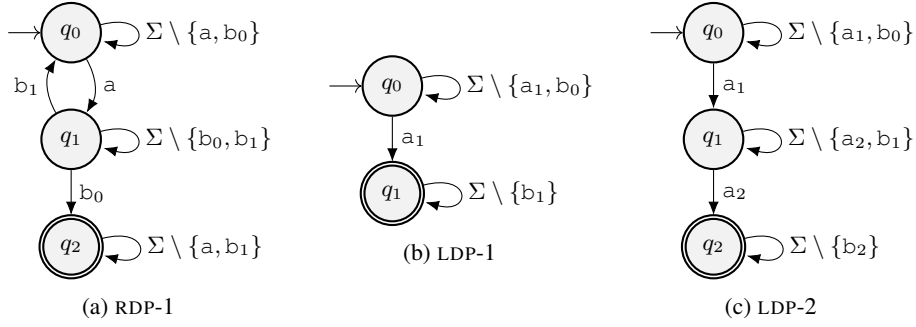

\subsubsection{Results}
\label{app:results}
We compute classification accuracy and report both the maximum and mean values across all runs in \cref{tab:results}. The LSTM achieves perfect accuracy on all tasks, consistent with previous work showing that LSTMs can recognize regular languages \citep{merrill-2019-sequential} and implement counting mechanisms \citep{weiss-etal-2018-practical}. This confirms that the tasks are learnable given the available training data. 

\paragraph{Counter languages.} While several papers have shown that counting quantifiers can be simulated by arbitrary-precision \citep{pmlr-v202-chiang23a,yang2024countingliketransformerscompiling} or log-precision transformers \citep{NEURIPS2023_a48e5877}, our results indicate that a fixed-precision transformer cannot recognize \textsc{cnt}. The model achieves a maximum accuracy of only $83.3\%$. This finding contrasts with \citet{bhattamishra-etal-2020-ability}, who report that transformers can recognize $\syma^n\symb^n\symc^n$. The discrepancy may stem from our evaluation on longer input lengths.

\paragraph{Regular languages.} In line with \citet{hahn-2020-theoretical}, we find that the transformer completely fails to learn the non-star-free regular language \textsc{parity}, reaching at most $52.1\%$ accuracy. Although \citet{chiang-cholak-2022-overcoming} design a transformer with customized positional encodings capable of recognizing \textsc{parity}, these encodings are not always representable under fixed precision, and the resulting architecture is not learnable under standard training conditions.

\paragraph{Star-free languages.} \citet{yang2024masked} show that fixed-precision transformers with UHA recognize exactly the star-free languages. Similarly, \citet{yao-etal-2021-self} demonstrate that transformers with positional encodings of the form $n/\length$---which are not always representable under fixed precision---can recognize bounded Dyck languages, a family of star-free languages. In contrast, we prove that transformers with soft attention and NoPE cannot recognize even the simplest bounded Dyck languages (\textsc{dyck}-$(1,2)$ and \textsc{dyck}-$(1,1)$), nor the locally testable language \textsc{lt}-2. These results are corroborated by our experiments: the transformer fails to learn any of these three languages. The best performance is on \textsc{dyck}-$(1,1)$, with a maximum accuracy of $87.7\%$, which still indicates poor generalization, especially considering the simplicity of the task.

\paragraph{Unambiguous polynomials.} As predicted by our theory, the transformer fails to learn unambiguous polynomials that are not left-deterministic. The model achieves a maximum accuracy of $90.0\%$ on \textsc{rdp}-1 and $64.8\%$ on \textsc{last}.

\paragraph{Left-deterministic polynomials.} Although the transformer cannot learn \textsc{lt}-2, it achieves perfect accuracy on both \textsc{pt}-2 and \textsc{lt}-1. In contrast to the poor performance on \textsc{rdp}-1 and \textsc{last}, the model learns their symmetrical counterparts (\textsc{ldp}-1, \textsc{ldp}-2, and \textsc{first}) with $100\%$ accuracy. Notably, \citet{chiang-cholak-2022-overcoming} construct unmasked transformer encoders capable of recognizing \textsc{first}, but report difficulty in training such models in practice. Our results show that masked transformer decoders can learn \textsc{first} easily and consistently, suggesting that masking may offer a more robust source of positional information than positional encodings.

\paragraph{Summary} The empirical results align fully with our theoretical predictions. Importantly, we use single-precision (32-bit) floating-point numbers, and the string lengths never exceed the maximum attention span of the transformer. That is, attention can uniformly cover all prior positions without numerical underflow or overflow. Yet, despite these favorable conditions, the transformer exhibits no expressive power beyond what is predicted by our formal characterization.

\subsection{Language modeling}
The DFAs corresponding to the languages used in our experiments are shown in \cref{fig:automata}, and their maximum and mean per-token accuracies are reported in \cref{tab:lm}. As predicted, the transformer language model learns left-deterministic polynomials perfectly but fails on the right-deterministic polynomial.
\begin{table}
    \centering
    \caption{Language modeling experiments. Maximum and mean per-token accuracies ($\pm$ standard deviation) are reported. Exact $100.0\%$ are highlighted in bold.}
    \label{tab:lm}
    \begin{tabular}{lcc}
        \toprule
        Language & Max (\%) & Mean (\%) \\
        \midrule
        \textsc{rdp}-1 & 98.3 & $73.6 \pm 12.5$ \\
        \textsc{ldp}-1 & \textbf{100.0} & $98.6 \pm 2.5$ \\
        \textsc{ldp}-2 & \textbf{100.0} & $98.9 \pm 3.7$ \\
        \bottomrule
    \end{tabular}
\end{table}

\section{Additional results}
\label{app:addition}
In this section, we present further theoretical results of potential interest.
\subsection{Hard attention}
\label{app:hard}
Average hard attention (AHA) uniformly attends to positions $m<n$ with the maximum score $\score_{n,m}$. Formally, \cref{eq:softmax_SA} is modified as follows:
\begin{equation}
\label{eq:score_average}
\aalpha_{n,m} \defeq \frac{\ind{\score_{n,m}=\max_{i<n}\score_{n,i}}}{\sum_{j<n} \ind{\score_{n,j}=\max_{i<n}\score_{n,i}}},
\end{equation}
where $\ind{\cdot}$ is the indicator function.

We now show that AHA is also definable in $\pfo$.
\begin{theorem}
\label{thm:aha2pfo}
    Every transformer with AHA can be simulated by $\pfo$.
\end{theorem}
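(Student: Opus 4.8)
The plan is to follow the proof of \cref{thm:transformer2pfo} essentially line by line, since an AHA transformer differs from a soft-attention transformer only in how the attention weights are formed---\cref{eq:score_average} in place of \cref{eq:softmax_SA}. First I would note that every non-attention ingredient is handled exactly as before: the embedding layer is simulable by $\pfo$ by \cref{lemma:embedding2pfo}, and the query/key/value projections, the scaled dot-product scores \cref{eq:score}, the feedforward sublayers, layer normalization, and the classification head are each fixed maps on finitely many fixed-precision inputs, hence simulable by \cref{prop:ff}. So the whole burden is to show that, from two-variable $\pfo$ formulas simulating the scores $\score_{n,m}$, one can build $\pfo$ formulas simulating the AHA weights $\aalpha_{n,m}$ and then the attention output of \cref{eq:attention}.

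For the weights I would go through four small steps. (i) The per-position maximum $\max_{i<n}\score_{n,i}$ is simulable by a one-variable formula in $n$: for each value $\fpn\in\fpnset$, ``$\max_{i<n}\score_{n,i}=\fpn$'' unfolds to ``some $i<n$ achieves $\fpn$'' conjoined with ``no $i<n$ exceeds $\fpn$'', the latter a finite conjunction over the values above $\fpn$, and both use only the bounded quantifier $\exists y<x$ permitted in $\pfo$. (ii) Broadcasting this maximum over the second index and combining it pointwise with the score formulas, the indicator $\ind{\score_{n,m}=\max_{i<n}\score_{n,i}}$ is a two-variable function of simulable quantities, hence simulable by \cref{prop:ff}. (iii) The normalizer $\sum_{j<n}\ind{\score_{n,j}=\max_{i<n}\score_{n,i}}$ is a sum of non-negative $0/1$ values---equivalently a bounded-threshold count---so it is simulable by \cref{lemma:sum_pos} (or directly by \cref{prop:count}). (iv) $\aalpha_{n,m}$ is then obtained by a pointwise division, again an instance of \cref{prop:ff}. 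For the output, I would prove the AHA analogue of \cref{prop:softmax_finite}: under fixed precision only a bounded number of $m<n$ can satisfy $\aalpha_{n,m}\neq 0$, since if $k$ positions tie for the maximal score then every nonzero weight equals the rounded reciprocal of $k$, which is a nonzero representable value only while $k$ stays below a fixed threshold determined by $\min(\fpnset_{>0})$ and is $0$ once $k$ exceeds it. With this bounded attention span in hand, each column of \cref{eq:attention} is a sum of boundedly many nonzero terms and is simulable by $\pfo$ via \cref{lemma:sum_bounded}, exactly as in the soft-attention argument; a routine induction over the $\layernumber$ layers then yields \cref{thm:aha2pfo}.

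The step I expect to be the main obstacle is the AHA version of \cref{prop:softmax_finite}. In the soft-attention proof the bounded span is an immediate consequence of exponential underflow in the $\softmax$ denominator; for AHA there is no exponential, so one instead has to argue carefully about the fixed-precision rounding of the reciprocal of the exact (possibly saturated) tie count, showing it collapses to zero once too many positions tie. Once that fact is available, the reduction of \cref{eq:attention} to a bounded sum handled by \cref{lemma:sum_bounded}, together with the layer-by-layer induction, is identical to the soft-attention case and introduces nothing new.
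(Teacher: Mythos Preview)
Your proposal is correct and follows essentially the same route as the paper: simulate the argmax indicator in $\pfo$, handle the normalizer via \cref{lemma:sum_pos}, observe that under fixed precision AHA also has a bounded attention span (the analogue of \cref{prop:softmax_finite}), and then apply \cref{lemma:sum_bounded} to the output sum. The only cosmetic difference is that the paper writes the indicator formula directly (illustrated on a toy $\fpnset=\{-1,0,1\}$) rather than first isolating the maximum as a separate one-variable function, but your decomposition into steps (i)--(iv) is equivalent and arguably cleaner.
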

\begin{proof}
    By \cref{prop:max}, $\pfo$ can identify the maximum score $\max_{i < n} \score_{n,i}$.
    The denominator in \cref{eq:score_average} is a sum of non-negative terms and can therefore be simulated by $\pfo$ via \cref{lemma:sum_pos}.
    As with the soft-attention case in \cref{eq:softmax_SA}, the resulting attention weights may vanish when the number of positions exceeds the model's bounded attention span; hence, \cref{lemma:sum_bounded} applies here as well.
    The remaining steps of the construction follow identically to those in \cref{app:transformer2pfo}.
\end{proof}

The other direction is straightforward.
\begin{theorem}
\label{thm:ptl2aha}
    Every $\ptl$ formula can be simulated by a transformer with AHA.
\end{theorem}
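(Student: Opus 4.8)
The plan is to transcribe the proof of \cref{thm:ptl2transformer} almost verbatim, because the attention mechanism enters the $\ptl$-to-transformer simulation in exactly one place: the inductive step for the temporal operator $\past$. I would proceed by structural induction on the formula $\tlf$. The base case, atomic formulas $\atom_\syma$, is handled precisely as in \cref{lemma:atomic2transformer}, which uses only the embedding layer and never touches attention. The inductive step for Boolean connectives is handled precisely as in \cref{lemma:boolean2transformer}, which uses only feedforward sublayers and bypasses the attention sublayer by setting $\W[Q],\W[K],\W[V]$ to zero; this construction is insensitive to which attention variant is in use. Layer normalization is neutralized exactly as in \cref{prop:ln}, and the classification head simply copies the dimension encoding the top-level formula.

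The only step requiring attention-specific reasoning is the simulation of $\past\tlf$, and here I would reuse the query/key/value construction of \cref{lemma:past2transformer} unchanged: position $n$ receives compatibility score $\score_{n,m}=0$ at exactly the prior positions $m<n$ where $\tlf$ holds and $\score_{n,m}=-\infty$ elsewhere, while the value projection copies that dimension into an unused slot. The key observation is that \emph{average hard attention applied to these scores yields the same weights as soft attention did}: when at least one prior position satisfies $\tlf$, the maximum of $\{\score_{n,m}\}_{m<n}$ equals $0$ and is attained exactly at the $\tlf$-positions, so \cref{eq:score_average} places uniform weight $1/c$ on those $c$ positions and zero elsewhere, identical to the output of \cref{eq:softmax_SA} on the same scores (and when no prior position satisfies $\tlf$, the attended values are all $0$, so the output is $0$, as desired). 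Consequently the same two-layer workaround goes through: the dimension $d_2'$ computes ``$\past\tlf$'' but underflows to $0$ once $c>\nmax$; one then forms $\tlf\land\past\tlf$ in a feedforward sublayer ($d_2$), observes that the positions satisfying it number at most $\nmax$, attends uniformly to them in a second layer to obtain $d_3'$ simulating $\past(\tlf\land\past\tlf)$ without underflow, and takes the disjunction of $d_2'$ and $d_3'$ to obtain a dimension $d_3$ that correctly simulates $\past\tlf$ --- the disjunct $d_2'$ covering precisely the case of exactly one prior $\tlf$-position, where $1/1$ does not underflow.

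The main, and essentially only, obstacle will be to confirm that the maximum-attention-span bound $\nmax$ of \cref{prop:softmax_finite} applies to AHA, since the entire two-layer trick hinges on it. This holds because \cref{eq:score_average} likewise outputs a convex combination whose nonzero weights are all equal to $1/c$, and under fixed precision $1/c$ underflows to $0$ as soon as $c$ exceeds $\lfloor\min(1,\max(\fpnset\setminus\{\infty\}))/\min(\fpnset_{>0})\rfloor$ --- exactly the SA bound, and precisely the fact already invoked in the proof of \cref{thm:aha2pfo}. With that in hand, every remaining detail is a direct copy of the argument in \cref{app:ptl2transformer}, completing the induction.
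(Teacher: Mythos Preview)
Your proposal is correct and follows essentially the same approach as the paper: both observe that the score configuration in \cref{lemma:past2transformer} (scores equal to $0$ or $-\infty$) makes soft attention and average hard attention produce identical weights, so the entire construction---including the two-layer underflow workaround---carries over unchanged. The paper's proof is a one-sentence pointer to this observation, whereas you spell out the details (in particular the verification that the $\nmax$ bound applies to AHA, which the paper records separately in the proof of \cref{thm:aha2pfo}).
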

\begin{proof}
    Recall that the attention mechanism we constructed in the proof of \cref{lemma:past2transformer} uniformly attends to all positions with the highest score $\score_{n,m}=0$, effectively making it equivalent to AHA. Therefore, $\ptl$ formulas can also be simulated by transformers with AHA.
\end{proof}

Now, we turn to UHA, another widely studied attention mechanism where each position $n$ attends to a single position $m<n$ with the highest $\score_{n,m}$. In the case of ties, the \emph{rightmost} position is selected. \citet{yang2024masked} prove that transformers with UHA are as expressive as $\ltl$. 

Combining the results from above, we have:
\begin{corollary}
    Transformers with AHA are as expressive as those with soft attention, which are strictly less expressive than those with UHA. 
\end{corollary}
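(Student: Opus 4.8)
The plan is to assemble the pieces already established. First I would note that \cref{thm:transformer2pfo}, \cref{thm:ptl2transformer}, and \cref{thm:pfo2ptl} together pin down the expressive power of transformers with soft attention exactly: every such transformer is captured by $\pfo$, every $\ptl$ formula is captured by such a transformer, and $\pfo$ and $\ptl$ are equivalent, so transformers with SA recognize precisely the $\ptl$-definable languages. Symmetrically, \cref{thm:aha2pfo}, \cref{thm:ptl2aha}, and \cref{thm:pfo2ptl} yield the same characterization for average hard attention. Hence SA and AHA transformers recognize exactly the same class of languages, namely the $\ptl$-definable ones, establishing the first half of the corollary.

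For the comparison with unique hard attention, I would invoke the result of \citet{yang2024masked} that fixed-precision transformers with strict future masking and UHA recognize exactly the $\ltl$-definable languages. Since $\ptl$ is a syntactic fragment of $\ltl$, every $\ptl$-definable language is $\ltl$-definable, so every language recognized by an SA (or AHA) transformer is recognized by a UHA transformer. It then remains to exhibit an $\ltl$-definable language that is not $\ptl$-definable. The separating example is already at hand in the excerpt: the language $\syma\kleene{\alphabet}\symb$ (or, even more simply, \textsc{last} $= \kleene{\alphabet}\symb$) is definable in $\ltl$ — in fact in $\tl$ via the $\future$ operator — but it is right-deterministic and not left-deterministic, hence not definable in $\ptl$ by \cref{thm:ptl}. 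This gives the strict inclusion of the two language classes, and therefore the strict separation of expressive power.

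There is essentially no genuine obstacle here; the only ingredient beyond bookkeeping is the strictness, and that reduces to the well-understood fact that $\ptl$ cannot ``peek into the future'' — concretely, cannot express ``the last symbol is $\symb$'' — whereas $\ltl$, through the $\future$ operator, can. The one point to handle carefully is to phrase ``as expressive as'' and ``strictly less expressive than'' at the level of the classes of recognizable languages over a fixed alphabet, so that the statement is unambiguous; with that convention the corollary follows immediately from the chain $\ptl \subsetneq \ltl$ combined with the characterization theorems for SA, AHA, and UHA transformers.
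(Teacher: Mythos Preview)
Your proposal is correct and takes essentially the same approach as the paper: combine the characterization theorems for SA (\cref{thm:transformer2pfo}, \cref{thm:ptl2transformer}) and AHA (\cref{thm:aha2pfo}, \cref{thm:ptl2aha}) via \cref{thm:pfo2ptl} to get equality, then invoke \citet{yang2024masked} and the strict inclusion $\ptl \subsetneq \ltl$ for the separation from UHA. The paper's version is terser and simply cites the strictness of $\ptl \subsetneq \ltl$ as a fact, whereas you spell out a concrete separating language; both are fine.
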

\begin{proof}
The equality AHA $=$ soft attention follows directly from \cref{thm:aha2pfo,thm:ptl2aha}.

The strict inequality soft attention $<$ UHA follows directly from the fact that $\ptl$ is strictly less expressive than $\ltl$.
\end{proof}

\subsection{Non-strict future masking}
\label{app:non_strict}
Most commonly used transformers adopt non-strict future-masked soft attention, where each position is allowed to attend to itself. In this case, \cref{eq:softmax_SA} and \cref{eq:attention} become:
\begin{equation}
\aalpha_{n,m} \defeq \frac{\exp(\score_{n,m})}{\sum_{i\leq n} \exp(\score_{n,i})},
\end{equation}
and 
\begin{equation}
    \attention(\transformer(\eosstr))_{:,n} \defeq \sum_{m\leq n} \aalpha_{n,m}\val[:][m].
\end{equation}
respectively. 

This modification, however, limits the expressiveness of the model.
\begin{restatable}{theorem}{nonstrict}
    \label{thm:nonstrict}
    Transformers with non-strict future masking are strictly less expressive than those with strict future masking.
\end{restatable}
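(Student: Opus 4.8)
The plan is to prove this in two halves: the inclusion (non-strict $\subseteq$ strict) and then the strictness of the inclusion. For the inclusion, I would rerun the transformer-to-$\pfo$ simulation of \cref{thm:transformer2pfo} almost verbatim, the only change being that a non-strict attention sublayer sums over $m\leq n$ rather than $m<n$. Splitting off the diagonal term $\aalpha_{n,n}\val[:][n]$ (which is a function of position $n$'s hidden state alone, hence handled by the finite-enumeration argument of \cref{prop:ff}) leaves a sum over $m<n$ to which \cref{prop:softmax_finite}, \cref{lemma:sum_pos} and \cref{lemma:sum_bounded} apply unchanged; the softmax denominator $\sum_{i\leq n}\exp(\score_{n,i})$ is treated the same way. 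The only subtlety is that the threshold-counting quantifiers are now over $y\leq x$, but ``$\exists^{\geq k} y\leq x:\fof(y)$'' is a Boolean combination of the quantifiers ``$\exists^{\geq j} y<x:\fof(y)$'' of \cref{prop:count} together with $\fof(x)$, so everything stays inside $\pfo$. Hence non-strict masked transformers are simulated by $\pfo$, and by \cref{thm:pfo2ptl} and \cref{thm:ptl2transformer} they are subsumed by strict masked transformers.

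For strictness I would exhibit a left-deterministic polynomial that no non-strict masked transformer recognizes. The conceptual source of the gap is that, under NoPE, a \emph{strict} attention window at position $n$ is empty precisely when no earlier position meets the query criterion (in particular when $n=1$), and this ``no prior witness'' signal is exactly what the $\past$-construction of \cref{lemma:past2transformer} exploits to separate the case ``$\past\tlf$ holds'' from ``$\past\tlf$ fails''. With non-strict masking every position attends to itself, so the window is never empty, and a single non-strict attention sublayer can only realize the reflexive variant $\tlf\mapsto\tlf\lor\past\tlf$ of the past operator. The target is therefore to show that the hidden states a non-strict masked transformer can produce are confined to the (strict) subfragment of $\ptl$ generated from atoms, Boolean connectives and this reflexive operator, and then to point to a $\ptl$-definable language outside that subfragment. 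The witness has to be chosen with care: ``the first symbol is $\symb$'', i.e.\ $\symb\kleene{\alphabet}$, is \emph{not} a valid witness because it is already expressible reflexively (as $\past$ applied to $\atom_\symb\land\bigwedge_{\symc\neq\symb}\lnot(\past\atom_\symc)$, all occurrences of $\past$ being reflexive), so the separating language must impose an absolute-position constraint strictly stronger than a first-symbol constraint — a language that pins down, say, the second symbol.

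The main obstacle is the non-recognizability direction, i.e.\ establishing the upper bound that non-strict masked transformers capture only a proper subfragment of $\ptl$. The plain logic translation is not enough: it only yields $\subseteq\pfo$, since ``$\exists^{\geq k}y\leq x$'' can be re-expressed using strict $<$. The argument will instead have to be a direct fooling argument on the architecture. The structural fact I would build on is provable by induction over layers: in a non-strict masked transformer, all positions inside a maximal block of identical symbols at the \emph{start} of the input carry identical hidden states at every layer (each such position attends only to earlier positions, all within the block and, inductively, in the same state). The delicate part is controlling the $\eos$ ``sink'' position: at each layer it performs an essentially unrestricted aggregation over all positions, and a naïve choice of witness fails because the sink can detect the differing ``shapes'' of two candidate fooling strings. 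Making the separation go through requires choosing the witness language so that the positional information discarded by the block collapse genuinely cannot be recovered by any content-based aggregation at the sink — and verifying this, presumably via a careful Ehrenfeucht–Fraïssé-style or indistinguishability argument over all non-strict masked transformers of a given size and precision, is where the real work of the proof lies.
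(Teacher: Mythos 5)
Your first half is fine and is exactly the paper's route: non-strict attention is absorbed into the $\pfo$ simulation of \cref{thm:transformer2pfo} by extending the threshold-counting quantifiers of \cref{prop:count} to the reflexive form $\exists^{\geq k} y\leq x$, which is a Boolean combination of the strict quantifiers and $\fof(x)$; \cref{lemma:sum_pos} and \cref{lemma:sum_bounded} then apply as before. The paper does the same, so the upper bound (non-strict $\subseteq$ strict) is in order.

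The genuine gap is in the strictness half: you assemble the paper's ingredients --- the layerwise collapse of a leading block of identical symbols to identical hidden states, the correct observation that \textsc{first} is still expressible with a reflexive past and hence is a useless witness, and the proposal to use a language that pins down the second symbol --- but you never actually prove non-recognizability of any witness; you explicitly defer the decisive indistinguishability argument about the $\eos$ aggregation as ``where the real work of the proof lies,'' so the separation is asserted, not established. The paper closes precisely this step with the witness $\symb\symb\kleene{\alphabet}$: it gives a $\pfo$ sentence defining it, proves by induction over layers that for any input whose first two symbols are both $\symb$ the representations at positions $1$ and $2$ coincide at every layer (position $2$'s non-strict window contains only copies of one identical state, and a softmax average of identical values is multiplicity-invariant), and then argues that recognizing $\symb\symb\kleene{\alphabet}$ requires distinguishing the two leading $\symb$'s, since the minimal DFA changes state after the first one. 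Your worry about the $\eos$ ``sink'' recovering positional information through content-based aggregation is a legitimate subtlety (the paper's handling of it is brief), but flagging the difficulty is not the same as resolving it: as written, your proposal proves the easy inclusion and sketches, without completing, the lower bound that carries the theorem.
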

\begin{proof}
    Since we have already established that transformers with strict masking are as expressive as $\pfo$, it suffices to show that any transformer with non-strict masking is strictly less expressive than $\pfo$.
    
    We begin by observing that transformers with non-strict masking can be simulated by $\pfo$. The required modification to \cref{prop:count} involves including the current position in the threshold counting quantifiers:
    \begin{subequations}
        \begin{align}
            \exists^{\geq 1} y\leq x: \fof(y) &\defeq \left(\exists^{\geq 1} y<x: \fof(y)\right) \lor \fof(x), \\
            \exists^{\geq 2} y\leq x: \fof(y) &\defeq \left(\exists^{\geq 2} y<x: \fof(y)\right) \lor \left(\left(\exists^{\geq 1} y<x: \fof(y)\right) \land \fof(x)\right), \\
            &\;\;\vdots \nonumber
        \end{align}
    \end{subequations}

    Next, we construct a language that is definable in $\pfo$ but cannot be recognized by a transformer with non-strict masking. Consider the language $\symb\symb\kleene{\alphabet}$, which can be defined in $\pfo$ by the following sentence:
    \begin{equation}
        \exists x: (\atom_\symb(x) \land \left(\exists^{=1} y<x:\atom_\symb(y)\right) \land \neg \exists y<x:\neg\atom_\symb(y)).
    \end{equation}
    
    On the other hand, given any transformer $\transformer$ with non-strict masking, and for any string $\str \in \symb\symb\kleene{\alphabet}$, we have:
    \begin{equation}
        \transformer[\ell][:][2][\eosstr]=\transformer[\ell][:][1][\eosstr] \quad \text{for all layers } \ell\in\{1,\ldots,\layernumber\}
    \end{equation}
    This is because, under non-strict masking, position $n$ attends to all positions $\leq n$, including itself. When the prefix consists entirely of identical symbols, each attention pattern and subsequent computation depend solely on the identical sequence of embeddings, resulting in identical representations at each position.
    However, differentiating the two leading $\symb$s is essential for recognizing $\symb\symb\kleene{\alphabet}$, as the corresponding DFA will enter a different state upon reading the first symbol $\symb$.
\end{proof}

Empirically, we confirm this limitation by training transformers---one with strict masking and one with non-strict masking---on the language $\symb\symb\kleene{\alphabet}$. The results are consistent with our theoretical prediction:
\begin{itemize}
    \item The strictly masked transformer achieves $100.0\%$ maximum accuracy and $96.3\%\pm5.8\%$ average accuracy.
    \item The non-strictly masked variant reaches only $95.8\%$ maximum accuracy and $74.4\%\pm8.2\%$ average accuracy.
\end{itemize}

\subsection{Positional encodings}
\label{app:pe}
Positional encodings are introduced in \citet{NIPS2017_3f5ee243} to inject information about the positions into the transformer. In general, there are two categories of positional encodings:
\begin{itemize}
\item \textbf{Absolute positional encoding:} This is a function $\pe\colon\{1,\ldots,\length+1\}\rightarrow \fpnset^\dimension$. It is typically injected into the input layer by modifying \cref{eq:input} as follows:
\begin{equation}
    \embedding(\eosstr)_{:,n} \defeq \ve(\sym_n) + \pe(n),  \quad n \in\{1,\ldots,\length+1\}.
\end{equation}
\item \textbf{Relative positional encoding:} This is a function $\pe \colon \{1,\ldots,\length+1\}\times\{1,\ldots,\length+1\}\rightarrow \fpnset^\dimension$. It is typically injected into the attention sublayer by modifying \cref{eq:score} as follows \citep{shaw-etal-2018-self, huang2018music}:
\begin{equation}
    \score_{n,m} \defeq \frac{\query[:][n]\bigcdot \key[:][m] + \query[:][n]\bigcdot\pe(n,m)}{\sqrt{\dimension}}.
\end{equation}
\end{itemize}

In formal logic, a numerical predicate is a predicate that depends solely on the positions in a string, not the symbols in those positions. Numerical predicates that depend on one (resp. two) position(s) are referred to as unary (resp. binary) numerical predicates. For instance, the relation $<$ is a binary numerical predicate.

We now show that all absolute (resp. relative) positional encodings can be simulated by unary (resp. binary) numerical predicates.
\begin{restatable}{theorem}{dispe}
Let $\pe$ be an absolute (resp. relative) positional encoding and $\lang\subseteq\kleene{\alphabet}$ be a regular language. There exists a collection of unary (resp. binary) numerical predicates $\mathcal{P}$ such that the following assertions are equivalent:
\begin{enumerate}
    \item $\lang$ can be recognized by a transformer with positional encoding $\pe$.
    \item $\lang$ is definable by $\pfo[\mathcal{P}]$, i.e., $\pfo$ extended with the numerical predicates in $\mathcal{P}$.
\end{enumerate}
\end{restatable}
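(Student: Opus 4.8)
The plan is to prove the two implications separately, reusing almost all of the machinery already developed for \cref{thm:transformer2pfo} and \cref{thm:ptl2transformer}, after fixing an explicit choice of $\mathcal{P}$. For an absolute encoding $\pe$, let $\mathcal{P}=\mathcal{P}(\pe)$ consist of one unary numerical predicate $P_v$ for each value $v$ occurring in the image of $\pe$, interpreted by $P_v(n)$ iff $\pe(n)=v$; for a relative encoding, let $\mathcal{P}$ consist of one binary numerical predicate $P_v(x,y)$ for each value $v$ in the image, interpreted by $P_v(n,m)$ iff $\pe(n,m)=v$. Since $\fpnset$ is finite (\cref{assumption:transformer}), so is $\fpnset^{\dimension}$, so $\mathcal{P}$ is finite, and these predicates depend only on positions, so they are genuine numerical predicates.

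\textbf{From transformers to $\pfo[\mathcal{P}]$.} I would re-run the proof of \cref{thm:transformer2pfo} essentially verbatim. For the absolute case the only component that changes is the input layer, whose entry at dimension $d$, position $n$, is $\ve(\sym_n)_d+\pe(n)_d$; this equals a fixed value $\fpn$ precisely when $\bigvee_{(\syma,v):\,\ve(\syma)_d+v_d=\fpn}\bigl(\atom_\syma(x)\land P_v(x)\bigr)$, a $\pfo[\mathcal{P}]$ formula, so the analogue of \cref{lemma:embedding2pfo} holds and every subsequent component is handled exactly as before via \cref{prop:ff}, \cref{lemma:sum_pos}, \cref{prop:softmax_finite}, and \cref{lemma:sum_bounded}. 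For the relative case the embedding is unchanged, but the score gains the term $\query[:][n]\cdot\pe(n,m)$; since $\query[:][n]$ is a fixed-precision vector determined by earlier layers (hence inductively simulable as a one-variable formula) and $\pe(n,m)=v$ is exactly $P_v(x,y)\in\mathcal{P}$, this extra term is a bounded sum of products of fixed-precision values, so it is simulable by \cref{prop:ff}, and the remaining argument of \cref{thm:transformer2pfo} carries over.

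\textbf{From $\pfo[\mathcal{P}]$ to transformers.} For the absolute case I would first observe that \cref{lemma:ptl2pfo} and \cref{lemma:pfo2ptl}, and hence \cref{thm:pfo2ptl}, extend verbatim to $\pfo[\mathcal{P}]\equiv\ptl[\mathcal{P}]$, where $\ptl[\mathcal{P}]$ is $\ptl$ with an extra atomic formula $\atom_{P_v}$ for each $P_v$ (a unary numerical predicate is just extra per-position data and fits naturally as a $\ptl$ atom). Then I would extend \cref{lemma:atomic2transformer}: reserve a separate coordinate block carrying $\pe(n)$ unchanged through the input layer, and use a feedforward sublayer to compute the indicator of $\pe(n)=v$ — a function of the finitely-many-valued vector $\pe(n)$, hence expressible with a $\ReLU$ network — storing it in the dimension designated for $\atom_{P_v}$. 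The Boolean and $\past$ cases (\cref{lemma:boolean2transformer}, \cref{lemma:past2transformer}) are untouched, so structural induction finishes the absolute case. For the relative case, $\pfo[\mathcal{P}]$ has genuine two-variable content and cannot be routed through single-position $\ptl$; instead I would adapt the transformer-simulation machinery directly, putting each formula into a normal form in which every binary predicate $P_v(x,y)$ occurs only immediately inside a bounded quantifier $\exists y<x$ (or $\exists x<y$), and simulating each such quantifier by one attention sublayer — the query–key dot product enforcing the non-positional part of the quantified formula exactly as in \cref{lemma:past2transformer}, and the relative-PE contribution $\query[:][n]\cdot\pe(n,m)$ configured (using that $\fpnset$ is finite and contains $\pm\infty$, and, if a single linear functional does not suffice, across a bounded number of heads or stages) to drive the score to $-\infty$ exactly when $\pe(n,m)\neq v$. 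The bounded-attention-span obstruction is removed by the same two-layer counting trick as in \cref{lemma:past2transformer}, and the remaining single-variable subformulas are handled by the induction hypothesis together with \cref{thm:ptl2transformer}.

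\textbf{Main obstacle.} The difficulty is concentrated in the relative/binary case of the backward direction. Because two-variable formulas do not live in $\ptl$, the clean "one hidden dimension per subformula" reduction used for \cref{thm:ptl2transformer} is unavailable; the real work is (i) the normal-form argument that confines each binary numerical predicate to a single bounded quantifier, and (ii) showing that the \emph{fixed} additive term $\query[:][n]\cdot\pe(n,m)$ can, with suitably chosen query vectors, act as a hard mask that selects exactly the pairs with $\pe(n,m)=v$ — which is precisely where finiteness of $\fpnset$ and the presence of $\pm\infty$ are essential. The absolute case and both forward directions are routine given the existing lemmas.
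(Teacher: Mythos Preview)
Your forward direction (transformer with $\pe$ $\Rightarrow$ $\pfo[\mathcal{P}]$) is correct and is exactly what the paper does: the paper's proof merely defines $\mathcal{P}$ as one predicate per dimension $d$ and floating-point value $\fpn$ (with $r(n)$ true iff $\pe(n)_d=\fpn$, resp.\ $r(n,m)$ true iff $\pe(n,m)_d=\fpn$), relying implicitly on the machinery of \cref{thm:transformer2pfo} to finish. Your per-vector predicates $P_v$ are interdefinable with the paper's per-coordinate predicates, and your more explicit walkthrough of how \cref{lemma:embedding2pfo}, \cref{prop:ff}, \cref{lemma:sum_pos}, \cref{lemma:sum_bounded} extend is sound.

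You should know, however, that the paper's own proof stops there. Immediately after the proof, the paper writes that ``the reverse direction \ldots\ [is] left for future work.'' So the paper does not actually establish the equivalence it states; only the implication $(1)\Rightarrow(2)$ is argued.

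Your attempt at $(2)\Rightarrow(1)$ therefore goes beyond the paper, and it has real gaps. In the absolute case, you tacitly assume you can enlarge the hidden dimension at will (``reserve a separate coordinate block''), but $\pe$ is a fixed function into $\fpnset^\dimension$, so $\dimension$ is pinned down by $\pe$; the construction behind \cref{thm:ptl2transformer} consumes one fresh dimension per subformula (plus the mirrors from \cref{prop:ln}), so for sufficiently complex formulas you run out of room. In the relative case both of your flagged steps are genuinely open: the normal-form claim that every binary predicate can be pushed directly under a bounded quantifier needs an argument (it may sit inside an arbitrary Boolean combination in two free variables), and whether the fixed additive term $\query[:][n]\cdot\pe(n,m)$ can be configured to hard-mask exactly the pairs with $\pe(n,m)=v$ depends on the particular $\pe$, which you do not control. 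These are precisely the obstacles the paper declines to address.
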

\begin{proof}
Positional encodings, under fixed precision, have a finite image. Therefore, for every dimension $d \in {1, \ldots, \dimension}$ and every floating-point number $\fpn \in \fpnset$, we can define a numerical predicate $r$ such that:
\begin{equation}
    r(n)=\true \quad \text{ if } \quad \pe(n)_d = \fpn,
\end{equation}
or
\begin{equation}
    r(n,m)=\true \quad \text{ if } \quad \pe(n,m)_d = \fpn.
\end{equation}
\end{proof}

The reverse direction and a precise logical characterization of commonly used positional encodings, e.g., sinusoidal and rotary \citep{SU2024127063}, are left for future work.

\section{Related work}
The expressivity of transformers in the context of formal methods has been extensively studied in recent literature \citep{10.1162/tacl_a_00663}. Various transformer variants have been explored, and different assumptions have been made to enhance the expressivity of transformers.

\paragraph{Chain of thought (CoT).} CoT reasoning, which involves generating intermediate steps before arriving at a final answer, has become a popular approach. \citet{pérez2018on} demonstrated that a transformer with both an encoder and a decoder, when allowed a polynomial number of intermediate computation steps, is Turing complete. Similarly, \citet{merrill2024the, nowak-etal-2024-representational, li2024chain} show that the expressivity of transformers can be improved with various forms of CoT reasoning. In this work, along with many others, we do not allow intermediate steps, which restricts expressivity. In such a case, \citet{hao-etal-2022-formal, merrill-etal-2022-saturated, merrill-sabharwal-2023-parallelism, pmlr-v202-chiang23a, yang2024countingliketransformerscompiling} have established various upper bounds on expressivity.

\paragraph{Non-fixed precision.} If a transformer is allowed arbitrary precision, or if the precision increases with input length, \citet{bhattamishra-etal-2020-ability} proved that it is more expressive than simplified and stateless counter machines. Additionally, \citet{pmlr-v202-chiang23a} and \citet{yang2024countingliketransformerscompiling} showed that a transformer with such precision can recognize any language definable by temporal counting logic. In contrast, our work focuses on the scenario where precision is fixed, which imposes more limitations on expressivity.

\paragraph{Bespoke positional encodings.} Some prior studies have employed bespoke positional encodings, many of which cannot be represented under fixed precision, to overcome certain limitations of transformers. For example, \citet{chiang-cholak-2022-overcoming} used the encoding $n/\length$ to enable transformers to recognize parity, and \citet{barcelo2024logical} demonstrated that transformers with a specially designed positional encoding can achieve a lower bound of $\fo$ with unary numerical predicates. In contrast, our constructions and proofs do not rely on any form of positional encoding.

\paragraph{Hard attention.} Two forms of hard attention have been explored in the literature. \citet{yang2024masked} showed that masked fixed-precision transformers with UHA and NoPE recognize exactly the star-free languages. \citet{barcelo2024logical} established a similar lower bound for transformers with AHA and certain positional encodings. Prior to our work, it was unclear, under fixed precision, whether these two hard attention mechanisms were more or less expressive than, or comparable to, the standard soft attention. Surprisingly, we find that UHA is strictly more expressive than soft attention, while AHA is as expressive as soft attention.\looseness=-1

\section{Limitations}
\label{sec:limit}
The primary limitation of this work lies in the omission of positional encodings. While we briefly discuss their role as numerical predicates, the exact numerical predicates simulated by commonly used positional encodings remain unknown. We hope to explore this in future work. Nevertheless, we believe it is important to first understand the expressivity of a barebones transformer architecture, as this forms the foundation for systematically incorporating various forms of positional encoding later on.

Another limitation---particularly from an empirical perspective---is our stringent evaluation criteria for transformer performance. While certain levels of accuracy might be considered successful in empirical studies, we require perfect accuracy. This is motivated by a formal perspective: a model (e.g., an automaton or logical formula) either recognizes a language or it does not; anything short of perfection is regarded as failure, suggesting that some form of approximation or shortcut has been employed. We interpret our results as identifying a class of tasks that transformers have the full capacity to solve.

\newpage
\section*{NeurIPS Paper Checklist}

The checklist is designed to encourage best practices for responsible machine learning research, addressing issues of reproducibility, transparency, research ethics, and societal impact. Do not remove the checklist: {\bf The papers not including the checklist will be desk rejected.} The checklist should follow the references and follow the (optional) supplemental material.  The checklist does NOT count towards the page
limit. 

Please read the checklist guidelines carefully for information on how to answer these questions. For each question in the checklist:
\begin{itemize}
    \item You should answer \answerYes{}, \answerNo{}, or \answerNA{}.
    \item \answerNA{} means either that the question is Not Applicable for that particular paper or the relevant information is Not Available.
    \item Please provide a short (1–2 sentence) justification right after your answer (even for NA). 
\end{itemize}

{\bf The checklist answers are an integral part of your paper submission.} They are visible to the reviewers, area chairs, senior area chairs, and ethics reviewers. You will be asked to also include it (after eventual revisions) with the final version of your paper, and its final version will be published with the paper.

The reviewers of your paper will be asked to use the checklist as one of the factors in their evaluation. While ``\answerYes{}'' is generally preferable to ``\answerNo{}'', it is perfectly acceptable to answer ``\answerNo{}'' provided a proper justification is given, e.g., ``error bars are not reported because it would be too'' or ``we were unable to find the license for the dataset we used''). In general, answering ``\answerNo{}'' or ``\answerNA{}'' is not grounds for rejection. While the questions are phrased in a binary way, we acknowledge that the true answer is often more nuanced, so please just use your best judgment and write a justification to elaborate. All supporting evidence can appear either in the main paper or the supplemental material, provided in appendix. If you answer \answerYes{} to a question, in the justification please point to the section(s) where related material for the question can be found.

IMPORTANT, please:
\begin{itemize}
    \item {\bf Delete this instruction block, but keep the section heading ``NeurIPS Paper Checklist''},
    \item  {\bf Keep the checklist subsection headings, questions/answers and guidelines below.}
    \item {\bf Do not modify the questions and only use the provided macros for your answers}.
\end{itemize}

\begin{enumerate}

\item {\bf Claims}
    \item[] Question: Do the main claims made in the abstract and introduction accurately reflect the paper's contributions and scope?
    \item[] Answer: \answerYes{} %
    \item[] Justification: Claims made in abstract and \cref{sec:intro} accurately reflect the paper's contributions and scope. 
    \item[] Guidelines:
    \begin{itemize}
        \item The answer NA means that the abstract and introduction do not include the claims made in the paper.
        \item The abstract and/or introduction should clearly state the claims made, including the contributions made in the paper and important assumptions and limitations. A No or NA answer to this question will not be perceived well by the reviewers. 
        \item The claims made should match theoretical and experimental results, and reflect how much the results can be expected to generalize to other settings. 
        \item It is fine to include aspirational goals as motivation as long as it is clear that these goals are not attained by the paper. 
    \end{itemize}

\item {\bf Limitations}
    \item[] Question: Does the paper discuss the limitations of the work performed by the authors?
    \item[] Answer: \answerYes{} %
    \item[] Justification: \cref{sec:limit}.
    \item[] Guidelines:
    \begin{itemize}
        \item The answer NA means that the paper has no limitation while the answer No means that the paper has limitations, but those are not discussed in the paper. 
        \item The authors are encouraged to create a separate ``Limitations'' section in their paper.
        \item The paper should point out any strong assumptions and how robust the results are to violations of these assumptions (e.g., independence assumptions, noiseless settings, model well-specification, asymptotic approximations only holding locally). The authors should reflect on how these assumptions might be violated in practice and what the implications would be.
        \item The authors should reflect on the scope of the claims made, e.g., if the approach was only tested on a few datasets or with a few runs. In general, empirical results often depend on implicit assumptions, which should be articulated.
        \item The authors should reflect on the factors that influence the performance of the approach. For example, a facial recognition algorithm may perform poorly when image resolution is low or images are taken in low lighting. Or a speech-to-text system might not be used reliably to provide closed captions for online lectures because it fails to handle technical jargon.
        \item The authors should discuss the computational efficiency of the proposed algorithms and how they scale with dataset size.
        \item If applicable, the authors should discuss possible limitations of their approach to address problems of privacy and fairness.
        \item While the authors might fear that complete honesty about limitations might be used by reviewers as grounds for rejection, a worse outcome might be that reviewers discover limitations that aren't acknowledged in the paper. The authors should use their best judgment and recognize that individual actions in favor of transparency play an important role in developing norms that preserve the integrity of the community. Reviewers will be specifically instructed to not penalize honesty concerning limitations.
    \end{itemize}

\item {\bf Theory assumptions and proofs}
    \item[] Question: For each theoretical result, does the paper provide the full set of assumptions and a complete (and correct) proof?
    \item[] Answer: \answerYes{} %
    \item[] Justification: Assumptions: \cref{sec:transformer,sec:tlm,app:transformer}, proofs: \cref{app:transformer2pfo,app:ptl2transformer,app:characterizations,app:tlm,app:addition}.
    \item[] Guidelines:
    \begin{itemize}
        \item The answer NA means that the paper does not include theoretical results. 
        \item All the theorems, formulas, and proofs in the paper should be numbered and cross-referenced.
        \item All assumptions should be clearly stated or referenced in the statement of any theorems.
        \item The proofs can either appear in the main paper or the supplemental material, but if they appear in the supplemental material, the authors are encouraged to provide a short proof sketch to provide intuition. 
        \item Inversely, any informal proof provided in the core of the paper should be complemented by formal proofs provided in appendix or supplemental material.
        \item Theorems and Lemmas that the proof relies upon should be properly referenced. 
    \end{itemize}

    \item {\bf Experimental result reproducibility}
    \item[] Question: Does the paper fully disclose all the information needed to reproduce the main experimental results of the paper to the extent that it affects the main claims and/or conclusions of the paper (regardless of whether the code and data are provided or not)?
    \item[] Answer: \answerYes{} %
    \item[] Justification: A description of the experimental setup and data generation is provided in \cref{sec:experiments,app:experimental_setup,app:tasks}.
    \item[] Guidelines:
    \begin{itemize}
        \item The answer NA means that the paper does not include experiments.
        \item If the paper includes experiments, a No answer to this question will not be perceived well by the reviewers: Making the paper reproducible is important, regardless of whether the code and data are provided or not.
        \item If the contribution is a dataset and/or model, the authors should describe the steps taken to make their results reproducible or verifiable. 
        \item Depending on the contribution, reproducibility can be accomplished in various ways. For example, if the contribution is a novel architecture, describing the architecture fully might suffice, or if the contribution is a specific model and empirical evaluation, it may be necessary to either make it possible for others to replicate the model with the same dataset, or provide access to the model. In general. releasing code and data is often one good way to accomplish this, but reproducibility can also be provided via detailed instructions for how to replicate the results, access to a hosted model (e.g., in the case of a large language model), releasing of a model checkpoint, or other means that are appropriate to the research performed.
        \item While NeurIPS does not require releasing code, the conference does require all submissions to provide some reasonable avenue for reproducibility, which may depend on the nature of the contribution. For example
        \begin{enumerate}
            \item If the contribution is primarily a new algorithm, the paper should make it clear how to reproduce that algorithm.
            \item If the contribution is primarily a new model architecture, the paper should describe the architecture clearly and fully.
            \item If the contribution is a new model (e.g., a large language model), then there should either be a way to access this model for reproducing the results or a way to reproduce the model (e.g., with an open-source dataset or instructions for how to construct the dataset).
            \item We recognize that reproducibility may be tricky in some cases, in which case authors are welcome to describe the particular way they provide for reproducibility. In the case of closed-source models, it may be that access to the model is limited in some way (e.g., to registered users), but it should be possible for other researchers to have some path to reproducing or verifying the results.
        \end{enumerate}
    \end{itemize}

\item {\bf Open access to data and code}
    \item[] Question: Does the paper provide open access to the data and code, with sufficient instructions to faithfully reproduce the main experimental results, as described in supplemental material?
    \item[] Answer: \answerYes{} %
    \item[] Justification: Code included in the supplementary material.
    \item[] Guidelines:
    \begin{itemize}
        \item The answer NA means that paper does not include experiments requiring code.
        \item Please see the NeurIPS code and data submission guidelines (\url{https://nips.cc/public/guides/CodeSubmissionPolicy}) for more details.
        \item While we encourage the release of code and data, we understand that this might not be possible, so “No” is an acceptable answer. Papers cannot be rejected simply for not including code, unless this is central to the contribution (e.g., for a new open-source benchmark).
        \item The instructions should contain the exact command and environment needed to run to reproduce the results. See the NeurIPS code and data submission guidelines (\url{https://nips.cc/public/guides/CodeSubmissionPolicy}) for more details.
        \item The authors should provide instructions on data access and preparation, including how to access the raw data, preprocessed data, intermediate data, and generated data, etc.
        \item The authors should provide scripts to reproduce all experimental results for the new proposed method and baselines. If only a subset of experiments are reproducible, they should state which ones are omitted from the script and why.
        \item At submission time, to preserve anonymity, the authors should release anonymized versions (if applicable).
        \item Providing as much information as possible in supplemental material (appended to the paper) is recommended, but including URLs to data and code is permitted.
    \end{itemize}

\item {\bf Experimental setting/details}
    \item[] Question: Does the paper specify all the training and test details (e.g., data splits, hyperparameters, how they were chosen, type of optimizer, etc.) necessary to understand the results?
    \item[] Answer: \answerYes{} %
    \item[] Justification: \cref{sec:experiments,app:experiments}
    \item[] Guidelines:
    \begin{itemize}
        \item The answer NA means that the paper does not include experiments.
        \item The experimental setting should be presented in the core of the paper to a level of detail that is necessary to appreciate the results and make sense of them.
        \item The full details can be provided either with the code, in appendix, or as supplemental material.
    \end{itemize}

\item {\bf Experiment statistical significance}
    \item[] Question: Does the paper report error bars suitably and correctly defined or other appropriate information about the statistical significance of the experiments?
    \item[] Answer: \answerYes{} %
    \item[] Justification: Standard deviations are reported in \cref{tab:results,tab:lm,app:non_strict}.
    \item[] Guidelines:
    \begin{itemize}
        \item The answer NA means that the paper does not include experiments.
        \item The authors should answer ``Yes'' if the results are accompanied by error bars, confidence intervals, or statistical significance tests, at least for the experiments that support the main claims of the paper.
        \item The factors of variability that the error bars are capturing should be clearly stated (for example, train/test split, initialization, random drawing of some parameter, or overall run with given experimental conditions).
        \item The method for calculating the error bars should be explained (closed form formula, call to a library function, bootstrap, etc.)
        \item The assumptions made should be given (e.g., Normally distributed errors).
        \item It should be clear whether the error bar is the standard deviation or the standard error of the mean.
        \item It is OK to report 1-sigma error bars, but one should state it. The authors should preferably report a 2-sigma error bar than state that they have a 96\% CI, if the hypothesis of Normality of errors is not verified.
        \item For asymmetric distributions, the authors should be careful not to show in tables or figures symmetric error bars that would yield results that are out of range (e.g. negative error rates).
        \item If error bars are reported in tables or plots, The authors should explain in the text how they were calculated and reference the corresponding figures or tables in the text.
    \end{itemize}

\item {\bf Experiments compute resources}
    \item[] Question: For each experiment, does the paper provide sufficient information on the computer resources (type of compute workers, memory, time of execution) needed to reproduce the experiments?
    \item[] Answer: \answerYes{} %
    \item[] Justification: Computer resources are reported in \cref{app:experimental_setup}.
    \item[] Guidelines:
    \begin{itemize}
        \item The answer NA means that the paper does not include experiments.
        \item The paper should indicate the type of compute workers CPU or GPU, internal cluster, or cloud provider, including relevant memory and storage.
        \item The paper should provide the amount of compute required for each of the individual experimental runs as well as estimate the total compute. 
        \item The paper should disclose whether the full research project required more compute than the experiments reported in the paper (e.g., preliminary or failed experiments that didn't make it into the paper). 
    \end{itemize}
    
\item {\bf Code of ethics}
    \item[] Question: Does the research conducted in the paper conform, in every respect, with the NeurIPS Code of Ethics \url{https://neurips.cc/public/EthicsGuidelines}?
    \item[] Answer: \answerYes{} %
    \item[] Justification: This research conform with the NeurIPS Code of Ethics. 
    \item[] Guidelines:
    \begin{itemize}
        \item The answer NA means that the authors have not reviewed the NeurIPS Code of Ethics.
        \item If the authors answer No, they should explain the special circumstances that require a deviation from the Code of Ethics.
        \item The authors should make sure to preserve anonymity (e.g., if there is a special consideration due to laws or regulations in their jurisdiction).
    \end{itemize}

\item {\bf Broader impacts}
    \item[] Question: Does the paper discuss both potential positive societal impacts and negative societal impacts of the work performed?
    \item[] Answer: \answerNA{} %
    \item[] Justification: We foresee no societal impact of this work.
    \item[] Guidelines:
    \begin{itemize}
        \item The answer NA means that there is no societal impact of the work performed.
        \item If the authors answer NA or No, they should explain why their work has no societal impact or why the paper does not address societal impact.
        \item Examples of negative societal impacts include potential malicious or unintended uses (e.g., disinformation, generating fake profiles, surveillance), fairness considerations (e.g., deployment of technologies that could make decisions that unfairly impact specific groups), privacy considerations, and security considerations.
        \item The conference expects that many papers will be foundational research and not tied to particular applications, let alone deployments. However, if there is a direct path to any negative applications, the authors should point it out. For example, it is legitimate to point out that an improvement in the quality of generative models could be used to generate deepfakes for disinformation. On the other hand, it is not needed to point out that a generic algorithm for optimizing neural networks could enable people to train models that generate Deepfakes faster.
        \item The authors should consider possible harms that could arise when the technology is being used as intended and functioning correctly, harms that could arise when the technology is being used as intended but gives incorrect results, and harms following from (intentional or unintentional) misuse of the technology.
        \item If there are negative societal impacts, the authors could also discuss possible mitigation strategies (e.g., gated release of models, providing defenses in addition to attacks, mechanisms for monitoring misuse, mechanisms to monitor how a system learns from feedback over time, improving the efficiency and accessibility of ML).
    \end{itemize}
    
\item {\bf Safeguards}
    \item[] Question: Does the paper describe safeguards that have been put in place for responsible release of data or models that have a high risk for misuse (e.g., pretrained language models, image generators, or scraped datasets)?
    \item[] Answer: \answerNA{} %
    \item[] Justification: The paper poses no such risks
    \item[] Guidelines:
    \begin{itemize}
        \item The answer NA means that the paper poses no such risks.
        \item Released models that have a high risk for misuse or dual-use should be released with necessary safeguards to allow for controlled use of the model, for example by requiring that users adhere to usage guidelines or restrictions to access the model or implementing safety filters. 
        \item Datasets that have been scraped from the Internet could pose safety risks. The authors should describe how they avoided releasing unsafe images.
        \item We recognize that providing effective safeguards is challenging, and many papers do not require this, but we encourage authors to take this into account and make a best faith effort.
    \end{itemize}

\item {\bf Licenses for existing assets}
    \item[] Question: Are the creators or original owners of assets (e.g., code, data, models), used in the paper, properly credited and are the license and terms of use explicitly mentioned and properly respected?
    \item[] Answer: \answerYes{} %
    \item[] Justification: Our code is adapted from \citet{deletang2023neural}. The authors are properly credited and the license and terms are explicitly mentioned and properly respected in \cref{app:experimental_setup}. 
    \item[] Guidelines:
    \begin{itemize}
        \item The answer NA means that the paper does not use existing assets.
        \item The authors should cite the original paper that produced the code package or dataset.
        \item The authors should state which version of the asset is used and, if possible, include a URL.
        \item The name of the license (e.g., CC-BY 4.0) should be included for each asset.
        \item For scraped data from a particular source (e.g., website), the copyright and terms of service of that source should be provided.
        \item If assets are released, the license, copyright information, and terms of use in the package should be provided. For popular datasets, \url{paperswithcode.com/datasets} has curated licenses for some datasets. Their licensing guide can help determine the license of a dataset.
        \item For existing datasets that are re-packaged, both the original license and the license of the derived asset (if it has changed) should be provided.
        \item If this information is not available online, the authors are encouraged to reach out to the asset's creators.
    \end{itemize}

\item {\bf New assets}
    \item[] Question: Are new assets introduced in the paper well documented and is the documentation provided alongside the assets?
    \item[] Answer: \answerNA{} %
    \item[] Justification: This paper does not introduce new assets.
    \item[] Guidelines:
    \begin{itemize}
        \item The answer NA means that the paper does not release new assets.
        \item Researchers should communicate the details of the dataset/code/model as part of their submissions via structured templates. This includes details about training, license, limitations, etc. 
        \item The paper should discuss whether and how consent was obtained from people whose asset is used.
        \item At submission time, remember to anonymize your assets (if applicable). You can either create an anonymized URL or include an anonymized zip file.
    \end{itemize}

\item {\bf Crowdsourcing and research with human subjects}
    \item[] Question: For crowdsourcing experiments and research with human subjects, does the paper include the full text of instructions given to participants and screenshots, if applicable, as well as details about compensation (if any)? 
    \item[] Answer: \answerNA{} %
    \item[] Justification: This research does not involve crowdsourcing.
    \item[] Guidelines:
    \begin{itemize}
        \item The answer NA means that the paper does not involve crowdsourcing nor research with human subjects.
        \item Including this information in the supplemental material is fine, but if the main contribution of the paper involves human subjects, then as much detail as possible should be included in the main paper. 
        \item According to the NeurIPS Code of Ethics, workers involved in data collection, curation, or other labor should be paid at least the minimum wage in the country of the data collector. 
    \end{itemize}

\item {\bf Institutional review board (IRB) approvals or equivalent for research with human subjects}
    \item[] Question: Does the paper describe potential risks incurred by study participants, whether such risks were disclosed to the subjects, and whether Institutional Review Board (IRB) approvals (or an equivalent approval/review based on the requirements of your country or institution) were obtained?
    \item[] Answer: \answerNA{} %
    \item[] Justification: This research does not involve human subjects.
    \item[] Guidelines:
    \begin{itemize}
        \item The answer NA means that the paper does not involve crowdsourcing nor research with human subjects.
        \item Depending on the country in which research is conducted, IRB approval (or equivalent) may be required for any human subjects research. If you obtained IRB approval, you should clearly state this in the paper. 
        \item We recognize that the procedures for this may vary significantly between institutions and locations, and we expect authors to adhere to the NeurIPS Code of Ethics and the guidelines for their institution. 
        \item For initial submissions, do not include any information that would break anonymity (if applicable), such as the institution conducting the review.
    \end{itemize}

\item {\bf Declaration of LLM usage}
    \item[] Question: Does the paper describe the usage of LLMs if it is an important, original, or non-standard component of the core methods in this research? Note that if the LLM is used only for writing, editing, or formatting purposes and does not impact the core methodology, scientific rigorousness, or originality of the research, declaration is not required.
    \item[] Answer: \answerNA{} %
    \item[] Justification: This research does not involve LLMs.
    \item[] Guidelines:
    \begin{itemize}
        \item The answer NA means that the core method development in this research does not involve LLMs as any important, original, or non-standard components.
        \item Please refer to our LLM policy (\url{https://neurips.cc/Conferences/2025/LLM}) for what should or should not be described.
    \end{itemize}

\end{enumerate}

\end{document}